\def\eqref#1{equation~\ref{#1}}
\def\1{\bm{1}}
\DeclareMathAlphabet{\mathsfit}{\encodingdefault}{\sfdefault}{m}{sl}
\SetMathAlphabet{\mathsfit}{bold}{\encodingdefault}{\sfdefault}{bx}{n}
\newcommand{\E}{\mathbb{E}}
\newcommand{\R}{\mathbb{R}}
\DeclareMathOperator{\Tr}{Tr}
\newcommand{\x}{\mathbf{x}}
\newcommand{\w}{\mathbf{w}}
\newcommand{\G}{\mathbf{G}}
\newcommand{\e}{\mathbf{e}}
\newcommand{\p}{\mathbf{p}}
\definecolor{my_green}{rgb}{0.2, 0.7, 0.2}
\newcommand{\alg}{\textbf{\texttt{HiSo}}}
\title{Converge Faster, Talk Less: Hessian-Informed Federated Zeroth-Order Optimization}
\author{
    Zhe Li$^{1}$, Bicheng Ying$^{2}$, Zidong Liu$^{3}$, Chaosheng Dong$^{4}$, Haibo Yang$^{1}$\\
    $^{1}$Rochester Institute of Technology, Rochester, NY 14623, USA\\
    $^{2}$Google,
    Los Angeles, CA 90034, USA\\
    $^{3}$ComboCurve Inc.,
    Houston, TX 77005, USA\\ $^{4}$Amazon.com,
    Seattle, WA 98109, USA\\
    \texttt{zl4063@rit.edu, ybc@google.com, z.liu@combocurve.com,}\\
    \texttt{chaosd@amazon.com, hbycis@rit.edu}
}
\begin{document}
\maketitle
\newtheorem{lemma}{{Lemma}}
\newtheorem{assumption}{{Assumption}}
\newtheorem{theorem}{{Theorem}}
\newtheorem{corollary}{{Corollary}}
\newtheorem{definition}{{Definition}}
\newtheorem{remark}{{Remark}}

\newcommand{\HRule}{\rule{\linewidth}{0.5mm}}
\def\tran{^{\mathsf{T}}}
\def\invtran{^{\mathsf{-T}}}
\def\one{\mathds{1}}
\def\hr{\noindent \\\hrule}
\def\sgn{{\rm sgn}}

\newcommand{\cmark}{\ding{51}}%
\newcommand{\xmark}{\ding{55}}%

\newcommand{\bp}{ \begin{proof}}
	\newcommand{\ep}{\end{proof} }

\newcommand{\Ex}{\mathbb{E}\hspace{0.05cm}}
\newcommand{\Exf}{\mathbb{E}_{\mathcal{F}_{i}}\hspace{0.05cm}}
\newcommand{\Exfp}{\mathbb{E}_{\mathcal{F}_{i-1}}\hspace{0.05cm}}
\newcommand{\no}{\noindent}
\newcommand{\be}{\begin{equation}}
\newcommand{\ee}{\end{equation}}
\newcommand{\bqq}{\begin{eqnarray}}
\newcommand{\eqq}{\end{eqnarray}}
\newcommand{\bal}{\begin{align}}
\newcommand{\eal}{\end{align}}
\newcommand{\bqn}{\begin{eqnarray*}}
	\newcommand{\eqn}{\end{eqnarray*}}
\newcommand{\nn}{\nonumber}
\newcommand{\ba}{\left[ \begin{array}}
	\newcommand{\ea}{\\ \end{array} \right]}
\newcommand{\qd}{\hfill{$\blacksquare$}}
\newcommand{\define}{\;\stackrel{\Delta}{=}\;}
\newcommand{\aseq}{\;\stackrel{a.s.}{=}\;}

\def\btheta  {{\boldsymbol \theta}}
\def\bxi        {{\boldsymbol \xi}}
\def\bmu        {{\boldsymbol \mu}}
\def\bpsi       {{\boldsymbol \psi}}
\def\bsigma  {{\boldsymbol \sigma}}
\def\bphi  {{\boldsymbol \phi}}
\def\blambda {{\boldsymbol \lambda}}
\def\bTheta  {{\boldsymbol \Theta}}
\def\bSigma  {{\boldsymbol \Sigma}}
\def\bLambda {{\boldsymbol \Lambda}}
\def\bgamma {{\boldsymbol \gamma}}
\def\bepsilon {{\boldsymbol \epsilon}}
\def\Pr {{\mathbb{P}}}
\def\vec {{{\rm vec}}}
\def\tphi   {\widetilde{\boldsymbol \phi}}

\def\A{{\boldsymbol{A}}}
\def\B{{\boldsymbol{B}}}
\def\C{{\boldsymbol{C}}}
\def\D{{\boldsymbol{D}}}
\def\E{{\boldsymbol{E}}}
\def\F{{\boldsymbol{F}}}
\def\G{{\boldsymbol{G}}}
\def\H{{\boldsymbol{H}}}
\def\I{{\boldsymbol{I}}}
\def\J{{\boldsymbol{J}}}
\def\K{{\boldsymbol{K}}}
\def\L{{\boldsymbol{L}}}
\def\M{{\boldsymbol{M}}}
\def\N{{\boldsymbol{N}}}
\def\O{{\boldsymbol{O}}}
\def\P{{\boldsymbol{P}}}
\def\Q{{\boldsymbol{Q}}}
\def\R{{\boldsymbol{R}}}
\def\S{{\boldsymbol{S}}}
\def\T{{\boldsymbol{T}}}
\def\U{{\boldsymbol{U}}}
\def\V{{\boldsymbol{V}}}
\def\W{{\boldsymbol{W}}}
\def\X{{\boldsymbol{X}}}
\def\Y{{\boldsymbol{Y}}}
\def\Z{{\boldsymbol{Z}}}

\def\a{{\boldsymbol{a}}}
\def\b{{\boldsymbol{b}}}
\def\c{{\boldsymbol{c}}}
\def\d{{\boldsymbol{d}}}
\def\e{{\boldsymbol{e}}}
\def\f{{\boldsymbol{f}}}
\def\g{{\boldsymbol{g}}}
\def\h{{\boldsymbol{h}}}
\def\i{{\boldsymbol{i}}}
\def\j{{\boldsymbol{j}}}
\def\k{{\boldsymbol{k}}}
\def\l{{\boldsymbol{l}}}
\def\m{{\boldsymbol{m}}}
\def\n{{\boldsymbol{n}}}
\def\o{{\boldsymbol{o}}}
\def\p{{\boldsymbol{p}}}
\def\q{{\boldsymbol{q}}}
\def\r{{\boldsymbol{r}}}
\def\s{{\boldsymbol{s}}}
\def\t{{\boldsymbol{t}}}
\def\uu{{\boldsymbol{u}}}
\def\v{{\boldsymbol{v}}}
\def\w{{\boldsymbol{w}}}
\def\x{{\boldsymbol{x}}}
\def\y{{\boldsymbol{y}}}
\def\z{{\boldsymbol{z}}}

\def\barx{{\bar{x}}}
\def\bary{{\bar{y}}}

\newcommand{\vA}{{\mathbf{A}}}
\newcommand{\vB}{{\mathbf{B}}}
\newcommand{\vC}{{\mathbf{C}}}
\newcommand{\vD}{{\mathbf{D}}}
\newcommand{\vE}{{\mathbf{E}}}
\newcommand{\vF}{{\mathbf{F}}}
\newcommand{\vG}{{\mathbf{G}}}
\newcommand{\vH}{{\mathbf{H}}}
\newcommand{\vI}{{\mathbf{I}}}
\newcommand{\vJ}{{\mathbf{J}}}
\newcommand{\vK}{{\mathbf{K}}}
\newcommand{\vL}{{\mathbf{L}}}
\newcommand{\vM}{{\mathbf{M}}}
\newcommand{\vN}{{\mathbf{N}}}
\newcommand{\vO}{{\mathbf{O}}}
\newcommand{\vP}{{\mathbf{P}}}
\newcommand{\vQ}{{\mathbf{Q}}}
\newcommand{\vR}{{\mathbf{R}}}
\newcommand{\vS}{{\mathbf{S}}}
\newcommand{\vT}{{\mathbf{T}}}
\newcommand{\vU}{{\mathbf{U}}}
\newcommand{\vV}{{\mathbf{V}}}
\newcommand{\vW}{{\mathbf{W}}}
\newcommand{\vX}{{\mathbf{X}}}
\newcommand{\vY}{{\mathbf{Y}}}
\newcommand{\vZ}{{\mathbf{Z}}}

\newcommand{\cA}{{\mathcal{A}}}
\newcommand{\cB}{{\mathcal{B}}}
\newcommand{\cC}{{\mathcal{C}}}
\newcommand{\cD}{{\mathcal{D}}}
\newcommand{\cE}{{\mathcal{E}}}
\newcommand{\cF}{{\mathcal{F}}}
\newcommand{\cG}{{\mathcal{G}}}
\newcommand{\cH}{{\mathcal{H}}}
\newcommand{\cI}{{\mathcal{I}}}
\newcommand{\cJ}{{\mathcal{J}}}
\newcommand{\cK}{{\mathcal{K}}}
\newcommand{\cL}{{\mathcal{L}}}
\newcommand{\cM}{{\mathcal{M}}}
\newcommand{\cN}{{\mathcal{N}}}
\newcommand{\cO}{{\mathcal{O}}}
\newcommand{\cP}{{\mathcal{P}}}
\newcommand{\cQ}{{\mathcal{Q}}}
\newcommand{\cR}{{\mathcal{R}}}
\newcommand{\cS}{{\mathcal{S}}}
\newcommand{\cT}{{\mathcal{T}}}
\newcommand{\cU}{{\mathcal{U}}}
\newcommand{\cV}{{\mathcal{V}}}
\newcommand{\cW}{{\mathcal{W}}}
\newcommand{\cX}{{\mathcal{X}}}
\newcommand{\cY}{{\mathcal{Y}}}
\newcommand{\cZ}{{\mathcal{Z}}}

\newcommand{\scp}{\boldsymbol{\scriptstyle{\mathcal{P}}}}
\newcommand{\sr}{{\scriptstyle{\mathcal{R}}}}
\newcommand{\su}{\boldsymbol{\scriptstyle{\mathcal{U}}}}
\newcommand{\sv}{\boldsymbol{\scriptstyle{\mathcal{V}}}}
\newcommand{\sw}{\boldsymbol{\scriptstyle{\mathcal{W}}}}
\newcommand{\sx}{\boldsymbol{\scriptstyle{\mathcal{X}}}}
\newcommand{\sz}{\boldsymbol{\scriptstyle{\mathcal{Z}}}}
\newcommand{\sxb}{\boldsymbol{\scriptstyle{\mathcal{X}}}}
\newcommand{\tw}{\widetilde{\boldsymbol{w}}}

\newcommand{\swb}{{\boldsymbol{\scriptstyle{\mathcal{W}}}}}
\newcommand{\barA}{\overline{A}}
\newcommand{\grad}{{\nabla}}
\newcommand{\mb}{{\mathrm{mb}}}
\def\bE{\mathbb{E}}
\def\filt{\boldsymbol{\mathcal{F}}}
\def\ntran{^{-\mathsf{T}}}

\newcommand{\eq}[1]{\begin{align}#1\end{align}}
\newcommand{\beqn}{\begin{eqnarray}}
\newcommand{\eeqn}{\end{eqnarray}}
\newcommand{\defeq}{{\stackrel{\triangle}{=}}}
\newcommand{\nnb}{\nonumber \\}
\newcommand{\dom}{{\mathrm{dom}}} 
\newcommand{\RR}{\mathbb{R}}
\newcommand{\reals}{\mathbb{R}}
\newcommand{\ones}{\mathds{1}}
\newcommand{\Span}{\mathrm{Span}}
\newcommand{\Null}{\mathrm{Null}}
\newcommand{\stack}{\mathrm{stack}}

\DeclareFontFamily{U}{mathx}{\hyphenchar\font45}
\DeclareFontShape{U}{mathx}{m}{n}{
	<5> <6> <7> <8> <9> <10>
	<10.95> <12> <14.4> <17.28> <20.74> <24.88>
	mathx10
}{}

\def\real{{\mathbb{R}}}
\def\complex{{\mathbb{C}}}

\def\doubleunderline#1{\underline{\underline{#1}}}

\def\Zint{{\mathchoice{\setbox1=\hbox{\sf Z}\copy1\kern-.75\wd1\box1}
		{\setbox1=\hbox{\sf Z}\copy1\kern-.75\wd1\box1}
		{\setbox1=\hbox{\scriptsize\sf Z}\copy1\kern-.75\wd1\box1}
		{\setbox1=\hbox{\scriptsize\sf Z}\copy1\kern-.75\wd1\box1}}}

\newcommand{\qedwhite}{\hfill \ensuremath{\Box}}

\begin{abstract}\vspace{-2mm}
    Zeroth-order (ZO) optimization enables dimension-free communication in federated learning (FL), making it attractive for fine-tuning of large language models (LLMs) due to significant communication savings. 
    However, existing ZO-FL methods largely overlook curvature information, despite its well-established benefits for convergence acceleration. 
    To address this, we propose \textbf{\texttt{HiSo}}, a Hessian-informed ZO federated optimization method that accelerates convergence by leveraging global diagonal Hessian approximations, while strictly preserving scalar-only communication \textit{without transmitting any second-order information}. 
    Theoretically, for non-convex functions, we show that {\alg} can achieve an accelerated convergence rate that is independent of the Lipschitz constant $L$ and model dimension $d$ under some Hessian approximation assumptions, offering a plausible explanation for the observed phenomenon of ZO convergence being much faster than its worst-case $\mathcal{O}(d)$-bound.
    Empirically, across diverse LLM fine-tuning benchmarks, HiSo delivers a 1$\sim$5× speedup in communication rounds over existing state-of-the-art ZO-FL baselines. 
    This superior convergence not only cuts communication costs but also provides strong empirical evidence that Hessian information acts as an effective accelerator in federated ZO optimization settings.
\end{abstract}
\section{Introduction}
The explosive development of large language models (LLMs) has sparked strong interest in making their fine-tuning scalable across distributed and privacy-sensitive data sources~\citep{naveed2023comprehensive, zhao2023survey}. 
As a promising and effective paradigm, federated fine-tuning has been successfully applied to enable collaborative model training while preserving data privacy in a large amount of works~\citep{kairouz2021advances, cho2024heterogeneous, li2025fast, zhang2025fedsgt}.
Yet, modern LLMs' massive parameter scale (e.g., several billions) introduces a severe scalability barrier: due to communicating high-dimensional model updates, communication cost has become a primary bottleneck for federated LLM fine-tuning~\citep{wu2025survey, jia2025comprehensive}. 
For instance, fine-tuning a OPT-1.3B model by FedAvg \citep{mcmahan2017communication} usually requires about $1\sim 5$ TB communication cost for one client \citep{li2024achieving}. 
To overcome this burden, recent work has proposed using zeroth-order (ZO) optimization \citep{nesterov2017random} to enable dimension-free communication in FL. 
In particular, DeComFL~\citep{li2024achieving} encodes both uplink and downlink communication using shared random seeds and scalar-only updates, achieving communication cost independent of model dimension. 
Specifically, it reduces the total communication cost from TB level to MB level. 
This dimension-free communication framework is especially attractive for federated LLM fine-tuning, where communication is a dominant bottleneck.

However, the practical effectiveness of ZO-based FL is limited by its seriously slow convergence \citep{fang2022communication, li2024achieving}. 
A key factor is that LLMs often exhibit heterogeneous and anisotropic curvature across their parameter space~\citep{kingma2014adam, yao2021adahessian, benzing2022gradient}, making it difficult for vanilla ZO-SGD to adaptively scale updates. 
While prior work has shown that second-order information (e.g., Hessians or their diagonal approximations) can significantly accelerate convergence~\citep{kingma2014adam, ye2018hessian, jiang2024adaptive, zhao2024second}, estimating Hessian approximation and applying such curvature-aware techniques in FL are non-trivial and even more pronounced in dimension-free communication frameworks, where \textbf{transmitting any Hessian-related information reintroduces expensive costs that scale with model size linearly or even quadratically, directly contradicting the goal of scalar-only communication}. 
This tension leads to our key research question:
\begin{tcolorbox}[
top=1.5pt, 
bottom=1.5pt, 
colback=green!9
]
\textit{Q: Can we accelerate federated zeroth-order fine-tuning while preserving dimension-free communication?}
\end{tcolorbox}
To answer this question, we first propose a generalized scalar-only communication FL framework that decouples scalar-only communication from its tight connection with vanilla ZO-SGD, enabling the integration of Hessian-informed optimization\footnote{The term "Hessian-informed" does not imply that we calculate the full Hessian matrix; rather, the update approximates the Hessian preconditioning direction, as we will explain in detail later.}. 
Within this framework, we are equipped to introduce {\alg}, an efficient FL algorithm via \underline{H}essian-\underline{i}nformed ZO optimization and \underline{S}calar-\underline{o}nly communication. 
Specifically, it captures curvature information through diagonal Hessian approximation without increasing Hessian-related communication cost. 
{\alg} maintains the scalar-only communication and significantly improves convergence via Hessian-informed preconditioning.
Our theoretical and empirical results and contributions can primarily be summarized as follows:

\begin{itemize}[leftmargin=20pt]
\item We propose a flexible FL framework with scalar-only communication in both uplink and downlink, which supports a broader class of optimization algorithms beyond vanilla ZO-SGD. 
\vspace{-1mm}
\item Under this framework, we propose {\alg}, a fast federated ZO optimization method via \underline{H}essian-\underline{i}nformed zeroth-order optimization and \underline{S}calar-\underline{o}nly communication. 
It utilizes global Hessian information to speed up convergence while preserving scalar-only communication (without the need to communicate Hessian-related information). 
\item Theoretically, we propose a novel condition to get a tight estimation of the variance of Hessian-informed ZO gradient under the low-effective rank and whitening assumptions. With this treatment, we prove that {\alg} can achieve a convergence rate independent of model dimension and function smoothness in non-convex settings, marking the first such result for ZO methods in FL. 
In addition, our analysis generalizes the state-of-the-art DeComFL framework and, importantly, extends the theoretical guarantees to multiple local updates - a key component of practical federated learning that DeComFL does not support in its convergence analysis. 
\item Empirically, {\alg} achieves up to 5$\times$ faster convergence than DeComFL, while delivering higher test accuracy than all ZO baselines across all tasks.
Compared to first-order baselines, up to 90 million times communication savings can be gained.
\end{itemize}

\section{Related Work}
\textbf{Adaptive Gradient Methods \& Hessian-Informed Zeroth-Order (ZO) Optimization.}
To accelerate first-order FL, adaptive FL algorithms (e.g., FedAdam, FedYogi, FedAdagrad \citep{reddi2020adaptive}) have been introduced to address the slow convergence in heterogeneous environments. 
By adaptively adjusting learning rates or applying momentum techniques, these methods significantly outperform vanilla FedAvg in terms of convergence speed and final accuracy. 
Parallel to this line, recent ZO advances have shown its effectiveness in gradient-free learning, especially when gradients are unavailable or expensive to compute. 
To further enhance convergence speed and stability, several studies \citep{ye2018hessian, zhang2022zeroth, chen2024multi, zhao2024second, kim2021curvature} proposed Hessian-informed ZO methods that incorporate second-order information, such as diagonal Hessian approximations, as preconditioning to improve the quality of gradient estimation and reduce variance, which shows the acceleration in single-node settings. 

\textbf{Communication-Efficient Federated Learning \& Scalar-Only Communication.}
Communication efficiency is a critical challenge in FL primarily due to the frequent transmission of high-dimensional model updates between clients and the server \citep{kairouz2021advances, jia2025comprehensive}. 
Numerous methods have been proposed to reduce communication overhead in FL, including parameter-efficient methods, such as Low-Rank Adaptation (LoRA) \citep{sun2024improving, guo2024selective} to transmit only a low-rank trainable matrix representing model updates, and compression techniques used to reduce the size of transmitted data \citep{yang2021cfedavg, wang2022communication, honig2022dadaquant, su2024fed, li2024fedbat, zakerinia2024communication}. 
Moreover, ZO optimization has also been introduced to the FL context \citep{fang2022communication, qin2024federated, liang2025towards}. 
FedZO \citep{fang2022communication} integrates ZO-SGD into FL, but its communication heavily relies on the model dimension. 
DeComFL \citep{li2024achieving} pioneeringly exploited the intrinsic properties of ZO gradients - specifically, their decomposition into gradient scalars and perturbation vectors determined by random seeds - to achieve dimension-free communication overhead in LLM fine-tuning.
Yet, it suffers from slower convergence due to the nature of ZO-SGD. 
\section{A Generalized Scalar-Only Communication in FL Framework}
\setlength{\abovedisplayskip}{3pt}
\setlength{\belowdisplayskip}{2pt}
\setlength{\intextsep}{3pt} 

In this section, we will present a generalized FL framework with scalar-only communication. 
Before that, we make a brief review about the zeroth-order method and its application for the dimension-free communication in FL, which will be the two key pillars for the following algorithm design. 

\begin{wrapfigure}{R}{0.36\textwidth}
    \centering
    \includegraphics[width=0.36\textwidth]{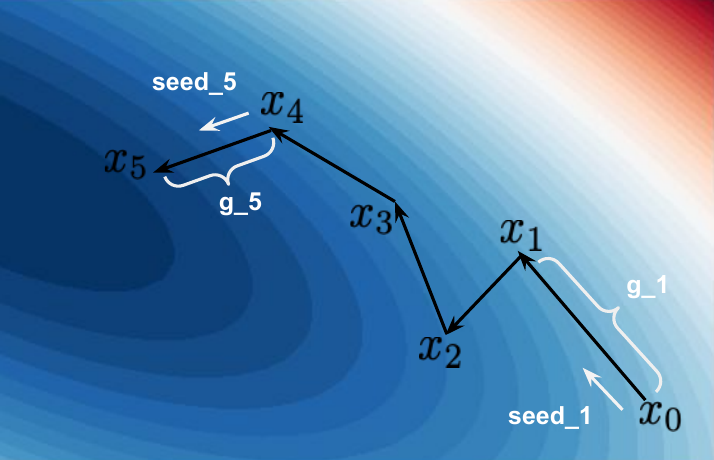}\vspace{-6mm}
    \caption{\small An illustration of ZO update.}
    \label{fig:zo-update}
\end{wrapfigure}

\subsection{Zeroth-Order SGD and Scalar Representations}
We focus on the randomized gradient estimator (RGE) for performing ZO gradient estimation in this paper. 
It is also commonly referred to as Simultaneous Perturbation Stochastic Approximation (SPSA) \citep{spall1992multivariate, nesterov2017random}. 
Given a scalar-valued loss function $f(x)$ where $x\in\real^d$, the forward-style RGE is 
\begin{align} \label{eq.zo.grad}
    \hat{\nabla} f(x)\!=\! \frac{1}{\mu}\big(f(x+\mu u) \!-\! f(x)\big) u,\; u\sim \cN(0, I_d), 
\end{align}
where $u$ represents a random direction vector sampled from a standard Gaussian distribution and $\mu>0
$ is a small constant, commonly termed the smoothing parameter, controlling the perturbation step size. 

An intriguing attribute of RGE is its efficient representation using only two scalars.
First, we introduce a gradient scalar $g:=\frac{1}{\mu} (f(x+\mu u) - f(x)) \in \real$, which serves as a scaling constant capturing the directional derivative. 
$g$ can also be explained as an approximate value for the directional gradient.
Second, due to the deterministic nature of pseudo-random number generators, the random direction vector $u\in\real^d$ can be uniquely determined by a random seed $s$. 
Hence, $\hat{\nabla} f(x)$ can be efficiently expressed by two scalars. 
Crucially, this compact representation significantly enhances the efficiency of model updates in ZO frameworks. 
To illustrate, consider ZO-SGD update rule shown in Fig.~\ref{fig:zo-update}: 
\begin{align}
    x_{R+1} = x_R - \frac{\eta}{\mu}\big(f(x_R+\mu u_R) - f(x_R)\big) u_R = x_R - \eta g_R u_R = \cdots = x_0 -\eta\sum_{r=0}^R g_r u_r 
\end{align}
This implies that, \textrm{given the initial point $x_0$, a few number of gradient scalars $\{g_r\}$ and random seeds $\{s_r\}$ are sufficient to reconstruct $x_R$, irrespective of the dimensionality $d$ of $x$}. 
This representation will play a crucial role in the dimension-free communication FL algorithm that follows. 

\subsection{Federated Learning with Dimension-Free Communication} 
We consider a FL scenario with $M$ clients, each owning a local loss function $f_i$. 
The goal is to collaboratively minimize the global loss function across all clients without sharing their private data: 
\begin{align}
    \min_{\x \in \real^d} f(\x) = \min_{\x \in \RR^d}  \frac{1}{M} \sum_{i=1}^M f_i(\x),\;\;\;\; \mbox{where $f_i(\x):= \Ex [F_i(\x; \xi_i)]$.}  
\end{align}
A typical FL round consists of two communications:
1) \textbf{Downlink Communication}: The server broadcasts the current aggregated global model to a subset of clients;
2) \textbf{Uplink Communication}: 
The selected clients return their locally updated model to the server. 
Both can be an expansive communication operation when the number of parameters $d$ is large.

The core idea of dimension-free communication in FL is leveraging the scalar representation of ZO-SGD to avoid transmitting the full models. To illustrate that, consider the following global model update rule with the notation that $x_{r,\tau}^{(i)}$ denotes client $i$'s model at the $r$-th round and $\tau$-th local update step and $x_r$ denotes the $r$-th global model: 
\begin{align}
    x_{r+1} = & \frac{1}{|C_r|}\sum_{i\in C_r} x_{r, \tau}^{(i)} = x_{r} + \frac{1}{|C_r|}\sum_{i\in C_r} (x_{r, \tau}^{(i)} - x_{r})
    = x_{r} - \eta \frac{1}{|C_r|}\sum_{i\in C_r} \sum_{k=0}^{\tau-1} g_{r,k}^{(i)}u_{r,k}, \label{eq.decomfl.agg}
\end{align}
where $C_r$ is the set of sampled clients in the $r$-th round, 
$u_{r,k}$ are generated by shared random seeds across all clients, ensuring that all clients move along consistent directions.  
It enables that the global aggregation step in the server is simply computing an average of the gradient scalars: $g_{r,k} = \frac{1}{|C_r|}\sum_{i\in C_r}g_{r,k}^{(i)}$ from the local gradient scalar $g_{r,k}^{(i)} = \big(f_i(x_{r,k}^{(i)} + \mu u_{r,k}) - f_i(x_{r,k}^{(i)})\big)/\mu$.

\begin{wrapfigure}{R}{0.37\textwidth}
    \centering
    \includegraphics[width=0.37\textwidth]{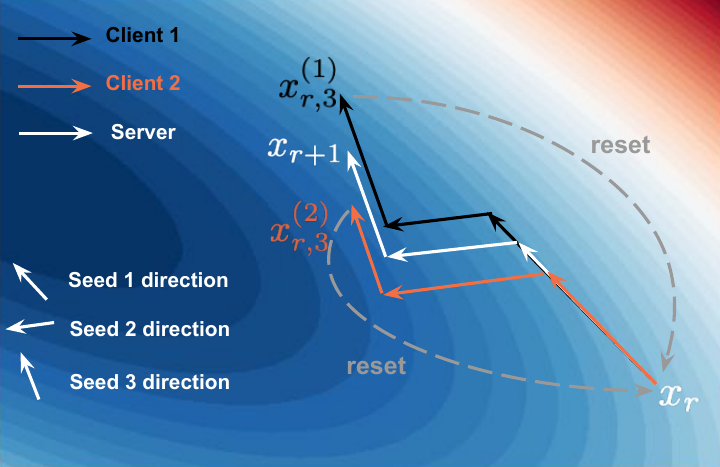}
    \vspace{-6mm}
    \caption{\small One-round update with 2 clients and 3 local updates. 
    They share the same direction for each local update with different lengths. 
    Arrive $x_{r+1}$ for both clients requires \textbf{7 steps}: 3 local updates, reset and 3 updates with global values.}
    \label{fig:decomfl} 
\end{wrapfigure}

\textbf{Uplink Communication:} 
From Eq.~(\ref{eq.decomfl.agg}), sampled clients only transmit local gradient scalars $g^{(i)}_{r,k}$ to the server for global aggregation. 
\textbf{Downlink Communication:} 
ZO scalar representation only captures relative updates, so it is crucial to ensure that the server and all clients start from the same starting point. 
To achieve this, a model-reset mechanism is introduced: after finishing their local updates in each round, each participants resets its local model to the initial model, which is the global server model by induction. 
With this reset mechanism, the downlink communication can be conceptualized similarly to Eq.~(\ref{eq.decomfl.agg}), with the distinction that clients may be absent in multiple rounds. 

Unlike the standard FL algorithm, reconstructing instead of pulling the model is used for catching the current server model through global gradient scalars and random seeds from preceding missed rounds. 
Consequently, the server is required to maintain auxiliary information: the client's last participation round, historical random seeds, and the global gradient scalars. This constitutes a negligible extra memory requirement as these are merely a few scalar values.
We demonstrate the process in Fig.~\ref{fig:decomfl}.
\subsection{Generalized Scalar-Only Communication in Federated Learning} 
In the work by \citet{li2024achieving}, the inherent dependency on ZO-SGD limits its applicability and the full potential of its dimension-free communication framework. 
One of our key contributions is observing that the crucial element is not the specific choice of ZO-SGD, but the basic use of scalar representations. 
Specifically, by maintaining records of their respective states with the update constructed by these scalar representations, the server and clients can effectively accommodate a wider range of optimization algorithms within the dimension-free communication paradigm. 
Thus, in Algorithm~\ref{alg.cezo-fl1}, we present a more generalized formulation that allows for the integration of various optimization techniques.

In Algorithm~\ref{alg.cezo-fl1}, communication is structured as follows: clients transmit $\{\Delta x^{(i)}_{r,k}\}_{k=1}^K$ to the server for global aggregation, and the server distributes the aggregated update $\Delta x_{r}$ to clients for model reconstruction. 
The dimension-independent property is preserved if both client-side updates $\Delta x^{(i)}_{r,k}$ and the server-side aggregated update $\Delta x_{r}$ can be effectively represented by scalars.
Note a persistent state may be required to reconstruct $\Delta x_{r}$ with $r_l$ as the last participated round.

\section{\underline{H}essian-\underline{i}nformed \underline{S}calar-\underline{o}nly Communication in FL ({\alg})}

\subsection{Find a Better Ascent $\Delta x^{(i)}_{r,k}$ Direction}
We use the proposed generalized framework to design a novel method superior to ZO-SGD-based FL while retaining dimension-free communication.
A core challenge in the preceding framework is to identify an effective ascent direction $\Delta x^{(i)}_{r,k}$ that is constructible solely from scalar values and current state information. 
While ZO-SGD meets these requirements, a superior alternative can be found.

Recall that the ZO methods' slow convergence is due to its dependency on random search directions \citep{marevisiting}. 
More specifically, recall the Eq.~(\ref{eq.zo.grad}) with $u \sim \cN(0, I)$, which uniformly searches all directions in the $\real^d$ space, is the update direction regardless of the scalar $g$. 
A natural extension is that we can guide the search direction with an invertible matrix $H_r$. 
Suppose $H_r$ is given, the Line 11 in Algo.~\ref{algorithm-1} can be formulated as the following sub-optimization problem
\begin{align}
    \min_{g\in \real}\;\;& \|\nabla f_i(x_{r,k}^{(i)}) - \Delta x_{r,k}^{(i)}\|_{2}^2 &\mbox{(Ascent Direction)}\\
    {\rm s.t.}\;\;&  \Delta x_r^{(i)} = g\cdot H^{-1/2}_r u_{r,k},\;\; u_{r,k}\sim \cN(0, I_d) \in \real^{d\times 1}&\mbox{(Scalars Representation)}
\end{align}
It will be clear later why we use this strange $H^{-1/2}_r$ notation instead of $H_r$ directly. Solving the above least-squares problem, we have
\begin{align}
    g^o = (u\tran_{r,k} H^{-1}_r u_{r,k})^{-1} u_{r,k}\tran H^{-1/2}_r \nabla f_i(x_{r,k}^{(i)})
\end{align}
Note $(u\tran H^{-1} u)^{-1}$ is a scalar that is independent of iterates $\x_{r,k}^{(i)}$. Hence, we can absorb it into the learning rate. Next, note that 
$
    u\tran_{r,k} H^{-1/2}_r\nabla f_i(x_{r,k}^{(i)}) = \frac{1}{\mu}\big(f_i(x_{r,k}^{(i)} + \mu H^{-1/2}_r u_{r,k})-f_i(x_{r,k}^{(i)})\big) + \cO(\mu)
$.
Hence, we obtain the following update rule
\begin{align} \label{eq.delta.update}
    \boxed{\Delta x_{r,k}^{(i)} = \frac{1}{\mu}\big(f_i(x_{r,k}^{(i)} + \mu H^{-1/2}_r u_{r,k})-f_i(x_{r,k}^{(i)})\big) H^{-1/2}_r u_{r,k}}
\end{align}
Now it should be clear why we use the notation $H_r^{-1/2}$ after we take the expectation of $\Delta x_{r,k}^{(i)}$:
\begin{align}
    \Ex \Delta x_{r,k}^{(i)} \approx \Ex H^{-1/2}_r u_{r,k}u_{r,k}\tran H^{-1/2}_r \nabla f_i(x_{r,k}^{(i)}) = H^{-1}_r\nabla f_i(x_{r,k}^{(i)}).
\end{align}
When $H_r$ is a well-approximated Hessian matrix, the expectation of gradient descent follows the Newton-style gradient descent \citep{boyd2004convex}. 
The first-order counterpart of $\Delta x_{r,k}^{(i)}$ is called natural gradient since it can be viewed as a pre-conditioned gradient \citep{amari1998natural}.
Recalling the linear transformation property of Gaussian Distribution, the update Eq.~\ref{eq.delta.update} can be more concisely written as the following form
\begin{align}
    \Delta x_{r,k}^{(i)} =& \frac{1}{\mu} [f_i(x_{r,k}^{(i)} + \mu z_{r,k}) - f_i(x_{r,k}^{(i)})] z_{r,k}, \;\; z_{r,k} \sim \cN(0, H^{-1}_{r}).
\end{align}
This formulation also aligns with recent work by \citet{ye2018hessian}  and \citet{zhao2024second}, which refers to this type of update as Hessian-Informed or Hessian-Aware Zeroth-Order Optimization. 

\subsection{Learning Global Curvature without Extra Communication Cost} 
\label{subs.learn.hessian}

A follow-up question for the above formulation is how to find this $H_r$ matrix.
One plausible approach is, again, utilizing the zeroth-order gradient estimators to approximate directional second derivatives
\begin{align}
    u\tran \nabla^2 F(x) u \approx \frac{F(x+\mu u) + F(x-\mu u) - 2F(x)}{2\mu^2},\;\; u\sim \cN(0, I_d).
\end{align}
However, this approach has two limitations:  
1) this requires an additional function evaluation per direction and extra communications; 2) forming the full $d\times d$ Hessian is both costly and  unnecessary. Instead, we only seek a diagonal preconditioner, akin to Adam’s per-coordinate scaling \citep{kingma2014adam}\footnote{More accurately, our method resembles RMSProp as it currently is without a momentum term. Momentum could be incorporated without additional communication costs using the same technique  presented in this section. Given the existing length of this paper, we will not elaborate on this momentum extension here.}. Recall the global update term $\Delta x_{r,k}$ approximates the value of the gradient and it can be constructed by scalars only as discussed before. Further, notice this value is needed for the reconstruction step. Hence, we have a free variable to approximate the diagonal Hessian through the following proposed rule. We only update the Hessian at the beginning of one communication round with $\tau$-local update steps followed by the exponential moving averaging:
\begin{align}
    H_{r+1} = H_{r,\tau} =& (1-\nu)H_{r,\tau-1}  + \nu \frac{1}{m}\sum_{i\in S_r} {\rm Diag}([\Delta x_{r,\tau}]^2 + \epsilon I) \nn \\[-6mm]
    \vdots &  \nn\\[-3mm]
   H_{r,1} =& (1-\nu)H_{r}  + \nu \frac{1}{m}\sum_{i\in S_r} {\rm Diag}([\Delta x_{r,0}]^2 + \epsilon I), 
\end{align}
where $\epsilon$ is a small number to make sure that $H_{r+1}$ is strictly positive definite.

This Adam-style method, similar to its first-order counterpart \citep{reddi2020adaptive}, has two advantages:  
1) the diagonal matrix approximation avoids the $d^2$ storage for the Hessian matrix, making the proposed method scalable with the large-scale model. 
2) the vector $\Delta x_{r,k}$ can be represented by scalars, so the server and clients can reconstruct this global Hessian without any extra communication. 

\begin{algorithm*}[t]
    \caption{Generalized Scalar-only Communication in Federated Learning} \label{alg.cezo-fl1}
    \small 
    \begin{algorithmic}[1]
        \STATE \textbf{Initialize}: learning rate $\eta$, local update steps $\tau$, communication rounds $R$. 
        \STATE \textbf{Allocate}: memory for recording the necessary historical states and client's participation information.

        \FOR{$r=0, \cdots, R-1$}
        \STATE Server uniformly samples a client set $C_r$ and distributes the shared random seeds $\{s_r\}$.
        \FOR{each client $i \in C_r$ \textbf{in parallel}}
        \STATE \underline{Receive} the necessary \textcolor{blue}{scalar representations} of $\{\Delta x_{r'}\}$ from server.
        \STATE \underline{Reconstruct} the $\{\Delta x_{r'}\}$ from the scalars and update state.
        \STATE  $x^{(i)}_{r,0} = x^{(i)}_{r_l, \tau} - \eta \sum_{r'=r_l}^{r-1}\Delta x_{r'}$ {\color{gray} $\hfill\triangleright$ Equivalent to pull model}
        \FOR{$k=0, \cdots, \tau-1$}
        \STATE \underline{Find} $\Delta x^{(i)}_{r,k}$ that 1) is ascent direction; 2) can be represented by scalars + state;
        \STATE $x^{(i)}_{r,k+1} = x^{(i)}_{r,k} - \eta \Delta x^{(i)}_{r,k}$. {\color{gray} $\hfill\triangleright$ Client local update}
        \ENDFOR
        \STATE $x^{(i)}_{r,\tau} \Leftarrow x^{(i)}_{r, 0}$ \underline{reset} the model and other necessary states.
        \STATE \underline{Send} the necessary \textcolor{blue}{scalar representations} of $\{\Delta x_{r,k}^{(i)}\}$ to server. {\color{gray} $\hfill\triangleright$ Equivalent to push model}
        \ENDFOR
        \STATE \underline{Aggregate} the scalar representations of $\{\Delta x_{r,k}^{(i)}\}$ into the ones for the global $\Delta x_{r}$.
        \ENDFOR
    \end{algorithmic} 
    \label{algorithm-1}
\end{algorithm*}

\subsection{Putting Together to Establish the Design of {\alg}}
{\alg} is established by substituting the previously determined ascent direction and the global Hessian learning method into the scalar-only communication framework. 
A illustration of {\alg} is shown in Fig.~\ref{fig:decomfl-v2}.
\textbf{To elucidate the basic {\alg} with brevity, we write out a simplified case where one local update occurs per round ($\tau=1$). 
The following equation is for one round update of one client. }

\begin{align*}
    &\left.
    \begin{aligned}
        {\rm for \ } t =&\; r_l, \cdots r-1:\\
    &\hspace{-3mm}\Delta {x_t} = g_t H_t^{-1/2} u_t, \;\;\; u_t \Leftarrow \cN(\rm{seed}_t) \;\;\\
    &\hspace{-3mm}x^{(i)}_{t+1} = x^{(i)}_{t} - \eta\Delta {x_t} \\  
    &\hspace{-3mm}H_{t+1} = (1-v)H_t + \nu {\rm Diag}([\Delta {x_t}]^2 + \epsilon I)\;\;\\
    \end{aligned}\right\} {\mbox{\color{gray}(Reconstruct States for the Missing Rounds)}}\\
    &\left.
    \begin{aligned}
    \Delta x_{r}^{(i)} =&\; \frac{1}{\mu} [f_i(x_{r}^{(i)} + \mu H^{-1/2}_{r}  u_{r}) - f_i(x_{r}^{(i)})] H^{-1/2}_{r} u_{r}\\
    x^{(i)}_{r+1}  =&\;  x_r - \eta  \Delta x_{r}^{(i)} \\
    x^{(i)}_{r+1} \Leftarrow &\;  x_r \;\;\; ({\rm reset})
    \end{aligned}
    \right\} {\mbox{\color{gray}(Client Local Update)}}\\
    &
    \left.\Delta x_{r} = \frac{1}{|C_r|}\sum_{i\in C_r} \Delta x_{r}^{(i)}  = \left(\frac{1}{|C_r|}\sum_{i\in C_r} g_{r}^{(i)}\right) H^{-1/2}_{r} u_{r}\;\;\right\} {\mbox{\color{gray}(Global Aggregation at Server)}}
\end{align*}
where $r_l$ is the last participated round, $x_{r}^{(i)}$ is $i$-th client's model at communication round $r$ and we omit the $k$ for local-update while $x_{r}$ is the global/server model. The same notation conventions apply for $g_r^{(i)}$, $g_r$, $\Delta x_{r}^{(i)}$ and $\Delta x_{r}$. Though mathematically equivalent, this representation is presented by disregarding implementation and communication intricacies to highlight the core mechanics better. 
Nevertheless, it is essential to highlight that only $g_r^{(i)}$, $g_r$ and random seeds are required to be communicated between clients and server as our scalar-only framework proposes.
For the detailed algorithm table with all features, we provide it in the Appendix~\ref{sec.appendix.full_alg}.

\begin{figure}[h]
    \centering
    \includegraphics[width=0.83\linewidth]{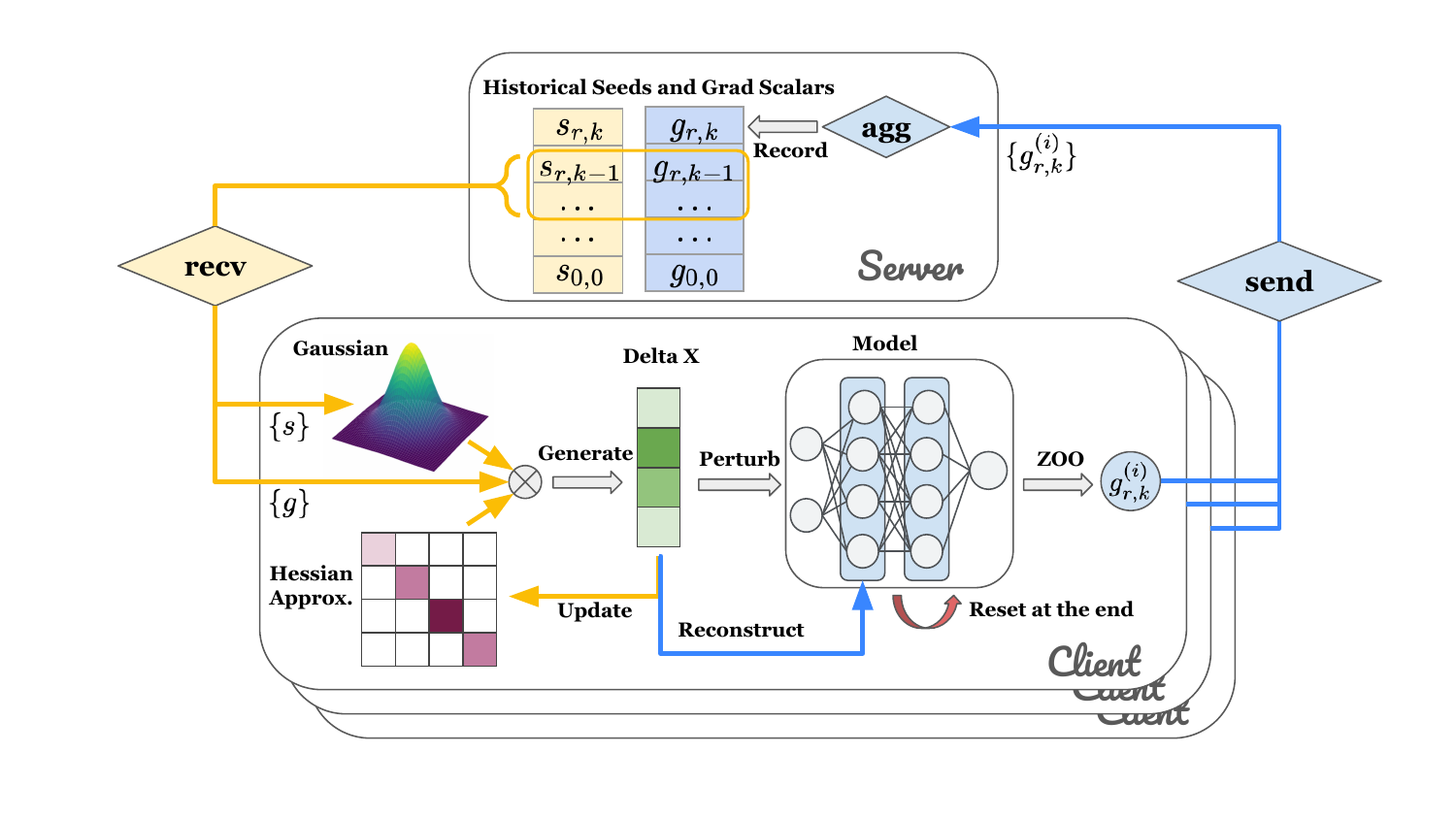}
    \vspace{-8mm}
    \caption{Illustration of {\alg}}
    \label{fig:decomfl-v2}
\end{figure}
\section{Performance Analysis}
\subsection{Hessian, Variance of ZO Gradient, and Low Effective Rank Assumption}
\label{subsec.hessian.assumption}



To lay the foundation for analyzing {\alg}, we first examine a basic component of ZO: the estimation of the variance term. 
It provides essential insights into Hessian-informed ZO methods.
\begin{align}
    &\Ex \|u\|_\Sigma^2 := \Ex u\tran \Sigma u, \;\;\; u \sim \cN(0, I_d)\in \real^{d\times1}, 
\end{align}
where $\Sigma$ is some semi-positive Hessian matrices\footnote{For a non-convex function, Hessian may contain some negative eigenvalues. 
One possible choice of $\Sigma$ can be the absolute eigenvalues of the Hessian.}.
The standard $L$-smoothness assumption implies that $\|\Sigma\|\leq L$. Consequently, the preceding quantity can be upper-bounded as:
\begin{align}
    \Ex \|u\|_\Sigma^2 \leq \|\Sigma\|\cdot\Ex\|u\|^2 \leq Ld,
\end{align}
Note that the upper bound derived above can be quite large if the dimension $d$ is large. 
This dependence on dimensionality is a well-known factor leading to a typically slow convergence rate of ZO methods \citep{nesterov2013introductory}.
Fortunately, this bound only represents a worst-case scenario. 
Motivated by empirical observations that the Hessian of trained large language models (LLMs) possesses relatively few eigenvalues significantly far from zero \citep{papyan2020traces, yao2020pyhessian, wu2020dissecting}, \cite{malladi2023fine} proposed a low-effective rank assumption. This spectral property, where most eigenvalues are concentrated near zero, is illustrated in Fig.~\ref{fig:hessian_values} (left). 
To utilize this assumption, we need to treat the variance more carefully:
\begin{align}
    \Ex \|u\|_\Sigma^2 = \Tr(\Sigma\Ex uu\tran) = L\Tr(\Sigma/L) := L\kappa,  
\end{align}
where $\kappa=\Tr(\Sigma/L)$ is called the effective rank of Hessian $\Sigma$. 
It is computationally prohibitive to find the exact value of $\kappa$, but several previous workers indicate $\kappa\ll d$ \citep{li2024achieving, malladi2023fine}. 
Hence, we get a tighter variance estimation. 
Utilizing the Hessian approximate matrix, we can further improve this bound. 
Supposing we have a well approximation matrix $H$ for the Hessian $\Sigma$, the weighted Gaussian vector $z$ is sampled from the distribution $\cN(0, H^{-1})$. 
Then, we have
\begin{align}
    \Ex \|z\|_\Sigma^2 = \Ex\Tr(H^{-1/2}\Sigma H^{-1/2}uu\tran) = \Tr(H^{-1/2} \Sigma H^{-1/2}) := \zeta, 
\end{align}
where we call the quantity $\zeta$ as the low whitening rank of Hessian $\Sigma$. 
\begin{figure}[!tbh]
    \centering
    \includegraphics[width=0.6\linewidth]{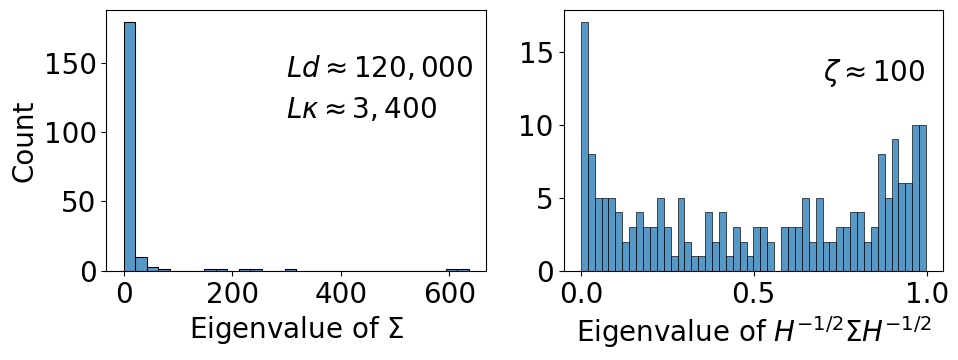}
    \vspace{-2mm}
    \caption{\small An Illustration of the Eigenvalue Distribution.}
    \label{fig:hessian_values}
\end{figure}

\begin{wraptable}{R}{0.48\textwidth}
    \centering
    \begin{tabular}{c|c|c}
        \toprule
        \textbf{Assumption} & $\Ex\|u\|^2_\Sigma$ & $\Ex\|z\|^2_\Sigma$ \\ 
        \midrule
        $L$-smooth & $Ld$ & $2d$ \\
        Low Effective Rank & $L\kappa$ & $\zeta$\\
        \bottomrule
    \end{tabular}
    \vspace{-2mm}
    \caption{\small The Upper-Bound of ZO Gradient Variance}
    \label{tab:zo-upper-bound}
\end{wraptable}

If $H$ is the perfect approximation of $\Sigma$, then $\zeta=d$. 
This case is neither possible in practice nor ideal in LLM cases. 
Recalling that only a few eigenvalues of $\Sigma$ are non-zero, then $H\approx{\rm Diag}(\Sigma+\epsilon\one)$ is a more effective inverse value, which is similar to Wiener filtering in the denoising field \citep{sayed2003fundamentals}. 
Now we summarize the above discussion into the following definition.\\
\textbf{Definition.} We call a diagonal matrix $H$ as \textbf{a well-approximate matrix of Hessian $\Sigma$} if the whitening matrix $\Xi := H^{-1/2}\Sigma H^{-1/2}$ satisfies the following condition:
\begin{align}
    \Tr(\Xi) = & \Tr(H^{-1/2}\Sigma H^{-1/2}) \leq 
    \begin{cases}
        2d & \mbox{($L$-Smoothness)}\\
        \zeta & \mbox{(Low Effective Rank)}
    \end{cases}, \label{a well-approximate matrix of Hessian}
\end{align}
where $\zeta$ is a quantity independent of the dimension $d$, and the factor $2$ is just a safety factor to tolerate the imperfect inverse. 
The above assumptions and results are summarized in Table~\ref{tab:zo-upper-bound}.

To illustrate the effectiveness of this whitening process, we execute a simple numerical experiment. 
To simulate the distribution of Hessian eigenvalues, we assume that there are 200 eigenvalues following the log-normal distribution, i.e., $\log(\Sigma)\sim\cN(0, 3I)$. 
The simulation, depicted in Fig.~\ref{fig:hessian_values}, shows that $\zeta \ll L\kappa\ll Ld$. 
This lays the theoretical foundation for the acceleration of our proposed {\alg}. 

\subsection{Convergence Results }
We first present some standard assumptions that will be used to establish the convergence results. 
\begin{assumption}[$L$-Lipschitz] \label{assump.l.hessian}
    Suppose the global loss function $F$ is $L$-smooth, i.e., for all $x, y\in\real^d$, we have
        $\|\nabla F(x) -\nabla F(y)\|\leq L \|x-y\|$.
\end{assumption}

\begin{assumption}[Unbiased Stochastic Gradients with Bounded Variance] \label{assumption.stochastic gradients}
The stochastic gradient computed by clients is unbiased with bounded variance: 
$\Ex[\nabla f_i(x;\xi)] = \nabla f_i(x)$ and $\Ex \left\| \nabla f_i(x;\xi) - \nabla f_i(x) \right\|^2 \leq \sigma^2_s$, $\forall x$, where $\xi$ represents a data sample. 
\end{assumption}

\begin{assumption}[Bounded Heterogeneity]\label{assumption.bounded-hetero}
    The cost function satisfies $\|\nabla f_i(x) - \nabla F(x)\|\leq \sigma_G, \forall x$.
\end{assumption}

\begin{assumption}[Bounded Learned Hessian]\label{assumption.bounded-learned-hessian} The learned Hessian has $0<\beta_\ell\leq \|H_r\|\leq \beta_u, \forall r$.
\end{assumption}

The last assumption is common in Hessian-informed \citep{maritan2024fedzen, zhao2024second} or Adam-style algorithms \citep{kingma2014adam, reddi2020adaptive}, where the requirement of bounded gradient implies this assumption directly. 
It is worth noting that, unlike the assumption on Hessian, the parameters $\beta_\ell$ and $\beta_u$ can be easily controlled in the algorithm design by adding the clipping step \citep{liu2023sophia}. 
This assumption also implies $\beta_u^{-1}\leq \|H_k^{-1}\|\leq \beta_\ell^{-1}$.\vspace{-1mm}

\begin{theorem} \label{theorem-main}
     Under Assumptions \ref{assump.l.hessian},
     \ref{assumption.stochastic gradients}, \ref{assumption.bounded-hetero}, and \ref{assumption.bounded-learned-hessian}, if  $\eta \leq \min\left(\frac{\beta_\ell}{mL}, \frac{1}{8\rho_k}, \frac{\beta_\ell}{4(\tau-1)}\sqrt{\frac{1}{L(d+2)}}\right)$ and denote $\Delta_{1,*} := F(\bar{x}_1) - F^\star$,  the sequence of iterates generated by {\alg} satisfies:\vspace{-1mm}
    \begin{align}
        \frac{1}{\tau R}\!\sum_{r=0}^{R-1}\sum_{k=0}^{\tau-1} \Ex\| \nabla F(\barx_{r,k})\|^2_{H_r^{-1}} \!\leq & \frac{4 \Delta_{1,*}}{\eta \tau R}  + \underbrace{\frac{32\eta (\tau-1)^2 L\bar{\phi}}{\beta_\ell \tau m}(\sigma_G^2 + \sigma_s^2)}_{\rm extra \ client\ drift\ term} 
        + \frac{16\eta\bar{\rho}}{\beta_\ell m}(\sigma_G^2+\sigma_s^2)+ O(\eta\mu), \nn
    \end{align}
where 
$\barx_{r,k} = \frac{1}{M}\sum_{i=1}^M x_{r,k}^{(i)}$, 
$\bar{\rho} = \frac{1}{\tau R}\sum_{r}\sum_{k}(\Tr(H_r^{-1/2}\Sigma_{r,k} H_r^{-1/2}) + 2\|H_r^{-1/2}\Sigma_{r,k} H_r^{-1/2}\|)$,
$\Sigma_{r,k}$ is a PSD matrix that upper-bounds Hessian at $x_{r,k}$ and
$\bar{\phi} = \frac{1}{R}\sum_{r}(\Tr(H_r^{-1}) + 2\|H_r^{-1}\|)$.
$\hfill\blacksquare$
\end{theorem}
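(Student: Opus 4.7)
The plan is to adapt the standard non-convex local-SGD-style descent analysis to the Hessian-informed ZO setting, carefully tracking things in the preconditioned norm $\|\cdot\|_{H_r^{-1}}$ that naturally appears on the left-hand side. The starting point will be the virtual averaged iterate $\bar{x}_{r,k} := \frac{1}{M}\sum_{i=1}^M x_{r,k}^{(i)}$ and the noting that a full round advances it by $\bar{\Delta}_{r,k} := \frac{1}{|C_r|}\sum_{i\in C_r}\Delta x_{r,k}^{(i)}$, with client sampling handled by taking expectation over $C_r$ to recover the average over all $M$ clients (modulo a variance term). The $L$-smoothness descent inequality then gives
\begin{align*}
\mathbb{E} F(\bar{x}_{r,k+1}) \le \mathbb{E} F(\bar{x}_{r,k}) - \eta\,\mathbb{E}\langle \nabla F(\bar{x}_{r,k}),\, \bar{\Delta}_{r,k}\rangle + \tfrac{\eta^2 L}{2}\,\mathbb{E}\|\bar{\Delta}_{r,k}\|^2,
\end{align*}
which is the backbone of the whole argument.

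Next I would evaluate the inner-product term using the key identity $\mathbb{E}_{u_{r,k}}[\Delta x_{r,k}^{(i)}] = H_r^{-1}\nabla f_i(x_{r,k}^{(i)}) + O(\mu)$ derived in Section 3 / Section 4.1. Adding and subtracting $H_r^{-1}\nabla F(\bar{x}_{r,k})$ inside each client summand splits the inner product into (i) the descent term $\|\nabla F(\bar{x}_{r,k})\|_{H_r^{-1}}^2$ that will appear on the LHS of the theorem, (ii) a cross term measuring heterogeneity $\nabla f_i(x_{r,k}^{(i)}) - \nabla F(x_{r,k}^{(i)})$, and (iii) a cross term measuring client drift $\nabla F(x_{r,k}^{(i)}) - \nabla F(\bar{x}_{r,k})$. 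Both cross terms get controlled by Young's inequality in the $H_r^{-1}$ norm (this is where $\beta_\ell^{-1}, \beta_u^{-1}$ enter), charging half of $\|\nabla F(\bar{x}_{r,k})\|^2_{H_r^{-1}}$ back to the LHS and leaving residuals proportional to $\sigma_G^2$ and $L^2\|x_{r,k}^{(i)} - \bar{x}_{r,k}\|^2$.

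For the $\|\bar{\Delta}_{r,k}\|^2$ term I would invoke the Hessian-informed ZO variance bound developed in Section 4.1 applied to the non-isotropic perturbation $z_{r,k} \sim \mathcal{N}(0, H_r^{-1})$: roughly $\mathbb{E}\|\Delta x_{r,k}^{(i)}\|^2 \lesssim \|H_r^{-1}\nabla f_i(x_{r,k}^{(i)})\|^2 + \bar{\rho}\cdot(\|\nabla f_i\|^2 + \sigma_s^2) + O(\mu^2)$, which is exactly where $\bar{\rho} = \tfrac{1}{\tau R}\sum_{r,k}(\Tr(H_r^{-1/2}\Sigma_{r,k}H_r^{-1/2}) + 2\|H_r^{-1/2}\Sigma_{r,k}H_r^{-1/2}\|)$ enters the bound. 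The independent component after averaging over $|C_r|$ clients contributes the $\frac{\eta\bar\rho}{\beta_\ell m}(\sigma_G^2 + \sigma_s^2)$ term. Then I would establish a separate client-drift lemma: starting from $x_{r,0}^{(i)} = \bar{x}_{r,0}$ (the reset), unroll the local recursion and use the Hessian-informed variance bound again to get $\mathbb{E}\|x_{r,k}^{(i)} - \bar{x}_{r,k}\|^2 \lesssim \eta^2 (\tau-1)\sum_{k'<k}\bigl(\|\nabla F(\bar{x}_{r,k'})\|_{H_r^{-1}}^2 + \bar\phi(\sigma_G^2+\sigma_s^2)\bigr)$, where $\bar\phi = \tfrac{1}{R}\sum_r(\Tr(H_r^{-1}) + 2\|H_r^{-1}\|)$ arises from the ZO variance of $z_{r,k}$ itself. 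The step-size conditions $\eta \le \beta_\ell/(8\rho_k)$ and $\eta(\tau-1) \le \tfrac{\beta_\ell}{4}\sqrt{1/(L(d+2))}$ are exactly what is needed to absorb the quadratic-in-$\eta$ drift contribution back into the LHS after plugging into the descent inequality.

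The main obstacle, and the genuinely new piece relative to standard ZO-FL analyses like DeComFL, is handling the drift recursion in the non-isotropic preconditioned geometry: because the perturbation $H_r^{-1/2}u_{r,k}$ is no longer rotationally symmetric, drift bounds cannot simply pull out $d$ but must route through $\bar\phi$ and $\bar\rho$, and the whitening structure from Section 4.1 must be applied consistently to every second-moment term (including cross terms between the stochastic-gradient noise and the ZO perturbation). Once the drift lemma is in place, the remainder is bookkeeping: substitute the drift bound into the descent inequality, sum over $k = 0,\dots,\tau-1$ and $r = 0,\dots,R-1$, telescope $F(\bar{x}_{r,k})$, divide by $\eta\tau R$, and collect the $O(\mu)$ contributions from the finite-difference approximation into the trailing $O(\eta\mu)$ term to recover the theorem exactly as stated.
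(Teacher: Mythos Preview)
Your overall architecture (descent inequality on $\bar x_{r,k}$, cross-term splitting, consensus/drift lemma, telescoping) matches the paper, but there is a genuine gap in how you derive the $\bar\rho$ term, and as written your plan will not recover the theorem as stated.

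You start from the standard $L$-smoothness descent inequality with second-order term $\tfrac{\eta^2 L}{2}\mathbb{E}\|\bar\Delta_{r,k}\|^2$. But $\mathbb{E}\|\bar\Delta_{r,k}\|^2$ is an \emph{unweighted} second moment: writing $\Delta = z z^\top \nabla f + O(\mu)$ with $z\sim\mathcal N(0,H_r^{-1})$, the fourth-moment identity $\mathbb{E}\,z z^\top W z z^\top = \Tr(WH_r^{-1})H_r^{-1} + 2H_r^{-1}WH_r^{-1}$ applied with $W=I$ yields $\mathbb{E}\|\Delta\|^2 \le (\Tr(H_r^{-1}) + 2\|H_r^{-1}\|)\,\|\nabla f\|_{H_r^{-1}}^2 = \phi_r\,\|\nabla f\|_{H_r^{-1}}^2$. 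Hence your quadratic term would contribute $L\bar\phi$, not $\bar\rho$. The local Hessian $\Sigma_{r,k}$ never appears in $\mathbb{E}\|\Delta\|^2$, so your claim that ``this is exactly where $\bar\rho$ enters'' cannot be right; you would end up with $\frac{16\eta L\bar\phi}{\beta_\ell m}(\sigma_G^2+\sigma_s^2)$ in place of the $\bar\rho$ term, which both fails to match the theorem and forfeits the $L$-independence highlighted in the corollaries.

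The paper's fix is to replace the global-$L$ descent inequality by the pointwise Hessian upper bound
\[
F(\bar x_{k+1}) \le F(\bar x_k) + \langle \nabla F(\bar x_k),\, \bar x_{k+1}-\bar x_k\rangle + \tfrac{1}{2}(\bar x_{k+1}-\bar x_k)^\top \Sigma_k (\bar x_{k+1}-\bar x_k),
\]
so that the quadratic term becomes $\eta^2\,\mathbb{E}[\widehat{\nabla\f}^\top z_k z_k^\top \Sigma_k z_k z_k^\top \widehat{\nabla\f}]$. Now the fourth-moment identity is applied with $W=\Sigma_k$, giving $(\Tr(H_r^{-1/2}\Sigma_k H_r^{-1/2}) + 2\|H_r^{-1/2}\Sigma_k H_r^{-1/2}\|)\,\|\widehat{\nabla\f}\|_{H_r^{-1}}^2 = \rho_k\,\|\widehat{\nabla\f}\|_{H_r^{-1}}^2$, which is precisely how $\bar\rho$ enters. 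The paper also treats the aggregation iteration $k=r\tau$ separately from local-update iterations, because the client-sampling vector $w_k$ introduces an additional term $\mathbb{E}\,(\hat x_k-\bar x_k)^\top \Sigma_k(\hat x_k-\bar x_k)\le \tfrac{L}{mM}\|\x_k-\bar x_k\one^\top\|_F^2$ that your single-case descent step does not account for. Your drift lemma and the role of $\bar\phi$ there are essentially right; the problem is only in the descent step, but it is the step that carries the main novelty of the bound.
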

Roughly, $\bar{\rho}$ can be understood as the sum of whitening Hessian eigenvalues and $\bar{\phi}$ as the sum of approximate Hessian eigenvalues.
$\bar{\rho}$ consist of two parts: 1) $\Tr(H_r^{-1/2}\Sigma_{r,k} H_r^{-1/2})$ is the quantity discussed previously, 2) $\|H_r^{-1/2}\Sigma_{r,k} H_r^{-1/2}\|$, typically, is much smaller than the first term when the model dimension $d$ is large.
The properties of the terms in $\bar{\phi}$ are similar to $\bar{\rho}$.

\begin{corollary}[Convergence Rate for {\alg}] 
{\rm Suppose the learned global Hessian $H_r$ satisfies the well-approximated condition (\ref{a well-approximate matrix of Hessian}). 
When $\tau=1$ and $\eta=\sqrt{m\beta_\ell/\bar{\rho}R}$, {\alg}'s convergence rate is 
$\mathcal{O} (\sqrt{d/mR} )$. 
Further, if the Hessian exhibits the low-effective rank property, the rate can be further improved to $\mathcal{O} (\sqrt{\zeta/mR} )$ independent of the model dimension $d$ and the Lipschitz constant $L$.}
\end{corollary}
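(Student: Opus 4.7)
The plan is to derive this corollary directly from Theorem~\ref{theorem-main} by substituting the specific choices $\tau=1$ and $\eta=\sqrt{m\beta_\ell/(\bar\rho R)}$, and then controlling $\bar\rho$ under each of the two structural assumptions. Since Theorem~\ref{theorem-main} already carries most of the heavy lifting, the work for the corollary is essentially (i) algebraic balancing of the three terms and (ii) invoking the well-approximated condition~(\ref{a well-approximate matrix of Hessian}).

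First I would set $\tau=1$ in the main bound. The ``extra client drift term'' carries a prefactor of $(\tau-1)^2$ and therefore vanishes identically, leaving
\begin{align*}
\frac{1}{R}\sum_{r=0}^{R-1}\Ex\|\nabla F(\barx_{r,0})\|^2_{H_r^{-1}} \;\leq\; \frac{4(F(\barx_1)-F^\star)}{\eta R} + \frac{16\eta\bar\rho}{\beta_\ell m}(\sigma_G^2+\sigma_s^2) + O(\eta\mu).
\end{align*}
Next, I would verify that $\eta=\sqrt{m\beta_\ell/(\bar\rho R)}$ meets the step-size constraints of Theorem~\ref{theorem-main} for sufficiently large $R$: the $(\tau-1)$ constraint is moot, and $\eta\leq \beta_\ell/(mL)$ and $\eta\leq 1/(8\rho_k)$ hold once $R$ exceeds an $R_0$ depending on $L,\beta_\ell,\beta_u,m,\bar\rho$. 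Substituting this $\eta$ balances the first and third terms and yields $\mathcal{O}\bigl(\sqrt{\bar\rho/(\beta_\ell m R)}\bigr)$, while $O(\eta\mu)$ is driven to lower order by choosing $\mu$ small enough. A one-line conversion $\|v\|^2\leq \beta_u\|v\|^2_{H_r^{-1}}$ from Assumption~\ref{assumption.bounded-learned-hessian} recovers the bound in the standard Euclidean norm up to a constant factor.

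The remaining step is to bound $\bar\rho$ via condition~(\ref{a well-approximate matrix of Hessian}). By definition, $\bar\rho$ is an average of $\Tr(\Xi_{r,k})+2\|\Xi_{r,k}\|$ where $\Xi_{r,k}:=H_r^{-1/2}\Sigma_{r,k}H_r^{-1/2}$; since $\Xi_{r,k}$ is positive semi-definite we have $\|\Xi_{r,k}\|\leq\Tr(\Xi_{r,k})$, so the trace bound controls both summands. Under plain $L$-smoothness the well-approximated condition gives $\Tr(\Xi_{r,k})\leq 2d$, hence $\bar\rho=O(d)$ and the rate becomes $\mathcal{O}(\sqrt{d/(mR)})$. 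Under the low-effective-rank assumption, the same condition gives $\Tr(\Xi_{r,k})\leq \zeta$ with $\zeta$ independent of $d$ and $L$; then $\bar\rho=O(\zeta)$ and the rate sharpens to $\mathcal{O}(\sqrt{\zeta/(mR)})$, dimension-free and smoothness-free as claimed.

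The main obstacle I anticipate is not analytical but a bookkeeping subtlety: the prescribed step size $\eta$ is stated in terms of $\bar\rho$, which itself depends on the entire trajectory of iterates and learned Hessians $H_r$ and so is not known a priori. The cleanest fix is to interpret the corollary as a worst-case rate under the well-approximated assumption and to plug the a priori bound ($2d$ or $\zeta$) in place of $\Tr(\Xi_{r,k})$ when tuning $\eta$, which makes the schedule implementable. I would also take care that $\beta_\ell$, $\beta_u$, $\sigma_G^2+\sigma_s^2$, and $F(\barx_1)-F^\star$ are all absorbed into the hidden constant of $\mathcal{O}(\cdot)$, matching the convention used in the corollary statement.
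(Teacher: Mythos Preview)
Your proposal is correct and follows essentially the same route as the paper: set $\tau=1$ so the drift term disappears, plug $\eta=\sqrt{m\beta_\ell/(\bar\rho R)}$ into the two remaining terms of Theorem~\ref{theorem-main} to get $\mathcal{O}(\sqrt{\bar\rho/mR})$, and then invoke condition~(\ref{a well-approximate matrix of Hessian}) to bound $\bar\rho$ by $O(d)$ or $O(\zeta)$. Your treatment is in fact a bit more careful than the paper's---you explicitly check the step-size feasibility for large $R$, use $\|\Xi\|\le\Tr(\Xi)$ to handle the spectral-norm piece of $\bar\rho$, and flag the circularity in tuning $\eta$ via $\bar\rho$---whereas the paper simply appeals to Table~\ref{tab:zo-upper-bound}; note also that the paper leaves the final bound in the $H_r^{-1}$-weighted norm, so your Euclidean conversion step, while harmless, is not needed to match the stated corollary.
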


\begin{corollary}[Convergence Rate for DeComFL] 
{\rm Note that DeComFL \citep{li2024achieving} can be regarded as a special case of {\alg} with $H_r \equiv I, \forall r$ and $\beta_\ell=\beta_u=1$. 
Therefore, we can immediately recover the convergence rate of DeComFL with $\tau=1$ is $\mathcal{O} (\sqrt{Ld/mR})$ with standard assumptions or $\mathcal{O} (\sqrt{L\kappa/mR})$ with the extra low-effective rank phenomenon.}
\end{corollary}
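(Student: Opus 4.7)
The plan is to derive this corollary as an immediate specialization of Theorem~\ref{theorem-main} together with the preceding Corollary. Setting $H_r\equiv I$ throughout the algorithm forces $\beta_\ell=\beta_u=1$; the preconditioned norm on the left-hand side collapses to the Euclidean norm since $\|\nabla F(\barx_{r,k})\|^2_{H_r^{-1}}=\|\nabla F(\barx_{r,k})\|^2$; and because DeComFL uses a single local step ($\tau=1$), the entire client-drift term in the bound vanishes, leaving only the initial gap, the $\bar{\rho}$-variance term, and the $O(\eta\mu)$ bias.

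Next, I would evaluate the two spectral quantities under $H_r=I$. The approximate-Hessian quantity becomes $\bar{\phi}=\Tr(I)+2\|I\|=d+2$, while the whitening quantity reduces to $\bar{\rho}=\frac{1}{R}\sum_r(\Tr(\Sigma_r)+2\|\Sigma_r\|)$. Under the generic $L$-smoothness assumption, $\Tr(\Sigma_r)\leq Ld$ and $\|\Sigma_r\|\leq L$ give $\bar{\rho}\leq L(d+2)=\Theta(Ld)$. Under the low-effective-rank assumption, the tighter bound $\Tr(\Sigma_r)\leq L\kappa$ from the definition in Section~\ref{subsec.hessian.assumption} yields $\bar{\rho}\leq L(\kappa+2)=\Theta(L\kappa)$. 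I would then substitute into Theorem~\ref{theorem-main} with $\tau=1$ and choose $\eta=\sqrt{m/(\bar{\rho} R)}$, which is the prescription of the preceding corollary specialized to $\beta_\ell=1$. Balancing the $1/(\eta R)$ term against the $\eta\bar{\rho}/m$ term yields an overall rate of $\mathcal{O}(\sqrt{\bar{\rho}/(mR)})$, and substituting the two bounds on $\bar{\rho}$ recovers the claimed $\mathcal{O}(\sqrt{Ld/(mR)})$ and $\mathcal{O}(\sqrt{L\kappa/(mR)})$.

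There is essentially no hard obstacle here: the low-effective-rank acceleration is already captured by $\bar{\rho}$ inside the master theorem and requires no separate treatment for DeComFL. The only routine check is that the chosen $\eta=\sqrt{m/(\bar{\rho} R)}$ satisfies the step-size constraints of Theorem~\ref{theorem-main}, which holds for sufficiently large $R$ under the assumed scaling. Conceptually, the key observation is that the substitution $H_r=I$ reduces $H_r^{-1/2}\Sigma_r H_r^{-1/2}$ to $\Sigma_r$ itself, so the whitening-based variance accounting from Section~\ref{subsec.hessian.assumption} degrades gracefully to an $L\kappa$-dependent rate, confirming that DeComFL is exactly the identity-preconditioner special case of {\alg}'s analysis.
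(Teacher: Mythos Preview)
Your proposal is correct and follows essentially the same route as the paper: specialize Theorem~\ref{theorem-main} with $H_r\equiv I$, $\beta_\ell=1$, $\tau=1$ so the client-drift term vanishes, evaluate $\bar{\rho}=\frac{1}{R}\sum_r(\Tr(\Sigma_r)+2\|\Sigma_r\|)$ under either the $L$-smoothness bound or the low-effective-rank bound from Section~\ref{subsec.hessian.assumption}, and then plug $\eta=\sqrt{m/(\bar{\rho}R)}$ to balance the two remaining terms. The paper's appendix does exactly this (listing the four resulting rates as cases), and your remark about the step-size constraint holding for large $R$ matches the paper's own caveat.
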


\begin{corollary}[Convergence Rate for $\tau>1$ case]
{\rm
When the local update step $\tau>1$, the difference between {\alg} and DeComFL becomes bigger. 
Under the well-approximate and low whitening rank scenario, the convergence rate of {\alg} is $\mathcal{O} (\sqrt{\zeta/\tau mR} )+\mathcal{O} (\sqrt{\tau\kappa/mR} )$, still independent of the model dimension $d$ and Lipschitz condition $L$; meanwhile, DeComFL becomes dependent on $d$ again. 
This resolved the previous open question that DeComFL \citep{li2024achieving} cannot provide the convergence rate with a low-effective rank assumption when $\tau>1$. 
See Appendix \ref{appendix.conv.rate} for details.
}
\end{corollary}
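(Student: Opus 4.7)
The plan is to derive Corollary~3 by specializing Theorem~\ref{theorem-main} to the case $\tau>1$ and bounding the two spectral quantities $\bar\rho$ and $\bar\phi$ using the well-approximate and low-effective-rank assumptions of Section~\ref{subsec.hessian.assumption}. I would start by rewriting Theorem~\ref{theorem-main} in the compact form
\begin{align*}
\frac{1}{\tau R}\sum_{r,k}\Ex\|\nabla F(\bar{x}_{r,k})\|^2_{H_r^{-1}} \leq \frac{4A}{\eta\tau R} + \eta D_{\mathrm{drift}} + \eta D_{\mathrm{var}} + O(\eta\mu),
\end{align*}
with $A := F(\bar{x}_1)-F^\star$, $D_{\mathrm{drift}} := \tfrac{32(\tau-1)^2 L\bar\phi}{\beta_\ell\tau m}(\sigma_G^2+\sigma_s^2)$, and $D_{\mathrm{var}} := \tfrac{16\bar\rho}{\beta_\ell m}(\sigma_G^2+\sigma_s^2)$, and treat the two noise contributions separately before applying AM-GM.

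For the variance term, the low-whitening-rank instance of \eqref{a well-approximate matrix of Hessian} gives $\Tr(H_r^{-1/2}\Sigma_{r,k}H_r^{-1/2})\leq \zeta$, while the companion spectral-norm term is of order $L/\beta_\ell$ and absorbed as a lower-order constant, yielding $\bar\rho = O(\zeta)$ and hence $D_{\mathrm{var}} = O(\zeta/m)$ after absorbing $\beta_\ell$ via Assumption~\ref{assumption.bounded-learned-hessian}. For the drift term, I would exploit that the diagonal preconditioner of Section~\ref{subs.learn.hessian} regularizes near-zero Hessian directions with a floor tied to $L$, so that $\Tr(H_r^{-1})$ contributes on the order of $\kappa/L$ along the effective directions of $\Sigma$ and $O(1/L)$ along the remaining ones, collapsing the product $L\bar\phi$ to the effective-rank scale $\kappa$ rather than to $d$. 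Combined with $(\tau-1)^2/\tau \leq \tau$, this gives an effective drift coefficient of order $\tau\kappa/m$.

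Finally, I would apply the separation $A/(\eta\tau R)+\eta(D_{\mathrm{drift}}+D_{\mathrm{var}}) \geq 2\sqrt{AD_{\mathrm{drift}}/(\tau R)} + 2\sqrt{AD_{\mathrm{var}}/(\tau R)}$ via $\sqrt{a+b}\leq\sqrt{a}+\sqrt{b}$ together with AM-GM on each piece, and tune $\eta = \Theta(\sqrt{Am/(\tau R(\zeta+\tau^2\kappa))})$ to realize the balance. Reading off the two contributions gives the claimed rate
\begin{align*}
\frac{1}{\tau R}\sum_{r,k}\Ex\|\nabla F(\bar{x}_{r,k})\|^2_{H_r^{-1}} = O\!\left(\sqrt{\tfrac{\zeta}{\tau mR}}\right) + O\!\left(\sqrt{\tfrac{\tau\kappa}{mR}}\right),
\end{align*}
with the residual $O(\eta\mu)$ killed by choosing $\mu = O(1/\sqrt{R})$, and passage from $\|\cdot\|_{H_r^{-1}}$ to the usual squared gradient norm costing only a constant factor via Assumption~\ref{assumption.bounded-learned-hessian}. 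Contrasting with DeComFL, where $H_r\equiv I$ forces $\bar\phi=\Theta(d)$ and the drift term reverts to a $d$-dependent rate, isolates the role played by the learned Hessian and resolves the open question cited in the statement.

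The main obstacle I expect is the dimension-free control of $L\bar\phi$, because $\Tr(H_r^{-1})$ is naively $\Theta(d/\epsilon)$ once $H_r$ inherits many near-zero eigenvalues from $\Sigma$. Making this rigorous requires a careful spectral decomposition of the diagonal preconditioner that separates effective Hessian directions from nearly flat ones, together with the observation that the drift contribution along flat directions is proportionally down-weighted by the corresponding Lipschitz constants; the low-effective-rank property of $\Sigma$ is the essential ingredient that allows these two effects to cancel at the level of $L\bar\phi$. A secondary consistency check is ensuring the chosen $\eta = \Theta(1/\sqrt{R})$ respects the upper bound $\eta \leq \tfrac{\beta_\ell}{4(\tau-1)}\sqrt{1/(L(d+2))}$ imposed by Theorem~\ref{theorem-main}, which requires $R$ to exceed an $L$- and $d$-dependent threshold before the dimension-free rate takes effect.
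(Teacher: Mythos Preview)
Your plan follows the same route as the paper: specialize Theorem~\ref{theorem-main} to $\tau>1$, control $\bar\rho$ via the low-whitening-rank bound \eqref{a well-approximate matrix of Hessian} to get the $\sqrt{\zeta/(\tau mR)}$ term, control the drift through $\bar\phi$ to get the $\sqrt{\tau\kappa/(mR)}$ term, and contrast with DeComFL where $H_r\equiv I$ forces $\bar\phi=d+2$. The only cosmetic difference is in the learning-rate choice: the paper takes $\eta=\min\bigl(\sqrt{m\beta_\ell/(\tau\bar\rho R)},\,\sqrt{m\beta_\ell/(\tau\bar\phi R)}\bigr)$ and reads off the two residues directly, rather than your single-$\eta$ AM--GM balance; these are equivalent up to constants.

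You are right to single out the reduction $L\bar\phi\to\kappa$ as the crux. The paper's appendix simply asserts that under the well-approximation and low-effective-rank hypotheses the consensus residue becomes $\cO(\sqrt{\tau\kappa/(mR)})$, without spelling out how $\Tr(H_r^{-1})$ (naively $\Theta(d/\epsilon)$, as you note) is tamed. So your proposed spectral-splitting heuristic is not something the paper proves but rather something it takes for granted as part of the ``well-approximated'' premise; you should treat that premise as a hypothesis on $\bar\phi$ directly rather than try to derive it from the EMA construction. Your secondary observation about the $d$-dependent learning-rate ceiling in Theorem~\ref{theorem-main} is also valid and likewise not addressed in the paper; it does mean the stated rate is asymptotic in $R$.
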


\textbf{Remarks about well-approximated condition.} 
It is important to note that Theorem 1 does not require the well-approximated condition. This condition is only necessary for the three preceding corollaries, which utilize it to establish clean convergence rates. 
Although it is hard to determine if this approximation holds in the context of LLMs, the assumption offers a plausible explanation for the rapid convergence often observed in practice (where the required iterations are much smaller than $d$). 
If $H_r$ fails to yield an effective Hessian approximation, the performance of {\alg}, at worst case, degenerates into DeComFL. We provide more discussion on this point in Appendix~\ref{app.well-approx-hessian}.

\section{Experiments}
\textbf{The Global Diagonal Hessian Approximation $H$.} 
We begin by training a simple CNN model on MNIST \citep{lecun1998gradient} to visualize the learned diagonal Hessian approximation $H$. 
We set up a 64-client FL environment where data was partitioned non-IID using a Dirichlet distribution ($\alpha=1$). Each communication round involved randomly sampling 8 clients for training. 
Evaluating the Hessian smoothing parameter $\nu$ revealed negligible impact on convergence and final accuracy (Fig.~\ref{fig:mnist-hessian}, left), demonstrating the algorithm's robustness to this hyperparameter. 
Furthermore, Fig.~\ref{fig:mnist-hessian} (right) plots each entry of the learned diagonal Hessian values at the end of training. While individual entries may appear stochastic, their overall distribution clearly exhibits a long-tail phenomenon. 
This observation aligns with the low effective rank assumption discussed in Sec.~\ref{subsec.hessian.assumption}. 
Although computing the exact Hessian is computationally prohibitive, the rapid convergence combined with this observed distribution suggests our strategy effectively approximates relevant Hessian structure. 
We design more experiments and provide more direct evidences in Appendix~\ref{app.diff.nu}.

\begin{figure}[!tbh]
    \centering
    \includegraphics[width=0.38\linewidth]{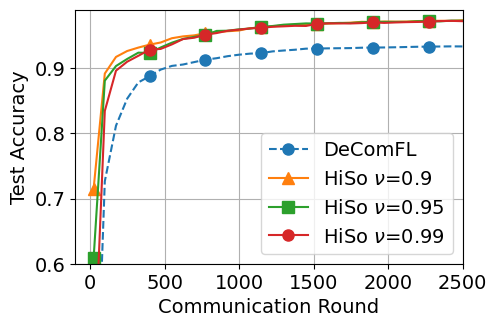}\hspace{-2mm}
    \includegraphics[width=0.47\linewidth]{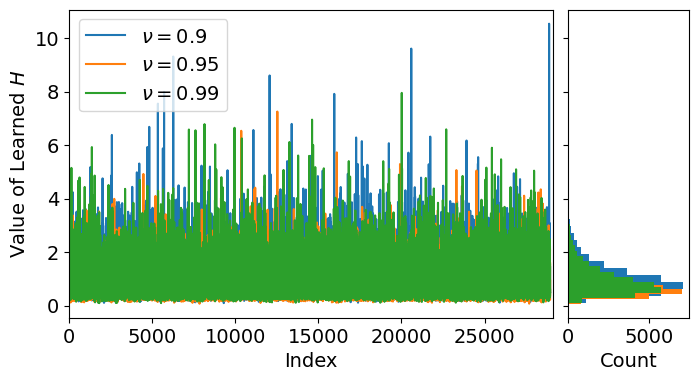}\vspace{-3.2mm}
    \caption{\small Ablation study of smoothing parameter $\nu$ and the distribution of the learned global Hessian $H$.}
    \label{fig:mnist-hessian}
\end{figure}

\textbf{{\alg} is Faster Than DeComFL in Small Model Training Tasks.} 
In Fig.~\ref{fig:mnist-hessian}, we evaluate {\alg} against another dimension-free communication FL method - DeComFL. 
Crucially, the communication cost per round are the same for both to ensure a fair comparison of algorithmic efficiency. 
Fig.~\ref{fig:mnist-hessian} shows that, under the same communication constraints, our {\alg} achieves significantly faster convergence and reaches a superior final performance level compared to DeComFL. 
For this comparison, both were tuned using their optimal learning rates. 
More comparison is provided in Appendix~\ref{app.extra_exp}.

\textbf{LLM Fine-Tuning Task Setup:} 
Our FL system consists of 6 clients in total, and 2 clients are uniformly sampled in each round. 
To comprehensively evaluate {\alg}'s performance, we execute sentiment classification on SST-2 \citep{socher2013recursive}, question matching on QQP, and question answering on SQuAD \citep{rajpurkar2016squad}. 
We set $P=5$ for all ZO methods. 

\textbf{{\alg} Accelerates Training with Less Communication Cost in LLM Fine-Tuning.}   
In Table~\ref{table: decomfl vs ours}, {\alg} consistently reduces communication rounds required to reach DeComFL’s best test accuracy, resulting in lower communication cost. 
Specifically, {\alg} achieves 1.4-5.4× speedup with 29\%-80\% communication savings across all tasks. 
These results show that {\alg} accelerates convergence and reduces communication cost, making it more practical for large-scale FL scenarios involving LLMs. 

\begin{table}[!tbh]
    \centering
    \caption{{\alg}'s Acceleration. 
    {\small For DeComFL, we report the total number of communication rounds required to fully converge. 
    For {\alg}, we report the number of rounds needed to match DeComFL’s best test accuracy, along with the corresponding communication cost.} 
    } 
    \label{table.llm}
    \resizebox{1.0\linewidth}{!}{
    \begin{tabular}{c|c|ccc|ccc|ccc}
    \toprule
    \multirow{2}{*}{\textbf{Model}} & \multirow{2}{*}{\textbf{Method}} & \multicolumn{3}{c|}{\textbf{SST-2}} & \multicolumn{3}{c|}{\textbf{QQP}} & \multicolumn{3}{c}{\textbf{SQuAD}} 
    \\
    \cline{3-11}
    & & \textbf{Round} & \textbf{Speedup} & \textbf{Comm. Cost} & \textbf{Round} & \textbf{Speedup} & \textbf{Comm. Cost} & \textbf{Round} & \textbf{Speedup} & \textbf{Comm. Cost} 
    \\
    \midrule
    \multirow{2}{*}{OPT-350M} & DeComFL & 550 & 1$\times$ & 21.56 KB & 775 & 1$\times$ & 30.35 KB & 1350 & 1$\times$ & 52.73 KB 
    \\
    & {\alg} & 275 & 2$\times$ & 10.78 KB & 425 & 1.8$\times$ & 16.64 KB & 250 & 5.4$\times$ & 9.77 KB 
    \\
    \midrule
    \multirow{2}{*}{OPT-1.3B} & DeComFL & 1500 & 1$\times$ & 58.59 KB & 1125 & 1$\times$ & 43.95 KB & 350 & 1$\times$ & 13.67 KB \\
    & {\alg} & 1075 & 1.4$\times$ & 41.85 KB & 750 & 1.5$\times$ & 29.30 KB & 175 & 2$\times$ & 6.84 KB\\
    \midrule
    \multirow{2}{*}{OPT-2.7B} & DeComFL & 1250 & 1$\times$ & 41.58 KB & 1475 & 1$\times$ & 48.75 KB & 450 & 1$\times$ & 15.65 KB \\
    & {\alg} & 775 & 1.6$\times$ & 26.21 KB & 975 & 1.5$\times$ & 32.11 KB & 200 & 2.3$\times$ & 6.94 KB\\
    \bottomrule
    \end{tabular}
    }
    \label{table: decomfl vs ours}
\end{table} 

\textbf{Extensive Baseline Comparison on LLM Fine-Tuning Tasks.} 
In Table~\ref{table:performance for llm fine-tuning}, first-order methods (e.g., FedAvg, FedAdam, FedYogi and FedAdagrad) consistently achieve high test accuracy, but at the cost of TB-level communication volumes, which is quite challenging for real-world federated fine-tuning. 
As for ZO baselines, FedZO's communication cost is also quite high due to transmitting $d$-dimensional update. 
Although DeComFL achieves several orders of magnitude communication reduction, its cost is still higher than {\alg} as it suffers from more rounds due to extremely slow convergence. 
Our proposed {\alg} not only maintains the lowest communication cost in almost all tasks (only a little higher than DeComFL on OPT-1.3B+QQP) but also consistently outperforms ZO baselines in test accuracy.

\begin{table}[!tbh]
    \centering
    \caption{Performance for LLM Fine-Tuning. 
    {\small 1) We report the total communication cost of the single client during the entire training process until convergence, test accuracy for SST-2 and QQP, F1 score for SQuAD.} } 
    \label{table:performance for llm fine-tuning}
    \resizebox{1.0\linewidth}{!}{
    \begin{tabular}{c|c|c|c|c}
    \toprule
    \textbf{Model} & \textbf{Method} & \textbf{SST-2} & \textbf{QQP} & \textbf{SQuAD} 
    \\
    \midrule
    \multirow{7}{*}{OPT-125M} & FedAvg & 87.63\% $\pm$ {\small{0.16}} (0.15 TB) & 61.21\% $\pm$ {\small{0.37}} (0.08 TB) & 37.27 $\pm$ {\small{0.11}} (0.05 TB) \\
    & FedAdam & 88.29\% $\pm$ {\small{0.47}} (0.30 TB) & 63.18\% $\pm$ {\small{0.31}} (0.06 TB) & 37.98 $\pm$ {\small{0.20}} (0.03 TB) \\
    & FedYogi & 88.06\% $\pm$ {\small{0.33}} (0.29 TB) & 62.88\% $\pm$ {\small{0.21}} (0.05 TB) & 37.66 $\pm$ {\small{0.18}} (0.04 TB) \\
    & FedAdagrad & 85.04\% $\pm$ {\small{0.51}} (0.18 TB) & 61.77\% $\pm$ {\small{0.22}} (0.06 TB) & 37.29 $\pm$ {\small{0.27}} (0.04 TB) \\
    & FedZO & 84.19\% $\pm$ {\small{0.22}} (0.63 TB) & 60.06\% $\pm$ {\small{0.21}} (1.94 TB) & 34.03 $\pm$ {\small{0.26}} (0.14 TB) \\
    & DeComFL & 85.21\% $\pm$ {\small{0.27}} (22.92 KB) & 60.11\% $\pm$ {\small{0.19}} (32.17 KB) & 34.12 $\pm$ {\small{0.22}} (17.42 KB) \\
    & \cellcolor{red!20} {\alg} (Ours) & \cellcolor{red!20} 85.55\% $\pm$ {\small{0.21}} (14.69 KB) & \cellcolor{red!20} 60.72\% $\pm$ {\small{0.25}} (21.23 KB) & \cellcolor{red!20} 35.26 $\pm$ {\small{0.14}} (7.12 KB) \\
    \midrule
    \multirow{7}{*}{OPT-350M} & FedAvg & 89.79\% $\pm$ {\small{0.05}} (0.58 TB) & 63.32\% $\pm$ {\small{0.13}} (0.31 TB) & 43.38 $\pm$ {\small{0.13}} (0.12 TB) \\
    & FedAdam & 89.92\% $\pm$ {\small{0.20}} (0.21 TB) & 63.28\% $\pm$ {\small{0.19}} (0.28 TB) & 45.92 $\pm$ {\small{0.14}} (0.08 TB) \\
    & FedYogi & 89.68\% $\pm$ {\small{0.29}} (0.25 TB) & 63.21\% $\pm$ {\small{0.16}} (0.28 TB) & 45.01 $\pm$ {\small{0.25}} (0.09 TB) \\
    & FedAdagrad & 87.42\% $\pm$ {\small{0.09}} (0.23 TB) & 62.55\% $\pm$ {\small{0.14}} (0.29 TB) & 44.49 $\pm$ {\small{0.11}} (0.09 TB)\\
    & FedZO & 86.55\% $\pm$ {\small{0.23}} (0.68 TB) & 61.22\% $\pm$ {\small{0.30}} (0.66 TB) & 38.14 $\pm$ {\small{0.24}} (0.38 TB) \\
    & DeComFL & 86.72\% $\pm$ {\small{0.28}} (21.56 KB) & 60.58\% $\pm$ {\small{0.16}} (30.35 KB) & 38.20 $\pm$ {\small{0.15}} (52.73 KB) 
    \\
    & \cellcolor{red!20} {\alg} (Ours) & \cellcolor{red!20} 87.50\% $\pm$ {\small{0.22}} (17.33 KB) & \cellcolor{red!20} 62.49\% $\pm$ {\small{0.17}} (18.63 KB) & \cellcolor{red!20} 39.13 $\pm$ {\small{0.11}} (20.51 KB) 
    \\
    \midrule
    \multirow{7}{*}{OPT-1.3B} & FedAvg & 90.48\% $\pm$ {\small{0.35}} (0.63 TB) & 65.77\% $\pm$ {\small{0.20}} (0.32 TB) & 60.39 $\pm$ {\small{0.27}} (0.41 TB) \\
    & FedAdam & 92.86\% $\pm$ {\small{0.43}} (0.79 TB) & 64.59\% $\pm$ {\small{0.53}} (1.10 TB) & 61.56 $\pm$ {\small{0.14}} (0.27 TB) \\
    & FedYogi & 92.39\% $\pm$ {\small{0.58}} (0.83 TB) & 64.44\% $\pm$ {\small{0.22}} (1.12 TB) & 61.44 $\pm$ {\small{0.19}} (0.29 TB) \\
    & FedAdagrad & 90.92\% $\pm$ {\small{0.74}} (0.88 TB) & 64.05\% $\pm$ {\small{0.13}} (1.08 TB) & 60.72 $\pm$ {\small{0.23}} (0.33 TB) \\
    & FedZO & 90.01\% $\pm$ {\small{0.29}} (4.73 TB) & 62.91\% $\pm$ {\small{0.14}} (3.53 TB) & 57.26 $\pm$ {\small{0.17}} (1.10 TB) \\ 
    & DeComFL & 90.22\% $\pm$ {\small{0.10}} (58.59 KB) & 63.25\% $\pm$ {\small{0.11}} (43.95 KB) & 57.14 $\pm$ {\small{0.14}} (13.67 KB) \\
    & \cellcolor{red!20} {\alg} (Ours) & \cellcolor{red!20} 90.34\% $\pm$ {\small{0.12}} (49.18 KB) & \cellcolor{red!20} 64.20\% $\pm$ {\small{0.13}} (96.67 KB) & \cellcolor{red!20} 57.58 $\pm$ {\small{0.07}} (7.81 KB) \\
    \bottomrule
    \end{tabular}
    }
\end{table}

In Appendix~\ref{app.extra_exp}, we present additional experimental results, along with comparisons and analyses of memory cost, communication cost, computation time, and other FL+PEFT baselines.

\subsubsection*{Acknowledgment}
This work is supported in part by RIT CHAI Faculty Seed Grant, NIH award R16GM159671 and NSF grant CNS-2112471. 
The content is solely the responsibility of the authors and does not necessarily represent the official views of the funding agencies.

\newpage
\bibliography{references}
\bibliographystyle{iclr2026_conference}

\newpage
\appendix
\section*{Appendix}

\startcontents[appendices]
\printcontents[appendices]{}{0}{}

\section{Statement of LLM Usage}
LLM tools were used to help with language refinement and stylistic improvements. 
After each use of the tool, we carefully reviewed and validated the correctness and appropriateness of the generated text to ensure accuracy and alignment with the intended meaning. 

\section{Conclusion and Limitations}
In conclusion, we first present a generalized FL framework that supports scalar-only communication in both uplink and downlink, enabling the integration of a broader class of optimization algorithms beyond vanilla ZO-SGD. 
Building on this foundation, we propose {\alg}, a Hessian-informed federated optimization algorithm that leverages diagonal Hessian approximations to accelerate convergence while preserving scalar-only communication efficiency without the demand to transmit any second-order information. 
From a theoretical perspective, we introduce a novel variance characterization for Hessian-informed zeroth-order gradients under a low-effective-rank assumption. 
This allows us to establish a convergence rate that is independent of both model dimensionality and function smoothness in non-convex settings - a result not previously achieved by any ZO method in FL.
Our analysis further generalizes the previous framework and extends its theoretical guarantees to support multiple local updates, a critical component in practical FL deployments. 
The analysis offers a plausible explanation for the
observed phenomenon of ZO convergence being much faster than its worst case. 
Empirically, {\alg} consistently outperforms existing baselines, delivering higher test accuracy, up to about 5$\times$ faster convergence, and substantially lower communication overhead. 
These results demonstrate the practical viability and theoretical soundness of unifying curvature-informed optimization with scalar-only communication in federated fine-tuning. 

\textbf{Limitations:} 
The proposed method is currently limited by its treatment of the loss function $f_i$ as a generic one, without considering model-specific module structures. This is in contrast to modern parameter-efficient fine-tuning (PEFT) methods that often exploit properties like low-rank decomposition (e.g., $W=AB\tran$, where $A\in \real^{k_1\times r}$ and $B\in \real^{k_2\times r}$ and $r \ll k_1, k_2$).
It is important to note that this explicit low-rank decomposition is distinct from the `low effective rank' of the Hessian discussed in this paper. 
Consequently, there is potential to further refine our approach by designing Hessian information specifically tailored for PEFT methods such as LoRA \citep{hu2021lora} or GaLore \citep{zhao2024galore}.

\section{Comparison of Related Works}
\label{sec:Comparison of Related Works}

From Table~\ref{table: related work}, we observe that {\alg} achieves the best convergence rate among all ZO-FL related works (e.g., FedZO, BAFFLE, DeComFL) and is the first work to provide the rigorous convergence proof under the low effective rank assumption and supporting the $\tau>1$ case at the same time. 
Compared with first-order related works, {\alg} achieves significant communication improvement. 

\begin{table}[!tbh]
    \centering
    \caption{Comparison of Related Work. 
    $d$ is the model dimension. 
    $m$ is the number of sampled clients per round. 
    $P$ is the number of perturbations. 
    $R$ is the number of rounds. 
    $\zeta$ is the low whitening rank of Hessian. 
    $\kappa$ is the low effective rank of the Hessian. 
    "LER" means the low effective rank assumption. 
    "DF" mean dimension-free. 
    "Proof on $\tau>1$?" means whether the algorithm provides theoretical convergence proof under multiple local updates $\tau>1$. 
    } 
    \label{table: related work}
    \resizebox{0.9\linewidth}{!}{
    \begin{tabular}{c|c|c|c|c|c}
    \toprule
    \textbf{Methods} & \textbf{Convergence Rate} & \textbf{LER?} & \textbf{Uplink} & \textbf{Downlink} & \textbf{Proof on $\tau>1$?}\\
    \toprule
    FedAvg \citep{mcmahan2017communication} & $\cO\left(\sqrt{\frac{L}{mR\tau}}\right)$ & \textcolor{red}{\ding{55}} & \textcolor{red}{\ding{55}} DF & \textcolor{red}{\ding{55}} DF & \textcolor{green}{\ding{51}} \\
    \midrule
    FedAdam \citep{reddi2020adaptive} & $\cO\left(\sqrt{\frac{L}{mR\tau}}\right)$ & \textcolor{red}{\ding{55}} & \textcolor{red}{\ding{55}} DF & \textcolor{red}{\ding{55}} DF & \textcolor{green}{\ding{51}} \\
    \midrule
    FedZO \citep{fang2022communication} & $\cO\left(\sqrt{\frac{L d}{mPR\tau}}\right)$ & \textcolor{red}{\ding{55}} & \textcolor{red}{\ding{55}} DF & \textcolor{red}{\ding{55}} DF & \textcolor{green}{\ding{51}} \\
    \midrule
    BAFFLE \citep{feng2024baffle} & $\cO\left(\sqrt{\frac{L d}{mPR\tau}}\right)$ & \textcolor{red}{\ding{55}} & \textcolor{green}{\ding{51}} DF & \textcolor{red}{\ding{55}} DF & \textcolor{green}{\ding{51}} \\
    \midrule
    DeComFL \citep{li2024achieving} & $\cO\left(\sqrt{\frac{L \kappa}{mPR}}\right)$ & \textcolor{green}{\ding{51}} & \textcolor{green}{\ding{51}} DF & \textcolor{green}{\ding{51}} DF & \textcolor{red}{\ding{55}} \\
    \midrule
    {\alg} (This paper) & $\cO\left(\sqrt{\frac{\zeta}{mPR\tau}}\right)$ & \textcolor{green}{\ding{51}} & \textcolor{green}{\ding{51}} DF & \textcolor{green}{\ding{51}} DF & \textcolor{green}{\ding{51}} \\
    \bottomrule
    \end{tabular}
    }
\end{table}

\section{Detailed {\alg} Algorithm Table} \label{sec.appendix.full_alg}

Although the algorithm listed in the main context is quite complicated, it is simple if we ignore the dimension-free communication property. Mathematically, {\alg} is equivalent to the following standard FedAvg style update
\begin{align*}
    x^{(i)}_{r,0} =&\; x_r &\mbox{(Receive Model)} \\[-1mm]
    &\hspace{-10mm}\mbox{for $k=0,1,\cdots, \tau-1$:}\\[-1mm]
    & g^{(i)}_{r,k} = \frac{1}{\mu}\big(f_i(x_{r,k}^{(i)} + \mu H_r^{-1/2}u_{r,k}) - f_i(x_{r,k}^{(i)})\big)\\[-1mm]
    & x^{(i)}_{r,k+1} =\; x_{r,k}^{(i)} - \eta g^{(i)}_{r,k}H_r^{-1/2}u_{r,k} &\mbox{(Local Update)}\\[-1mm]
    x_{r+1} =&\; \frac{1}{|C_r|} \sum_{i\in C_r} x^{(i)}_{r,\tau} &\mbox{(Aggregate Model)} \\[-1mm]
    H_{r+1} =& (1-\nu) H_{r} + \nu {\rm Diag}([x_{r+1} - x_{r}]\odot[x_{r+1} - x_{r}] + \epsilon I)
\end{align*}
With that as reference, we present the full algorithm table for {\alg}.

\begin{algorithm*}[!]
    \caption{Concrete Scalar Representations Communication with States for Federated Learning} \label{alg.cezo-fl}
    \small 
    \begin{algorithmic}[1]
        \STATE \textbf{Initialize}: learning rate $\eta$, local update steps $K$, communication rounds $R$
        \STATE \textbf{Allocate}: memory for recording the necessary historical states, including historical gradient scalars $\{g\}$, corresponding random seeds $\{s\}$ and clients' last participation round $\{r^{\prime}\}$, which are initialized as 0.

        \STATE
        \FOR{$r=0, 1, \cdots, R-1$}
        \STATE Server uniformly samples a client set $C_r$ with cardinality $m$. 
        \STATE Server randomly samples a random seed set $\{ s_{r,k} \}_{k=0}^{\tau-1}$ and broadcasts it to all sampled clients.
        \FOR{each client $i \in C_r$ \textbf{in parallel}}
        \STATE $\{\{\Delta x_t^{(i)}\}_{k=0}^{\tau-1} \}_{t=r'}^{r-1}=$ \textcolor{blue}{Rebuild}($\{\{s_{t,k}^{(i)} \}_{k=0}^{\tau-1} \}_{t=r^{'}_i}^{r-1}, \{\{g_{t,k}^{(i)}\}_{k=0}^{\tau-1}\}_{t=r^{'}_i}^{r-1}$) \vspace{-1mm}
        \STATE  $x^{(i)}_{r,0} = x^{(i)}_{r', 0} - \eta \sum\limits_{t=r'}^{r-1} \sum\limits_{k=0}^{\tau-1} \Delta x_{t,k}^{(i)}$
        \STATE $\{g_{r,k}^{(i)}\}_{k=0}^{\tau-1}=$ \textcolor{purple}{LocalUpdate}($\{ s_{r,k} \}_{k=0}^{\tau-1}$)
        \STATE Send $\{g_{r,k}^{(i)}\}_{k=0}^{\tau-1}$ back to the server.
        \ENDFOR\vspace{-1mm}
        \STATE $\{g_{r,k}\}_{k=0}^{\tau-1} = \left\{ \frac{1}{|\mathcal{C}_r|} \sum\limits_{i \in \mathcal{C}_r} g_{r,k}^{(i)} \right\}_{k=0}^{\tau-1}$ \hfill $\blacktriangleright$ Global gradient scalar aggregation
        \STATE $\{ \Delta x_{r,k} \}_{k=0}^{\tau-1} = \left\{ g_{r,k} H_r^{-1/2} u_{r,k} \right\}_{k=0}^{\tau-1}$ \hfill $\blacktriangleright$ Global $\Delta$ aggregation at server
        \STATE Store $\{g_{r,k}\}_{k=0}^{\tau-1}$ and $\{s_{r,k}\}_{k=0}^{\tau-1}$ and update the client's last participation round $r_i^{\prime} = r$. 
        \STATE $x_{r+1} = x_{r} - \eta \sum\limits_{k=0}^{\tau-1} \Delta x_{r,k}$ \hfill $\blacktriangleright$ (Optional) Global model update\vspace{-1mm}
        \ENDFOR
    \end{algorithmic} \label{algorithm-2} 
\end{algorithm*}

{
\addtocounter{algorithm}{-1} 
\renewcommand{\thealgorithm}{\arabic{algorithm}a} 
\begin{algorithm}[!] 
\caption{Receiving Step for Hessian-Informed ZO Gradient for $i$-th Client at $r$-th Round}
\begin{algorithmic}[1]
    \STATE \textbf{Function} \textcolor{blue}{Rebuild}($\{\{s_{t,k}\}_{k=0}^{\tau-1} \}_{r=r^\prime}^{r-1}, \{\{g_{t,k}\}_{k=0}^{\tau-1} \}_{r=r^\prime}^{r-1}$):  \hfill $\blacktriangleright$ $r'$ is last participation round
    \STATE \hspace{1em} \textbf{for} $t=r^\prime, \cdots, r-1$ \textbf{do}
    \STATE \hspace{2em} \textbf{for} $k=0, \cdots, \tau-1$ \textbf{do}
    \STATE \hspace{3em} Utilize the random seed $s_{t,k}$ to produce $u_{t,k} \sim \cN (\textbf{0}, \textbf{I})$
    \STATE \hspace{3em}  $\Delta x_{t,k} = g_{t,k}H_{t}^{-1/2} u_{t,k}$
    \STATE \hspace{3em} $H_{t+1}=(1-\nu)H_{t} + \nu {\rm Diag}([\Delta x_{t,\tau}]^2 +\epsilon I)$
    \STATE \hspace{2em} \textbf{end for}
    \STATE \hspace{1em} \textbf{end for}
    \STATE \hspace{1em} \textbf{return} $\{\{\Delta x_{t,k}\}_{k=0}^{\tau-1}\}_{t = r^\prime}^{r-1}$ \hfill $\blacktriangleright$ For model reconstruction
    \end{algorithmic}
\end{algorithm}
}

{
\addtocounter{algorithm}{-1} 
\renewcommand{\thealgorithm}{\arabic{algorithm}b} 
\begin{algorithm}[!] 
\caption{Sending Step for Hessian-Informed ZO Gradient for $i$-th Client at $r$-th Round}
\begin{algorithmic}[1]
    \STATE \textbf{Function} \textcolor{purple}{LocalUpdate}($\{s_{r,k}\}_{k=0}^{\tau-1}$):
    \STATE \hspace{1em} \textbf{for} $k=0, \cdots, \tau-1$ \textbf{do}
    \STATE \hspace{2em} Utilize the random seed $s_{r,k}$ to produce $u_{r,k} \sim \cN(\textbf{0}, \textbf{I})$
    \STATE \hspace{2em} $g_{r,k}^{(i)} \!=\! \frac{1}{\mu} \left[f_i(x_{r,k}^{(i)} +\mu H_{r}^{-1/2} u_{r,k}) - f_i(x_{i,r}^{(i)}) \right]$ 
    \hfill $\blacktriangleright$ Compute ZO gradient scalar
    \STATE \hspace{2em} $\Delta x_{r,k}^{(i)}=g_{r,k}^{(i)} H_{r}^{-1/2} u_{r,k}$ \hfill $\blacktriangleright$ Can be replaced by other representation methods of $\Delta x_{r,k}^{(i)}$
    \STATE \hspace{2em} $x^{(i)}_{r,k+1} = x^{(i)}_{r,k} - \eta \Delta x^{(i)}_{r,k}$ \hfill $\blacktriangleright$ Update local model
    \STATE \hspace{1em} \textbf{end for}
    \STATE \hspace{1em} $x^{(i)}_{r,\tau} \Leftarrow x^{(i)}_{r, 0}$ \hfill $\blacktriangleright$ Reset the local model and update other necessary states
    \STATE \hspace{1em} \textbf{return} $\{g_{r,k}^{(i)}\}_{k=0}^{\tau-1}$
    \end{algorithmic}
\end{algorithm}
}

\begin{figure}[!tbh]
    \centering
    \includegraphics[width=0.75
    \textwidth]{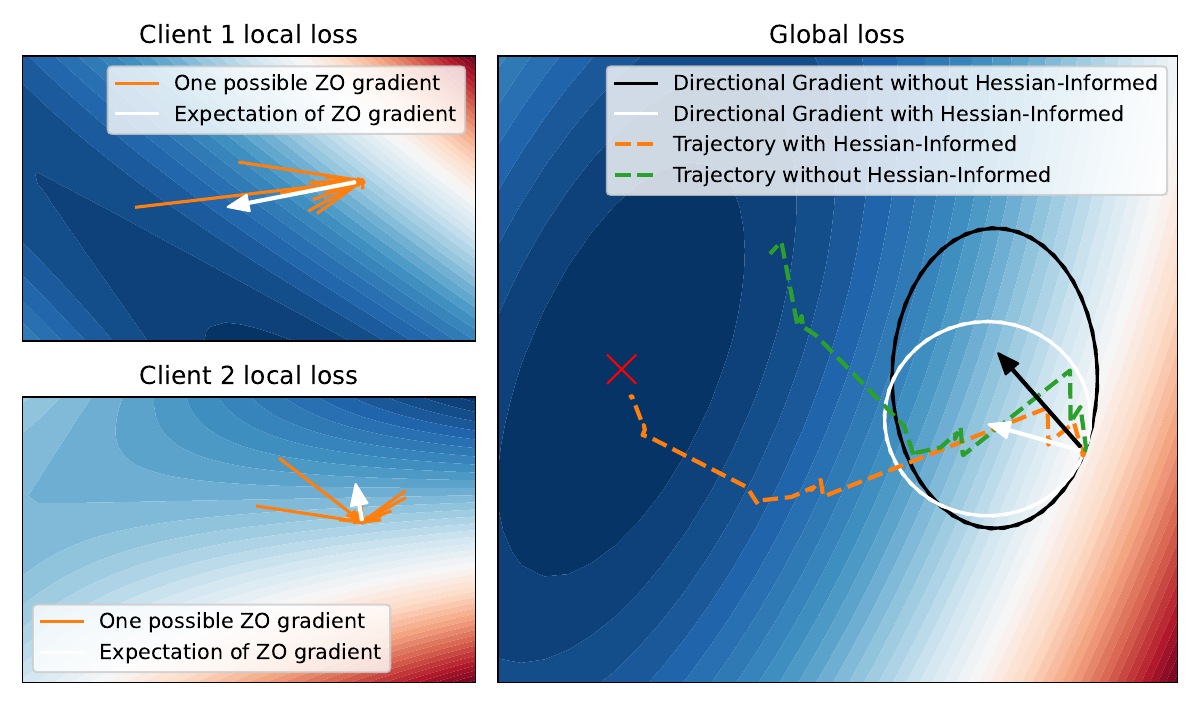} 
    \caption{\small An Illustration of Hessian-informed versus Regular ZO Gradient Direction under the FL Setting.}
    \label{fig:fl-hessian}
\end{figure}

\section{Extra Analysis, Discussion, Experiment Detail and Results} 
\label{app.extra_exp}

\subsection{Baseline Selection}
To comprehensively evaluate {\alg}'s performance, we select a broad range of classic or recent baselines covering both first-order and zeroth-order FL methods. 
We explain the reason why we choose those baselines as follows: 

\textbf{First-Order FL Baselines:} 
\begin{itemize}
    \item \textbf{FedAvg}: the first and most classic first-order FL algorithm. 
    \item \textbf{FedAdam}, \textbf{FedYogi} and \textbf{FedAdagrad}: adaptive gradient-based methods designed to accelerate convergence in FL. 
\end{itemize}

\textbf{Zeroth-Order FL Baselines:} 
\begin{itemize}
    \item \textbf{FedZO}: the first FL method to incorporate ZO-SGD into client local updates. 
    \item \textbf{DeComFL}: the first method to achieve dimension-free communication in FL, which also uses ZO-SGD to perform client local updates. 
\end{itemize}

\subsection{Memory Cost Results}
In {\alg}, there are several places that require additional memory to store extra information. 
We test their real memory consumption in our FL system. 

\begin{itemize}
    \item[(a)] \textbf{Global Gradient Scalar and Seed $\{g_r,s_r\}$ Pairs}: 
    In each communication round, only one $\{g_r,s_r\}$ pair is stored at the server, regardless of the number of clients. 
    Specifically, storing two scalars per round over one million rounds would require only a few megabytes that is well within the capacity of any modern server.
    \item[(b)] \textbf{Clients' Historical States (Last Participation Round)}: 
    The server needs to store the last participation round (a scalar) for each client, which only consume a few megabytes for a FL system with even millions of clients. 
    Specifically, storing 1 million client states (i.e., 1 million scalars in int 32) only needs 3.8 MB. 
    Moreover, this storage cost can be optimized. 
    For example, the server can transmit the last participation round index to each client, and the client can store it locally as a scalar. 
    When the client returns, it simply sends this scalar back to server. 
    This design eliminates the need for server to store all clients' states, so memory cost becomes negligible.
    \item[(c)] \textbf{Clients' Hessian $H$}: 
    We evaluate the peak memory usage of a single client across all baselines and our proposed {\alg}. 
    The results indicate that {\alg} requires substantially less peak memory than all first-order baselines. 
    Moreover, {\alg} also outperforms FedZO in terms of memory efficiency, as in the original FedZO paper, it is not optimized for memory usage. 
    Finally, when comparing {\alg} with the memory-optimized DeComFL, we observe that {\alg} still consumes less than twice the peak memory of DeComFL.
\end{itemize}

\subsection{Communication Cost and Training Acceleration Results}

\textbf{{\alg} is Extremely Communication-Efficient.}
Fig.~\ref{fig:comm_cost} shows the total communication cost of various FL methods across different model sizes (125M, 350M, and 1.3B), highlighting the dramatic efficiency of our {\alg}. 
While traditional methods like FedAvg, FedZO, and FedAdam incur communication costs on the order of $10^{11}$ to $10^{13}$, {\alg} reduces it by over 40 million times for 125M and 350M models, and up to 90 million times for the 1.3B model. 
Even compared to the strongest communication-efficient baseline DeComFL, {\alg} still achieves noticeably lower communication cost because accelerated convergence introduces less training rounds. 
This substantial reduction shows that {\alg} is highly communication-efficient and particularly well-suited for large-scale FL with high-capacity models.
More experiment details are provided in Appendix~\ref{app.extra_exp}.

\begin{figure} [!tbh]
    \centering
    \includegraphics[width=0.95\linewidth]{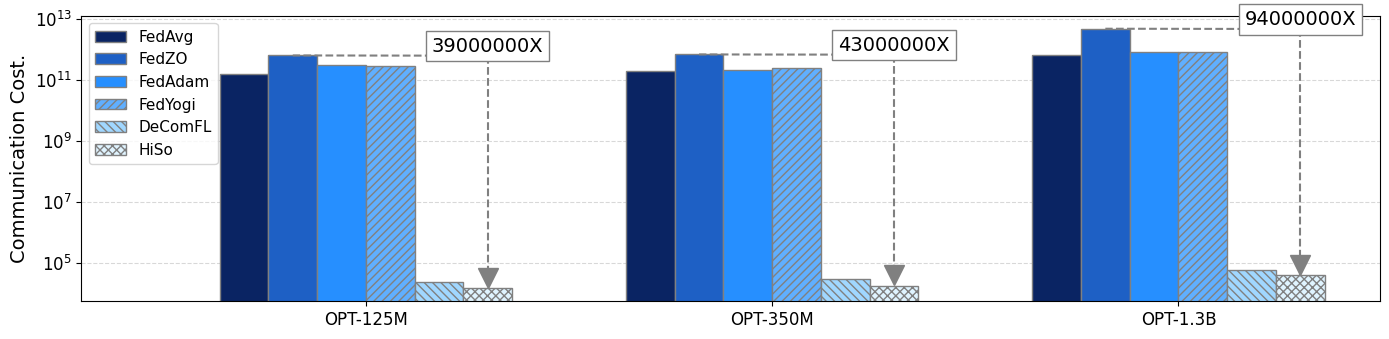}
    \vspace{-5mm}
    \caption{Communication Overhead Comparison for LLM Fine-Tuning on SST-2 Dataset}
    \label{fig:comm_cost}
\end{figure}

In Table~\ref{table: decomfl vs ours_new_llm}, we show the additional experiment results about {\alg}'s acceleration on newer LLMs, including Qwen3-0.6B \citep{qwen3technicalreport} and Gemma-3-270M \citep{gemma_2025}. 
For DeComFL, we report the number of rounds required to fully converge.
For {\alg}, we report the rounds required to match DeComFL’s best accuracy, together with the corresponding communication cost. 
Across all settings, {\alg} consistently requires far fewer communication rounds while maintaining the same accuracy target. 
For the Qwen3-0.6B model, {\alg} reduces the communication rounds from 2500 to 950 on SST-2 (a 2.6$\times$ speedup), from 1050 to 650 on QQP (a 1.6$\times$ speedup), and from 375 to 225 on SQuAD (a 1.7$\times$ speedup).
This round reduction directly translates into lower communication cost. 
The acceleration is equally pronounced on the lighter Gemma-3-270M model.
{\alg} achieves a 2.1$\times$ speedup on SST-2, 1.4$\times$ on QQP, and 2$\times$ on SQuAD, again halving the communication cost compared to DeComFL. 
Overall, these results demonstrate that {\alg} substantially accelerates zeroth-order FL, achieving 1.4$\times$–2.6$\times$ faster convergence while preserving the extremely low communication footprint.
This validates the key insight of {\alg}: leveraging Hessian-informed ZO updates dramatically improves convergence efficiency while maintaining scalar-only communication.

\begin{table}[!tbh]
    \centering
    \caption{{\alg}'s Acceleration. 
    {\small For DeComFL, we report the total number of communication rounds required to fully converge. 
    For {\alg}, we report the number of rounds needed to match DeComFL’s best test accuracy, along with the corresponding communication cost.} 
    } 
    \resizebox{1.0\linewidth}{!}{
    \begin{tabular}{c|c|ccc|ccc|ccc}
    \toprule
    \multirow{2}{*}{\textbf{Model}} & \multirow{2}{*}{\textbf{Method}} & \multicolumn{3}{c|}{\textbf{SST-2}} & \multicolumn{3}{c|}{\textbf{QQP}} & \multicolumn{3}{c}{\textbf{SQuAD}} 
    \\
    \cline{3-11}
    & & \textbf{Round} & \textbf{Speedup} & \textbf{Comm. Cost} & \textbf{Round} & \textbf{Speedup} & \textbf{Comm. Cost} & \textbf{Round} & \textbf{Speedup} & \textbf{Comm. Cost} 
    \\
    \midrule
    \multirow{2}{*}{Qwen3-0.6B} & DeComFL & 2500 & 1$\times$ & 83.16 KB & 1050 & 1$\times$ & 34.93 KB & 375 & 1$\times$ & 12.47 KB \\
    & {\alg} & 1000 & $2.5\times$ & 33.26 KB & 650 & $1.6\times$ & 21.62 KB & 225 & 1.7$\times$ & 7.48 KB \\
    \midrule
    \multirow{2}{*}{Gemma-3-270M} & DeComFL & 2125 & 1$\times$ & 70.60 KB & 850 & 1$\times$ & 28.27 KB & 900 & 1$\times$ & 29.94 KB \\
    & {\alg} & 1025 & $2.1\times$ & 34.05 KB & 625 & 1.4$\times$ & 20.79 KB & 450 & 2$\times$ & 14.97 KB \\
    \bottomrule
    \end{tabular}
    }
    \label{table: decomfl vs ours_new_llm}
\end{table}

\subsection{Time Cost Results}

\subsubsection{Computation Time Cost Results}
We profiled the actual computation time per round for Hessian-informed preconditioning ($T_{pre}$), along with the total computation time and estimated communication time on H100 GPUs. These results are presented in the following Table~\ref{table: computation time}. 
Our profiling shows that the computation time for Hessian-informed preconditioning is negligible compared to both the overall computation and communication time. 
This confirms that {\alg} remains scalable to LLMs. 
This efficiency is expected, as the core operations involved in our preconditioning step consist of a matrix squaring and matrix summation, both of which are computationally lightweight and do not scale prohibitively with model size.

In addition, the communication time heavily depends on the network condition and the number of transmitted scalars. 
To offer a comprehensive understanding, we provide an estimated communication time as follows: considering two types of common bandwidth: 100 Mbps (e.g., wifi or 5G) and 1 Gbps (e.g., enterprise LAN / wired campus network). 
If we run {\alg} with 5 perturbations, it is reasonable to estimate that there are total 10 scalars to be transmitted in uplink and downlink per round. 
For transmitting 10 float32 scalars (approximately 1 KB including protocol overhead), the pure transmission time is about 0.08 ms under 100 Mbps, and 0.008 ms under 1 Gbps bandwidth. 
Including typical round-trip network latency, the total communication time per round is approximately 20-30 ms in 100 Mbps environments and 1-2 ms in 1 Gbps settings. 
This confirms that {\alg}’s per-round preconditioning time (<0.5 ms in our experiments) is negligible compared to communication time. 

\begin{table}[!tbh]
    \centering
    \caption{Comparison of Computation Time for Hessian-Informed Preconditioning with Different LLM Sizes (Using SST-2).  
    {\small $T_{pre}$ is the preconditioning time per round. $T_{total}$ is the total time per round. $T_{est}$ is the estimated communication time per round. 
    } } 
    \label{table: computation time}
    \resizebox{0.65\linewidth}{!}{
    \begin{tabular}{c|c|c|c|c}
    \toprule
    Model Size & $T_{pre}$ & $T_{total}$ & $T_{pre}/T_{total}$ & $T_{est}$
    \\
    \midrule
    OPT-125M & 0.118 ms & 88.4 ms & 0.13\% & 1$\sim$30 ms \\
    OPT-350M & 0.137 ms & 127.2 ms & 0.11\% & 1$\sim$30 ms \\
    OPT-1.3B & 0.185 ms & 329.3 ms & 0.06\% & 1$\sim$30 ms \\
    OPT-2.7B & 0.259 ms & 438.6 ms & 0.06\% & 1$\sim$30 ms \\
    \bottomrule
    \end{tabular}
    }
\end{table}

\subsubsection{Wall-Clock Time Cost}
In Table~\ref{table: wall-clock time}, we report the one-round computation time, one-round communication time and total wall-clock time until convergence of first-order baselines, ZO baseline and our {\alg}. 
We observe that first-order FedAvg and FedAdam suffer from extremely high wall-clock time due to their high-dimensional communication overhead that each round requires $228.8$ seconds, significantly dominating the overall runtime. 
Thus, despite their low per-round computation time, they require $68788$ s and $45858$ s total wall-clock time respectively.

In contrast, zeroth-order DeComFL significantly reduce communication to only $2.24\times 10^{-6}$ seconds per round, achieving a $29\times$ faster overall runtime than FedAdam. However, DeComFL still requires many more communication rounds to reach the target accuracy, compared to other three methods.
Further, our HiSo achieves the best overall efficiency. Although the per-round computation time is slightly higher due to Hessian-informed updates, HiSo converges in far fewer rounds compared to DeComFL. Hence, HiSo achieves a total wall-clock time of only $1064$ seconds, representing a $43\times$ speedup over FedAdam, $64\times$ speedup over FedAvg, and a $2.2\times$ improvement over DeComFL. This shows that HiSo not only preserves the ultra-low communication cost of scalar-only communication but also substantially accelerates convergence (i.e., less training rounds) through its Hessian-informed optimization design. 

\begin{table}[!tbh]
    \centering
    \caption{Wall-Clock Time Cost Comparison (Qwen3-0.6B Model+SST-2 Dataset). 
    {\small We assume a typical 5G network setting with 100 Mbps uplink and 500 Mbps downlink bandwidth without considering latency, protocol and so on. We tested this on H100 GPUs.}
    }
    \label{table: wall-clock time}
    \resizebox{1.0\linewidth}{!}{
    \begin{tabular}{c|c|c|c}
    \toprule
    Method & One-round Computation Time & One-round Communication Time & Total Wall-Clock Time
    \\
    \midrule
    FedAvg & $0.43$ s & $228.8$ s & $68788$ s \\
    FedAdam & $0.49$ s & $228.8$ s & $45858$ s \\
    DeComFL & $0.94$ s & $2.24\times 10^{-6}$ s & $2350$ s \\
    {\alg} & $1.12$ s & $2.24\times 10^{-6}$ s & $1064$ s \\
    \bottomrule
    \end{tabular}
    }
\end{table}

\subsection{{\alg} v.s. FL+PEFT Baselines}

Although this paper focuses on full-parameter FL, which differs in setup and assumptions from PEFT-based approaches, to provide a comprehensive evaluation, we include additional comparisons with FL+PEFT baselines. As shown in Table~\ref{table: fl+peft}, {\alg} achieves up to $10^4 \times$ communication overhead reduction while maintaining competitive test accuracy. 
Among ZO methods, {\alg} consistently demonstrates lower communication overhead, higher test accuracy, and faster convergence.

\begin{table}[!tbh]
    \centering
    \caption{Comparison of {\alg} and FL+PEFT Baselines.
    {\small We report the communication cost of one client.} 
    } 
    \label{table: fl+peft}
    \resizebox{0.9\linewidth}{!}{
    \begin{tabular}{c|c|c|c|c}
    \toprule
    Model & Dataset & Methods & Test Acc. & Comm. Cost
    \\
    \midrule
    \multirow{6}{*}{OPT-125M} & \multirow{6}{*}{SST-2} & FedAvg+LoRA (r=8) & 87.47\% & 0.34 GB \\
    &  & FedSA-LoRA \citep{guo2024selective} (r=8) & 87.53\% & 0.15 GB \\
    &  & FFA-LoRA \citep{sun2024improving} (r=8) & 87.39\% & 0.16 GB \\
    &  & DeComFL+LoRA \citep{li2024achieving} (r=8) & 85.23\% & 27.55 KB \\
    &  & {\alg}+LoRA (Ours) (r=8) & 85.37\% & 22.26 KB \\
    &  & {\alg} (Ours) & 85.55\% & 14.69 KB \\
    \bottomrule
    \end{tabular}
    }
\end{table}

\subsection{Training Loss vs Iterations/Wall-Clock Time}
Based on the convergence curves in Figure~\ref{fig:training loss}, {\alg} consistently demonstrates substantially faster progress than DeComFL in both iterations and wall-clock time. 
On SST-2 with Qwen-3-0.6B, DeComFL requires roughly 2500 iterations to converge, while {\alg} reaches the same accuracy within only about 1000 iterations, achieving a 2.5$\times$ speedup in optimization rounds. 
This faster iteration-level convergence directly translates into end-to-end efficiency: when measuring real execution time, including communication and computation, {\alg} is about 2.1$\times$ faster than DeComFL in reaching the same final accuracy. 
The accuracy-time and loss-time curves further confirm that {\alg} maintains a steeper descent trajectory throughout training, reducing both the number of updates and the total runtime needed for convergence. 
Overall, these results show that {\alg} accelerates federated fine-tuning in both algorithmic efficiency (iterations) and practical efficiency (wall-clock time), providing significantly faster convergence under the same hardware and network conditions.

\begin{figure}[!tbh]
    \centering
    \includegraphics[width=0.7\linewidth]{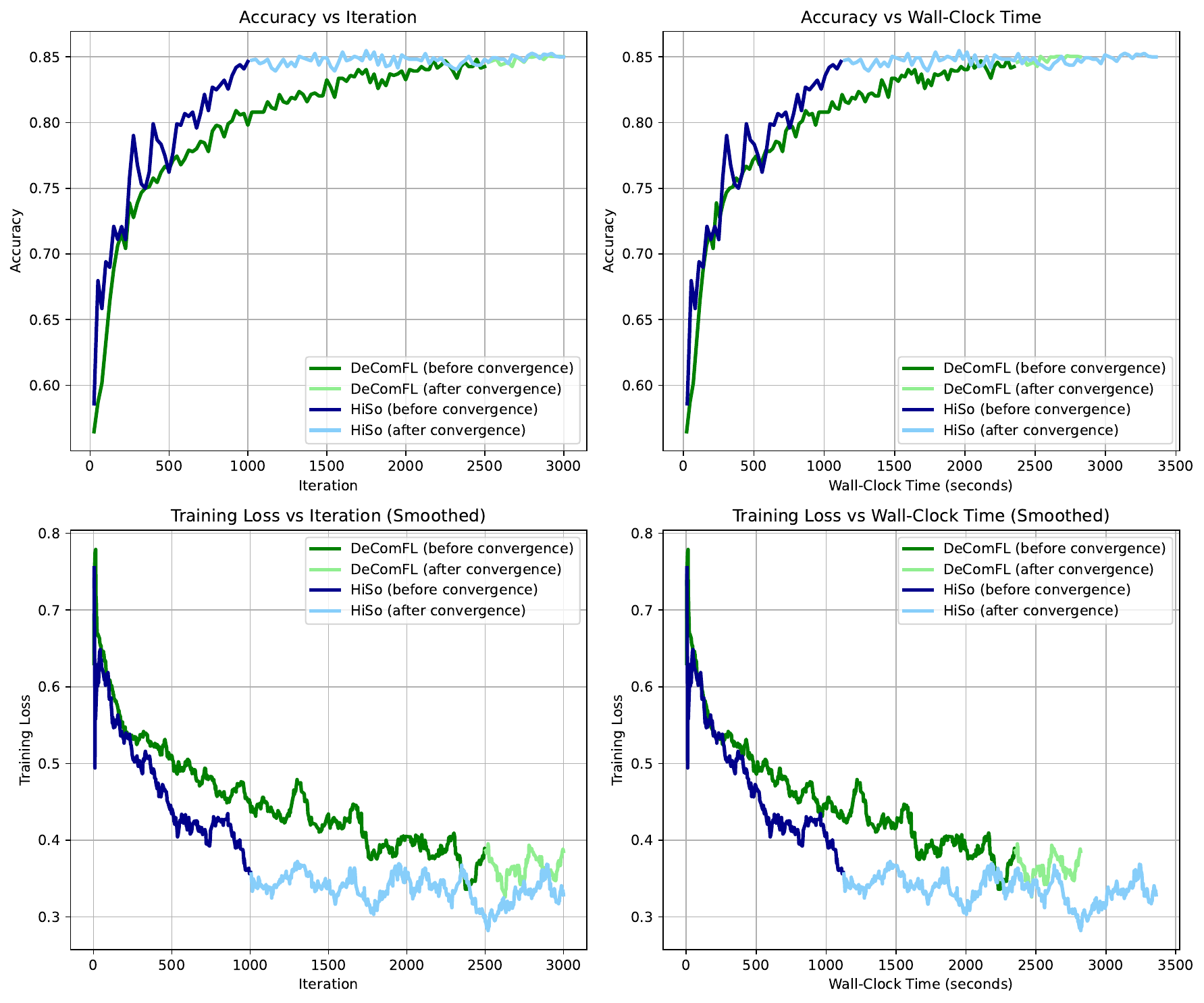}
    \caption{Training Loss vs Iteration and Training Loss vs Wall-Clock Time. (Qwen3-0.6B + SST-2)}
    \label{fig:training loss}
\end{figure}

\subsection{Scenarios Suitable for {\alg}}

{\alg} is not designed as a drop-in solution for any FL scenario. 
Its core characteristics, Zero-Order (ZO) nature and extreme communication efficiency, make it the best fit for these critical  scenarios:
\begin{itemize}
    \item \textbf{Gradient Inaccessibility.} 
    When true gradients are inaccessible or prohibitively expensive to compute, zeroth-order (ZO) optimization serves as a natural and effective alternative. 
    In such black-box settings, where only function evaluations are available, ZO methods enable optimization without explicit gradient information, thereby extending applicability to a broad range of tasks where gradient-based approaches are infeasible. 
    As a ZO-FL method, {\alg} is particularly well-suited to this scenario. 

    \item \textbf{Bandwidth-Constrained Networks}. With a communication overhead limited to the kilobyte (KB) range, $\text{\alg}$ is ideally suited for FL deployments in environments with limited bandwidth. This feature guarantees scalability and practicality even when fine-tuning massive models.
\end{itemize}

\section{Main Proof}

\subsection{Notations}
The following proof utilizes matrix and vector notations. 
A bold symbol, such as $\x_k$, generally represents a vector encompassing multiple clients, whereas a normal symbol, such as $x_k^{(i)}$, denotes the value for an individual client. 
To further lighten the notation for multiple clients and the local cost function, we adopt the following usage:
\begin{align}
    \x_{k} =& \begin{bmatrix}
        x_k^{(1)} & x_k^{(2)}& \cdots & x_k^{(M)}
    \end{bmatrix}  \in \real^{d \times M},\\
    \f(\x_{k}) =& \begin{bmatrix}
         f_1(x_{k}^{(1)};\xi_k^{(1)}) &  f_2(x_{k}^{(2)};\xi_k^{(1)})& \cdots &  f_M(x_{k}^{(M)};\xi_k^{(1)})
    \end{bmatrix}\in \real^{1\times M}, \\
    \nabla \f(\x_{k})=& \begin{bmatrix}
        \nabla f_1(x_{k}^{(1)};\xi_k^{(1)}) & \nabla f_2(x_{k}^{(2)};\xi_k^{(1)}) & \cdots & \nabla f_M(x_{k}^{(M)};\xi_k^{(1)})
    \end{bmatrix} \in \real^{d\times M}.
\end{align}
where $\nabla f_1(x_{k}^{(1)};\xi_k^{(1)})$ represent the stochastic gradient evaluated on local cost function $f_1$ at the point $x_{k}^{(1)}$.
Notice the function value $f_i$ or the gradient $\nabla f_i$ applied on the different iterates $\x_k^{(i)}$ in above notations.
Various vector and matrix norms are used in the proof.
For any semi-positive definite matrix $\Sigma$, we adopt the following convention in Table~\ref{table.norm}.
\begin{table}[!tbh]
    \centering
    \caption{Norm Notations in This Paper} \label{table.norm}
    \resizebox{0.75\linewidth}{!}{
        \begin{tabular}{cll}
        \toprule
        Notation & Definition & Comment\\
        \midrule
        $\|x\|_\Sigma^2$ & $x\tran \Sigma x$ &  Mahalanobis (weighted) vector norm, where $x\in \real^d$.\\[1mm]
        $\|A\|_\Sigma^2$ & $\Tr(A\tran \Sigma A)$ & Mahalanobis (weighted) matrix norm $A\in\real^{d\times d}$\\[1mm]
        $\|A\|_2, \|A\|$ & $\sigma_{\max}(A)$ & Spectrum norm, i.e., largest singular value of $A$\\[1mm]
        $\|\x\|_F^2$ & $\Tr(\x\tran \x)$ & Frobenius norm (note $\x$ is matrix here)\\
    \bottomrule
        \end{tabular}
    }
\end{table}
\vspace{0.3cm}

\textbf{Remark:} While the Frobenius norm can be viewed as a special case of the weighted matrix norm, confusion is unlikely in this paper as we only apply the Frobenius norm to the stacked vector $\x$.

Other commonly used constants and symbols are summarized in the following table.  
\begin{table}[!tbh]
    \centering
    \caption{Notations in This Paper} \label{table.notation}
    \resizebox{0.65\linewidth}{!}{
        \begin{tabular}{cl}
            \toprule
            Notation & Meaning\\
            \midrule
            $i$ & Index of clients\\
            $k$ & Index of iterations\\
            $r$ & Index of communication round and $r = \lfloor k/\tau\rfloor\tau$\\
            $\tau$ & The number of local update steps\\
            $C_r$ & Indices set of clients sampled at $r$-th round \\
            $d$ & Model parameter dimension\\
            $m,M$ & Number of sampled and total clients\\
            $f_i, F$ & Local and global loss function\\
            $u, z$ & A random vector drawing from the standard \\
            & and weighted Gaussian distributions\\
            \bottomrule
        \end{tabular}
    }
\end{table}
\vspace{0.3cm}

The all-one vector $\one = [1, 1, \cdots, 1]\tran\in \real^{M\times1}$ and the uniform vector $\one_u = \one /M \in \real^{M\times1}$ are two common notations we adopted in the rest of the proof. 
With these symbols, we have the following identity
\begin{align}
    \nabla \f(x\one\tran)\one_u = \nabla F(x) \in \real^{d\times 1}
\end{align}

\subsection{Algorithm Reformulation and Main Recursion}
To make a concise proof, we first re-write the algorithm into the vector-matrix form as introduced in the previous section. 
First, to make the convergence proof straightforward, we translate the two-level for-loop structure (outer round loop and inner local update loop) into a single recursion structure. The $k$-th local update in $r$-th communication round is equivalent to the $r\tau+k$ iterations. Then, inspired by the work \citep{li2019convergence, ying2025interpretation},
first we notice the Federated Learning algorithm is equivalent if we virtually send the server's model to all clients but keep the aggregation step the same, i.e., only aggregate the clients' values in $C_r$. Under this form, we can equivalently reformulate the algorithm into this recursion
\begin{align}
    \y_{k+1} =& \x_{k} - \eta H_k^{-1/2}u_k \frac{\f(\x_k+\mu H_k^{-1/2}u_k\one\tran) - \f(\x_k)} {\mu}, \label{eq.di.sd.ef}\\
    \x_{k+1} =&\y_{k+1} W_k. \label{eq.dei.sd.ef2}
\end{align}
where $\x_{k}, \y_k \in \real^{d\times M}$ is the stacked vectors and $W_k$ represents the communication matrix. Note the single subscript $k$ is for the iteration, which is not the same $k$ in the double subscripts for local update step. The element of $W_{k}[i,j]$ represents the effective weight that client $i$ to client $j$ at iteration $k$. If the iteration $k\neq r\tau$, $W_k = I$ -- local update step. If $k=r\tau$, $W_k$ becomes some average matrix representing the model average step. More concretely, it is a \textbf{column-stochastic} matrix, each column having the same weights and the non-zero elements in each column are the sampled clients in round $r$. For instance, suppose client $\{0,1,3\}$ sampled in the four clients case, the corresponding $W_k$ are
\begin{align}
    W_k = \begin{bmatrix}
        \frac{1}{3} &  \frac{1}{3} &  \frac{1}{3} &  \frac{1}{3} \\[1mm]
        0 & 0 & 0 & 0 \\[1mm]
         \frac{1}{3} &  \frac{1}{3} &  \frac{1}{3} &  \frac{1}{3} \\[1mm]
      \frac{1}{3} &  \frac{1}{3} &  \frac{1}{3} &  \frac{1}{3}
    \end{bmatrix}
\end{align}

Back to the update rule (\ref{eq.di.sd.ef}) -- (\ref{eq.dei.sd.ef2}), the following proof is for the general update rule of $H_k$. Hence, we just need to focus on the property of $H_k$ instead of combining the update rule and revisit it later. We further denote $z_k =  H_k^{-1/2}u_k$, $z_k \sim \cN(0, H_k^{-1})$ to simplify the update rule:
\begin{align}
    \y_{k+1} =& \x_{k} - \frac{\eta}{\mu} z_k\Big(\f(\x_k+ \mu z_k\one\tran) - \f(x_k)\Big),\\
    \x_{k+1} =&\y_{k+1} W_k
\end{align}

Because of the shared seeds and Hessians, $\z_k$ is a variable that has no client index subscripts. Using directional gradient approximation
\begin{align}
    f(x+\mu z) = f(x) + \mu z\tran \nabla f(x) + \frac{\mu^2}{2}z\tran \left(\int_0^1 \nabla^2 f(x+tz)dt \right)z,
\end{align}
the update rule can be concisely written as 
\begin{align}
    \y_{k+1} =& \x_k - \eta z_k z_k\tran \nabla \f(\x_k) + O(\mu\eta),\\
    \x_{k+1} =& \y_{k+1} W_k,
\end{align}
\textbf{To manage notational complexity and the handling of intricate coefficients, we adopt the $O(\mu\eta)$ notation. }
Since this paper concentrates on addressing client sampling and local updates in federated learning, the analysis of the zeroth-order approximation error is intentionally simplified. 
This approach facilitates a clearer understanding of the distinct error sources in the federated setting, without sacrificing proof rigor.

We define the (virtual) centralized iterates $\barx_k := \x_k\one_u$ and $\bary_k := \y_k\one_u$. 
The recursion of centralized iterates $\barx_k := \x_k\one_u$ is 
\begin{align}
    \barx_{k+1} =& \y_{k+1} W_k\one_u\\
    =& \Big(\x_k - \eta z_k z_k\tran \nabla \f(\x_k)\Big) w_k + O(\mu\eta)
\end{align}
where we define $w_k := W_k\one_u$. It is straightforward to see that if $k\neq r\tau$, $w_k=\one_u$; if $k=r\tau$, $w_k$ is the random selection vector with each entry having $m/M$ probability to be $1/m$ and 0 otherwise. Hence, we have the following two cases to handle with
\begin{align}
\boxed{
    \barx_{k+1}=
    \begin{cases}
        \barx_{k} - \eta z_k z_k\tran \overline{\nabla \f}(\x_k)+ O(\mu\eta) & k\neq r\tau,\\
        \hat{x}_k - \eta z_k z_k\tran \widehat{\nabla \f}(\x_k)+ O(\mu\eta) & k= r\tau.
    \end{cases}
    } \label{eq.main.recursion}
\end{align}
where we denote 
\begin{align}
    \hat{x}_k =& \x_k w_k,\\
    \overline{\nabla\f}(\x_k) =& \nabla \f(\x_k)\one_u = \frac{1}{M}\sum_{i=1}^M \nabla f_i(x_k^{(i)})\in\real^{d\times1},\\
    \widehat{\nabla\f}(\x_k) =& \nabla \f(\x_k)w_k =\frac{1}{m}\sum_{i\in C_r} \nabla f_i(x_k^{(i)}) \in \real^{d\times1}.
\end{align}
Above two centralized recursions will be the main reference the following proof.

\subsection{Key Lemmas}
\subsubsection{Lemmas about Gaussian Variables}
The rest proof is built on top of the following two fundamental lemmas about the Gaussian distribution.
\begin{lemma}[Fourth-Order Moment of Gaussian Vector]\label{lemma.gaussian.moment}
Suppose that the random vector $z\sim \cN(0, \Lambda)$ where $\Lambda$ is a diagonal matrix. 
For any symmetric matrix $W$, we have
\begin{align} \label{eq.gaus.4.weighted}
    \Ex zz\tran W  zz\tran =\Tr(W\Lambda)\cdot \Lambda + 2\Lambda W\Lambda.  
\end{align}
If $u \sim \cN(0, I)$, i.e., drawing from a standard Gaussian distribution, we have
\begin{align} \label{eq.gaus.4.standard}
    \Ex uu\tran W  uu\tran =\Tr(W)\cdot I + 2 W. 
\end{align}
\end{lemma}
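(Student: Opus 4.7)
The plan is to establish the standard case \eqref{eq.gaus.4.standard} first and then reduce the weighted case \eqref{eq.gaus.4.weighted} to it by a whitening change of variables. The standard case is a direct application of Isserlis' (Wick's) theorem on fourth moments of jointly Gaussian variables, so the main work is careful bookkeeping of indices; the weighted case then follows in two lines.

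For the standard case $u\sim \cN(0, I)$, I would compute the $(i,j)$ entry of the random matrix $uu\tran W uu\tran$ as $\sum_{k,l} W_{kl}\, u_i u_k u_l u_j$ and take expectations. Since the entries $u_1,\dots,u_d$ are i.i.d.\ standard normal, Isserlis' theorem gives
\begin{align}
\Ex[u_i u_k u_l u_j] = \delta_{ik}\delta_{lj} + \delta_{il}\delta_{kj} + \delta_{ij}\delta_{kl}.
\end{align}
Substituting and using $W=W\tran$ yields $\Ex[(uu\tran W uu\tran)_{ij}] = W_{ij}+W_{ji}+\delta_{ij}\Tr(W) = 2W_{ij}+\delta_{ij}\Tr(W)$, which is exactly the $(i,j)$ entry of $2W + \Tr(W)\, I$. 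I expect this step to be the main (though entirely routine) obstacle since one must be careful that all three Wick pairings contribute and that the symmetry of $W$ is used to merge $W_{ij}+W_{ji}$.

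For the weighted case, since $\Lambda$ is diagonal and positive semidefinite, I would write $z = \Lambda^{1/2} u$ with $u\sim \cN(0,I)$, so
\begin{align}
zz\tran W zz\tran = \Lambda^{1/2}\, uu\tran\, \widetilde{W}\, uu\tran\, \Lambda^{1/2}, \qquad \widetilde{W} := \Lambda^{1/2} W \Lambda^{1/2}.
\end{align}
Applying \eqref{eq.gaus.4.standard} with $\widetilde{W}$ in place of $W$ (note $\widetilde{W}$ is symmetric since $W$ is) and taking expectations gives
\begin{align}
\Ex\, zz\tran W zz\tran = \Lambda^{1/2}\bigl(2\widetilde{W} + \Tr(\widetilde{W})\, I\bigr)\Lambda^{1/2} = 2\Lambda W\Lambda + \Tr(W\Lambda)\,\Lambda,
\end{align}
where in the last step I use $\Lambda^{1/2}\widetilde{W}\Lambda^{1/2} = \Lambda W \Lambda$ and the cyclic property $\Tr(\Lambda^{1/2} W \Lambda^{1/2}) = \Tr(W\Lambda)$. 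This completes \eqref{eq.gaus.4.weighted}, and the standard case \eqref{eq.gaus.4.standard} is recovered by taking $\Lambda = I$.
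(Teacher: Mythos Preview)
Your proposal is correct. The approach differs from the paper's in two respects: (i) you prove the standard-Gaussian identity \eqref{eq.gaus.4.standard} first and then lift it to the weighted case \eqref{eq.gaus.4.weighted} by the whitening substitution $z=\Lambda^{1/2}u$, whereas the paper works directly in the weighted case by computing entries and only afterwards specializes to $\Lambda=I$; and (ii) you handle all entries uniformly via Isserlis'/Wick's theorem, while the paper splits into diagonal and off-diagonal cases and uses independence together with $\Ex z_i^4 = 3\Lambda_i^2$. Your route is a bit more modular and avoids the case split; the paper's is slightly more elementary in that it does not invoke a named moment identity. Either way the computation is routine, and both arrive at the same formula.
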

\begin{proof}
Let the matrix $\Psi = zz\tran W  zz\tran$. 
For each element $i\neq j$, 
\begin{align}
    \Psi[i,j] =\Ex z_i z_j (\sum_{i',j'} z_{i'} z_{j'}W[i', j']) = 2\Ex z_i^2 z_j^2 W[i,j] = 2\Lambda_i \Lambda_j W[i,j],
\end{align}
where the second equality holds because the zero-mean property of $z$ and $z_i$ is independent of each other. 
For the diagonal elements,
\begin{align}
    \Psi[i,i] =&\Ex z_i^2 (\sum_{i',j;} z_{i'} z_{j'}W[i', j']) = \sum_{i'} \Ex z_i^2 z_{i'}^2W[i',i']\nn\\
    =& \sum_{i'\neq i}\Ex z_i^2\Ex z_{i'}^2 W[i',i'] + \Ex z_i^4 W[i,i]\nn\\
    =& \Lambda_i\sum_{i'}\Lambda_{i'}W[i',i']+ 2W[i,i]\Lambda_i^2,
\end{align}
where we utilize the fact that $\Ex z_i^4 = 3\Lambda_i^2$. 
Lastly, combining the above two results into a concise matrix notation, we establish
\begin{align}
    \Psi = \Tr(W\Lambda)\cdot \Lambda + 2\Lambda W\Lambda 
\end{align}
For the standard Gaussian distribution case, we just need to substitute $\Lambda=I$ into \eqref{eq.gaus.4.weighted}.
\end{proof}

\begin{lemma}[Gaussian Smoothed Function]  \label{lemma.zo.basic}
    We define a smooth approximation of objective function $f$ as $f^\mu(\cdot)$ that can be formulated as 
    \begin{align}
        f^\mu(x) := \frac{1}{(2\pi)^\frac{d}{2}} \int f (x + \mu u) e^{-\frac{1}{2} \| u \|^2} d\z = \Ex[f (x + \mu)],
    \end{align}
    where $\mu > 0$ is the smoothing parameter, and $\z$ is one n-dimensional standard Gaussian random vector.
    Then, we have
    \begin{align}
        \Ex \frac{f(x+\mu u)-f(x)}{\mu} u = \nabla f^\mu(x),\;\; {\rm where\ } u\sim \cN(0, I)
    \end{align}
    Above equality implies the ZO gradient is an unbiased estimate of the gradient of the smoothed function $f^\mu$.
\end{lemma}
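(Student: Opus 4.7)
The plan is to prove the identity by first expressing $\nabla f^\mu(x)$ as an integral against a translated Gaussian kernel, and then recognizing that this matches the expectation claimed on the left-hand side. The key trick is to perform a change of variables so the variable of integration absorbs the point $x$ at which we differentiate; this converts the derivative of $f^\mu$ into a derivative of the Gaussian density, which produces the factor $u$ naturally.

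First I would substitute $y = x + \mu u$ in the definition, giving
\begin{align}
f^\mu(x) = \frac{1}{(2\pi\mu^2)^{d/2}} \int f(y)\, e^{-\|y - x\|^2/(2\mu^2)}\, dy.
\end{align}
Assuming mild regularity on $f$ (e.g.\ polynomial growth, which is enough to apply dominated convergence and justify differentiation under the integral sign), I would then compute
\begin{align}
\nabla f^\mu(x) = \frac{1}{(2\pi\mu^2)^{d/2}} \int f(y)\, \frac{y - x}{\mu^2}\, e^{-\|y - x\|^2/(2\mu^2)}\, dy,
\end{align}
since $\nabla_x e^{-\|y-x\|^2/(2\mu^2)} = \frac{y-x}{\mu^2} e^{-\|y-x\|^2/(2\mu^2)}$.

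Next I would undo the change of variables by setting $u = (y-x)/\mu$, which transforms the integral back into a standard Gaussian expectation:
\begin{align}
\nabla f^\mu(x) = \frac{1}{(2\pi)^{d/2}} \int f(x+\mu u)\, \frac{u}{\mu}\, e^{-\|u\|^2/2}\, du = \mathbb{E}\!\left[\frac{f(x+\mu u)}{\mu}\, u\right].
\end{align}
Finally, I would use the zero-mean property $\mathbb{E}[u] = 0$ to subtract a $u$-independent term without changing the value:
\begin{align}
\mathbb{E}\!\left[\frac{f(x+\mu u) - f(x)}{\mu} u\right] = \mathbb{E}\!\left[\frac{f(x+\mu u)}{\mu} u\right] - \frac{f(x)}{\mu}\mathbb{E}[u] = \nabla f^\mu(x),
\end{align}
which gives the stated unbiasedness.

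The only nontrivial point is justifying the interchange of gradient and integral in the second step; for the regularity typically assumed in ZO analysis (continuous $f$ with at most polynomial growth at infinity), the Gaussian tail ensures a dominating integrable function, and the conclusion follows from standard dominated convergence. No other step involves anything beyond elementary change of variables, so I do not anticipate a real obstacle — the proof is essentially a classical Stein-type identity for the Gaussian measure.
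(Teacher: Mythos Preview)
Your proof is correct and follows the classical Stein-identity argument. The paper does not actually give its own proof of this lemma---it simply cites \citep{ghadimi2013stochastic, nesterov2017random}---and your derivation is precisely the standard one found in those references.
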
 
\begin{proof} See the proof in \citep{ghadimi2013stochastic, nesterov2017random}.
\end{proof}

\subsubsection{Variance Lemma for Sampling Noise}
Before we present the main proof, we first bound the variance of $\widehat{\nabla \f}(\x_k)$.
\begin{lemma}
    Suppose $f_i$ is $L$-smooth and the local cost functions satisfy the data heterogeneity assumption $\sigma_G^2$. 
    For any semi-positive definite matrix $\Sigma$, the variance of the sampled gradient $\widehat{\nabla \f}(\x_k)$ satisfies:
    \begin{align}
    \Ex \|\widehat{\nabla \f}(\x_k)\|_\Sigma^2  \leq& 2\|\nabla F(\barx_k)\|^2_\Sigma + \frac{2}{m} \|\Sigma\| (\sigma_G^2+\sigma_s^2) + \frac{2L^2}{M}\|\Sigma\| \|\x_k - \barx_k\one\tran\|^2_F, 
    \end{align}
where $m$ is the number of sampled clients per round and $M$ is the total number of clients.
\end{lemma}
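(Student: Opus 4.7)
The goal is to bound the weighted-norm second moment of the partial-participation stochastic mini-batch gradient $\widehat{\nabla\f}(\x_k) = \tfrac{1}{m}\sum_{i\in C_r}\nabla f_i(x_k^{(i)};\xi_k^{(i)})$ against the centered full gradient $\nabla F(\barx_k)$. The natural approach is a bias/variance split followed by a triple decomposition of the residual into the three independent sources of error appearing in the FL pipeline: stochastic data sampling, client sampling, and client drift induced by local updates.

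First, I would apply $\|a+b\|_\Sigma^2 \le 2\|a\|_\Sigma^2 + 2\|b\|_\Sigma^2$ with $a = \nabla F(\barx_k)$ and $b = \widehat{\nabla\f}(\x_k) - \nabla F(\barx_k)$, which immediately produces the first term $2\|\nabla F(\barx_k)\|_\Sigma^2$ in the claim. Since $\Sigma$ is positive semi-definite, the inequality $\|v\|_\Sigma^2 \le \|\Sigma\|\,\|v\|^2$ then converts the remaining $\Sigma$-weighted residual into an ordinary Euclidean norm times $\|\Sigma\|$, explaining the $\|\Sigma\|$ factors on the right-hand side.

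Second, I would decompose the Euclidean residual by inserting two intermediate quantities $g_2 := \tfrac{1}{m}\sum_{i\in C_r}\nabla f_i(x_k^{(i)})$ (averaging out the data noise) and $g_3 := \tfrac{1}{m}\sum_{i\in C_r}\nabla f_i(\barx_k)$ (further averaging out the drift), writing
\begin{equation*}
\widehat{\nabla\f}(\x_k) - \nabla F(\barx_k) = \bigl(\widehat{\nabla\f}(\x_k) - g_2\bigr) + \bigl(g_2 - g_3\bigr) + \bigl(g_3 - \nabla F(\barx_k)\bigr).
\end{equation*}
The cleanest way to recombine is to condition on $C_r$ and $\x_k$ and use that $\widehat{\nabla\f}(\x_k)-g_2$ is conditionally zero-mean with independent entries (Assumption~\ref{assumption.stochastic gradients}), so cross terms with the other two pieces vanish. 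Then each piece is bounded by its own assumption: Assumption~\ref{assumption.stochastic gradients} yields $\sigma_s^2/m$ for the stochastic piece; $L$-Lipschitz continuity (Assumption~\ref{assump.l.hessian}) gives $\|\nabla f_i(x_k^{(i)}) - \nabla f_i(\barx_k)\|^2 \le L^2\|x_k^{(i)}-\barx_k\|^2$, and averaging $\tfrac{1}{m}\sum_{i\in C_r}$ under the uniform sample yields $\tfrac{L^2}{M}\|\x_k - \barx_k\one\tran\|_F^2$; finally, Assumption~\ref{assumption.bounded-hetero} together with the finite-population variance identity for uniform sampling of $C_r$ gives $\sigma_G^2/m$ for the client-sampling piece.

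The main obstacle is the client-sampling term: one needs the identity $\Ex_{C_r}\tfrac{1}{m}\sum_{i\in C_r}\|v_i\|^2 = \tfrac{1}{M}\sum_{i=1}^M\|v_i\|^2$ plus a variance bound $\Ex_{C_r}\|g_3 - \nabla F(\barx_k)\|^2 \le \sigma_G^2/m$, which is straightforward under sampling with replacement but requires a finite-population correction factor $\tfrac{M-m}{M-1}\le 1$ under sampling without replacement. Once the three pieces are assembled and the factor $\|\Sigma\|$ is restored, the constants in the statement fall out directly (with the mild looseness in the factor $2/m$ on $\sigma_s^2$ coming from a uniform application of $(a+b+c)$-type inequalities).
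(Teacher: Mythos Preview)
Your plan is correct in spirit and would yield a valid bound, but the decomposition order differs from the paper and costs you an extra factor of $2$ on two of the three terms, so the constants will not ``fall out directly'' as you expect.

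The paper's proof isolates the \emph{drift} first via $\|a+b\|_\Sigma^2\le 2\|a\|_\Sigma^2+2\|b\|_\Sigma^2$ with $a=\widehat{\nabla\f}(\barx_k\one\tran)$ and $b=\widehat{\nabla\f}(\x_k)-\widehat{\nabla\f}(\barx_k\one\tran)$, and only then uses the exact bias--variance identity $\Ex\|\widehat{\nabla\f}(\barx_k\one\tran)\|_\Sigma^2=\|\nabla F(\barx_k)\|_\Sigma^2+\Ex\|\widehat{\nabla\f}(\barx_k\one\tran)-\nabla F(\barx_k)\|_\Sigma^2$ (no factor $2$) to separate the mean from the sampling/stochastic noise. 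A second variance identity over $\xi$ then gives $\sigma_s^2+\sigma_G^2$ with coefficient exactly $1/m$, so the only factor $2$ in the whole argument is the initial Jensen split. This is why the statement has $2$'s everywhere rather than $4$'s.

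In your route you peel off $\nabla F(\barx_k)$ first with the crude inequality, leaving a three-term residual $(g_1-g_2)+(g_2-g_3)+(g_3-\nabla F(\barx_k))$. You correctly observe that $g_1-g_2$ is conditionally zero-mean in $\xi$ and hence orthogonal to the other two pieces; but $g_2-g_3$ and $g_3-\nabla F(\barx_k)$ both depend on $C_r$ and are \emph{not} orthogonal, so you must apply another $\|a+b\|^2\le 2\|a\|^2+2\|b\|^2$ there. The net effect is a factor $4$ (not $2$) in front of the heterogeneity and drift terms. Also, contrary to your closing remark, the $\sigma_s^2$ term is the one where your approach loses nothing (because of the orthogonality you identified), while the looseness lands on $\sigma_G^2$ and the consensus term. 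If you want the exact constants in the lemma, swap the order: split off the drift first, then use the variance identity instead of Jensen for the remaining piece.
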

\begin{proof}
For any semi-positive matrix $\Sigma$, we have
\begin{align} \label{eq.fjiods.ef}
    \Ex \|\widehat{\nabla \f}(\x_k)\|_\Sigma^2 
    \leq& 2\Ex \|\widehat{\nabla \f}(\barx_k\one\tran)\|_\Sigma^2 +2\Ex \|\widehat{\nabla \f}(\x_k)-\widehat{\nabla \f}(\barx_k\one\tran)\|_\Sigma^2
\end{align}
where the inequality utilizes Jensen's inequality.

Next, noticing that the variance identity for any weighted distance $\|\cdot\|_\Sigma$ satisfies
\begin{align}
    \Ex\|\barx_{k} - \Ex \barx_k\|^2_\Sigma =& \Ex \|\barx_{k}\|^2_\Sigma - \Ex (\barx_{k}\tran \Sigma \Ex \barx_k) - \Ex (\Ex \barx_{k}\tran) \Sigma \barx_k +  \|\Ex \barx_k\|^2_\Sigma\nn\\
    =& \Ex \|\barx_{k}\|^2_\Sigma - \|\Ex \barx_k\|^2_\Sigma
\end{align}
Combining with the fact that $\Ex_{w_k}\widehat{\nabla \f}(\barx_k\one\tran) = \nabla F(\barx_k)$, we establish
\begin{align}
    \Ex \|\widehat{\nabla \f}(\barx_k\one\tran)\|_\Sigma^2=\Ex\|\widehat{\nabla \f}(\bar{x}_k\one\tran)- \nabla F(\barx_k)\|_\Sigma^2 + \|{\nabla F}(\barx_k)\|_\Sigma^2 \label{eq.fjiosdnoef.d}
\end{align}
The first term in the above equality can be further bounded through the data heterogeneity assumption that 
\begin{align}
    \Ex\|\widehat{\nabla \f}(\bar{x}_k\one\tran)- \nabla F(\barx_k)\|_\Sigma^2 =& \frac{1}{m^2}\Ex\big\|\sum_{i\in C_r}\Big(\nabla f_i(\bar{x}_k;\xi_k)- \nabla F(\barx_k)\Big)\big\|_\Sigma^2 \nn\\
    =&\frac{1}{m M}\sum_{i=1}^M\|\nabla f_i(\bar{x}_k;\xi_k)- \nabla F(\barx_k)\big\|_\Sigma^2 \nn\\
    \leq& \frac{1}{m} \|\Sigma\| (\sigma_G^2+\sigma_s^2)
\end{align}
where the second equality holds since the zero-mean property. Substituting the above results back to \eqref{eq.fjiods.ef}, we arrive
\begin{align}
    \Ex \|\widehat{\nabla \f}(\x_k)\|_\Sigma^2 
    \leq& 2\|\nabla F(\barx_k)\|^2_\Sigma + \frac{2}{m} \|\Sigma\| (\sigma_G^2+\sigma_s^2) + 2\Ex \|\widehat{\nabla \f}(\x_k)-\widehat{\nabla \f}(\barx_k\one\tran)\|_\Sigma^2\nn\\
    \leq& 2\|\nabla F(\barx_k)\|^2_\Sigma + \frac{2}{m} \|\Sigma\| (\sigma_G^2+\sigma_s^2) + 2L^2\|\Sigma\| \|\x_k - \barx_k\one\tran\|^2_F / M \label{eq.iosd-nsd}
\end{align}
where we applied the $L-$ Lipschitz condition and Jensen's inequality in the last step.
\end{proof}

\subsection{Descent Lemma}
\begin{lemma}
    When $\eta \leq \left\{\frac{\beta_\ell}{mL}, \frac{1}{8\rho_k}\right\}$, the virtual centralized iterates $\bar{x}_k$ of one round satisfy
    \begin{align}
        \Ex F(\bar{x}_{(r+1)\tau+1}) \leq& \Ex F(\bar{x}_{r\tau+1}) - \frac{\eta}{4}\sum_{j=r\tau+1}^{(r+1)\tau} \|\nabla F(\bar{x}_j)\|^2_{H_r^{-1}} + O(\eta^2\mu) \nn\\
    & {} + \frac{4\tau\eta^2}{\beta_\ell m} \sum_{j=r\tau+1}^{(r+1)\tau}\rho_k(\sigma_G^2+\sigma_s^2) + \frac{2 L}{mM}\sum_{j=r\tau+1}^{(r+1)\tau} \|\x_j - \barx_j\one\tran\|^2_F
    \end{align}
    where $\rho_k = \Tr( H^{-1/2}_k\Sigma_k H^{-1/2}_k) + 2\| H^{-1/2}_k\Sigma_k H^{-1/2}_k\|$. $\hfill\qed$
\end{lemma}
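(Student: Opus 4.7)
The plan is to telescope a one-step descent bound over the $\tau$ iterations that make up one round. The crucial design choice is to \emph{not} apply $L$-smoothness in the form $F(y)\le F(x)+\nabla F(x)\tran(y-x)+\tfrac{L}{2}\|y-x\|^2$, but instead the tighter Hessian-based quadratic bound $F(y)\le F(x)+\nabla F(x)\tran(y-x)+\tfrac{1}{2}(y-x)\tran \Sigma_j (y-x)$ (with $\Sigma_j$ the Hessian along the segment, upper-bounded by $LI$); only this form yields precisely the constant $\rho_k=\Tr(H_r^{-1/2}\Sigma_k H_r^{-1/2})+2\|H_r^{-1/2}\Sigma_k H_r^{-1/2}\|$ after the Gaussian fourth-moment identity is applied.

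For a local-update index $j\neq r\tau$, I would substitute $\bar x_{j+1}=\bar x_j-\eta z_j z_j\tran\overline{\nabla\f}(\x_j)+O(\mu\eta)$ and take expectation with respect to $z_j\sim\cN(0,H_r^{-1})$. The linear term becomes $-\eta\,\nabla F(\bar x_j)\tran H_r^{-1}\overline{\nabla\f}(\x_j)$, which I split as $-\eta\|\nabla F(\bar x_j)\|_{H_r^{-1}}^2$ plus a cross term with $\overline{\nabla\f}(\x_j)-\nabla F(\bar x_j)$; Young's inequality together with the $L$-Lipschitz property of $F$ absorbs half of the cross term into $\tfrac{\eta}{4}\|\nabla F(\bar x_j)\|_{H_r^{-1}}^2$ and leaves a client-drift residual proportional to $\tfrac{L}{M}\|\x_j-\bar x_j\one\tran\|_F^2$. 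For the quadratic term, Lemma~\ref{lemma.gaussian.moment} with $\Lambda=H_r^{-1}$ and $W=\Sigma_j$ gives
\begin{align}
\Ex\bigl[g\tran z_j z_j\tran\Sigma_j z_j z_j\tran g\bigr]=\Tr(\Sigma_j H_r^{-1})\|g\|_{H_r^{-1}}^2+2g\tran H_r^{-1}\Sigma_j H_r^{-1}g\;\le\;\rho_k\|g\|_{H_r^{-1}}^2,
\end{align}
and then the sampling-variance lemma converts $\|\overline{\nabla\f}(\x_j)\|_{H_r^{-1}}^2$ into $\|\nabla F(\bar x_j)\|_{H_r^{-1}}^2+\tfrac{1}{m\beta_\ell}(\sigma_G^2+\sigma_s^2)+\tfrac{L^2}{M\beta_\ell}\|\x_j-\bar x_j\one\tran\|_F^2$.

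The aggregation index (where $\bar x_{k+1}$ is built from $\hat x_k=\x_k w_k$ rather than $\bar x_k$) is handled by writing $\hat x_k=\bar x_k+e_k$ with $\Ex_{w_k} e_k=0$ and $\widehat{\nabla\f}(\x_k)$ in place of $\overline{\nabla\f}(\x_k)$; the cross term with $e_k$ vanishes in expectation, while $\Ex\|e_k\|^2$ and $\Ex\|\widehat{\nabla\f}(\x_k)\|_{H_r^{-1}}^2$ are both controlled by client-drift terms that are already present in the stated bound, so no new term type is created. After summing $j=r\tau+1,\dots,(r+1)\tau$ and using the step-size restrictions $\eta\le \beta_\ell/(mL)$ and $\eta\le 1/(8\rho_k)$ to absorb the $\eta^2\rho_k/(m\beta_\ell)\|\nabla F\|_{H_r^{-1}}^2$ contributions from the Gaussian-moment step into the leading $-\tfrac{\eta}{2}\|\nabla F\|_{H_r^{-1}}^2$ factor, I reach the claimed coefficient $-\tfrac{\eta}{4}$. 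The residual $O(\mu\eta)$ from the zeroth-order directional expansion feeds through the quadratic term as $O(\eta^2\mu)$ and is collected in the stated error.

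I expect the main obstacle to lie in two places: (i) aligning the per-step Hessian notation ($H_{r,k}$ vs.\ the round-level $H_r$ appearing in the lemma)---this is reconciled by Assumption~\ref{assumption.bounded-learned-hessian}, which lets norms be converted up to the factors $\beta_\ell,\beta_u$ hidden in the step-size conditions; and (ii) bookkeeping the aggregation-step terms (the $e_k$ contribution and the need for $\widehat{\nabla\f}$ instead of $\overline{\nabla\f}$) so that they do not generate a new residual beyond what the variance lemma already provides---ensuring the telescoping sum cleanly terminates with exactly the three error families (client drift, sampling variance, ZO bias) appearing on the right-hand side.
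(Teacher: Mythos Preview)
Your proposal is correct and follows essentially the same route as the paper: the Hessian-weighted quadratic expansion in place of plain $L$-smoothness, the case split between local-update and communication iterations, the Gaussian fourth-moment identity (Lemma~\ref{lemma.gaussian.moment}) producing exactly $\rho_k$, the sampling-variance lemma to expand $\|\widehat{\nabla\f}(\x_k)\|_{H_r^{-1}}^2$, and the two step-size restrictions to absorb the $\eta^2\rho_k\|\nabla F\|_{H_r^{-1}}^2$ and the $\|e_k\|_{\Sigma_k}^2$-induced drift into the leading $-\tfrac{\eta}{4}$ coefficient. One small remark: your anticipated obstacle (i) is not actually an issue, since $H_r$ is held fixed throughout the round by construction (the paper simply notes ``we replace $H_k$ by $H_r$ since it is not updated within one communication round''), so no conversion via Assumption~\ref{assumption.bounded-learned-hessian} is needed there.
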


\textbf{Proof.}\;
Recall there are two random variables in the main recursion Eq.~(\ref{eq.main.recursion}), one is the ZO random direction $z_k$ and the other is the client sampling vector $w_k$. 
First, taking the conditional expectation over $w_k$, we have
\begin{align}
    \Ex_{w_k}\barx_{k+1}= \barx_{k} - \eta z_kz_k\tran \overline{\nabla\f}(\x_k)  + \cO(\eta\mu)
\end{align}
for any iteration $k$.
Then, taking conditional expectation over $z_k$, we have
\begin{align}
    \Ex \barx_{k+1} =&\barx_{k} - \eta H_k^{-1} \overline{\nabla\f}(\x_k) + \cO(\eta\mu)
\end{align}

As a result of Assumption \ref{assump.l.hessian}, there is a semi-positive definite matrix $\Sigma_y \preceq L\cdot I_d$ such that the global loss function satisfies
    \begin{align}
        F(x)\leq F(y) + \langle\nabla F(y), x-y\rangle + \frac{1}{2}(x-y)\tran \Sigma_y (x-y). 
    \end{align}
Hence,  we have
\begin{align}
    F(\barx_{k+1}) \leq F(\barx_{k}) + \langle \nabla F(\barx_k), \barx_{k+1} -\barx_{k}\rangle + \frac{1}{2}(\barx_{k+1} -\barx_{k})\tran \Sigma_k (\barx_{k+1} -\barx_{k})
\end{align}
Now, substituting Eq.~(\ref{eq.main.recursion}) into the above expansion and taking the conditional expectation, we will establish the following two cases. 

\textbf{Local Update Iteration:}

When the iteration $k$ is not the communication iteration, i.e. $k\neq r \tau$, we have
\begin{align}
    \Ex F(\barx_{k+1}) \leq& F(\barx_{k}) - \eta \overline{\nabla\f}(\x_k)\tran H_k^{-1} \nabla F(\barx_k) + O(\eta^2\mu) \nn\\
    &{} + \eta^2 \Ex[\widehat{\nabla\f}(\x_k)\tran z_kz_k\tran \Sigma_k z_kz_k\tran\widehat{\nabla\f}(\x_k)] 
\end{align}
First, we focus on the cross term 
\begin{align}
    -\overline{\nabla\f}(\x_k)\tran H_k^{-1} \nabla F(\barx_k)
    =& -\nabla F(\barx_k)\tran H_k^{-1} \nabla F(\barx_k) + (\nabla F(\barx_k) - \overline{\nabla\f}(\x_k))\tran H_k^{-1} \nabla F(\barx_k)\nn\\
    \leq& -\|\nabla F(\barx_k)\|^2_{H^{-1}_k} + \frac{1}{2}\| \nabla F(\barx_k)\|^2_{H_k^{-1}} + \frac{1}{2}\| \nabla F(\barx_k) - \overline{\nabla\f}(\x_k)\|^2_{H_k^{-1}} \nn\\
    =&-\frac{1}{2}\| \nabla F(\barx_k)\|^2_{H_k^{-1}} + \frac{1}{2}\| \nabla F(\barx_k) - \overline{\nabla\f}(\x_k)\|^2_{H_k^{-1}}
\end{align}
Because of Assumption \ref{assumption.bounded-learned-hessian}, we have $\beta_u^{-1}\leq \|H_k^{-1}\|\leq \beta_\ell^{-1}$, which implies
\begin{align}
     \frac{1}{2}\| \nabla F(\barx_k) - \overline{\nabla\f}(\x_k)\|^2_{H_k^{-1}}
     \leq& 
     \frac{1}{2\beta_\ell}\| \nabla F(\barx_k) - \overline{\nabla\f}(\x_k)\|^2 \nn\\
     \leq&\frac{1}{2\beta_\ell N}\sum_{i=1}^M \|\nabla f_i(\barx_k) - \nabla f_i(x_k^{(i)})\|^2\nn\\
     =& \frac{L^2 }{2 \beta_\ell N}\|\x_k - \barx_k\one\tran\|^2_F
\end{align}
Substituting back, we have
\begin{align}
     \Ex F(\barx_{k+1}) \leq& F(\barx_{k}) - \frac{\eta}{2}\| \nabla F(\barx_k)\|^2_{H_k^{-1}} + \frac{\eta L^2 }{2\beta_\ell N}\|\x_k - \barx_k\one\tran\|^2_F \nn\\
     &{} + \eta^2 \underbrace{\Ex[\widehat{\nabla\f}(\x_k)\tran z_kz_k\tran\Sigma_k z_kz_k\tran \widehat{\nabla\f}(\x_k)] }_{:=Q}
\end{align}
Next, the key is this quadratic term. Leveraging Lemma \ref{lemma.gaussian.moment}, we establish
\begin{align}
    Q = & \Ex_{w_k}\left(\widehat{\nabla\f}(\x_k)\tran \Big(\Tr(\Sigma_kH^{-1}_k) H^{-1}_k + 2 H^{-1}_k\Sigma_k H^{-1}_k\Big) \widehat{\nabla\f}(\x_k) \right)\nn\\
    \leq & (\Tr(\Sigma_kH_k^{-1}) + 2\|H^{-1/2}\Sigma_k H^{-1/2} \|) \Ex_{w_k} \| \widehat{\nabla\f}(\x_k) \|_{H_k^{-1}}^2 \label{eq.sfjiosd.q}
\end{align}
where we utilize the following inequality in the last step
$$
\|x\|^2_{H_k^{-1}\Sigma_kH_k^{-1}} = \Tr(H_k^{-1/2}x x\tran H_k^{-1/2}H_k^{-1/2}\Sigma_kH_k^{-1/2})\leq \|H_k^{-1/2}\Sigma_kH_k^{-1/2}\|\|x\|_{H^{-1}_k}^2.
$$
For simplicity, we introduce the matrix $\Xi_k = H^{-1/2}_k\Sigma_k H^{-1/2}_k$. Plugging the previous sampling noise variance result (\ref{eq.iosd-nsd}), we establish
\begin{align}
    Q \leq (\Tr(\Xi_k) + 2\|\Xi_k\|)\Big(2\|\nabla F(\barx_k)\|^2_{H_k^{-1}} + \frac{2}{\beta_\ell m}  (\sigma_G^2+\sigma_s^2) + \frac{2L^2}{\beta_\ell M} \|\x_k - \barx_k\one\tran\|^2_F / M\Big)
\end{align}
This $\Tr(\Xi_k) + 2\|\Xi_k\|$ is the key quantity that we will encounter repeatedly. To further reduce the notation, we denote $\rho_k = \Tr(\Xi_k) + 2\|\Xi_k\|$
Combining all the above results, we have
\begin{align}
    \Ex F(\barx_{k+1}) \leq& F(\barx_{k}) - \Big(\frac{\eta}{2} - 2\eta^2\rho_k \Big)\| \nabla F(\barx_k)\|^2_{H_k^{-1}} + O(\eta^2\mu)  \nn\\
    &{}+ \Big(\frac{\eta L^2}{2 \beta_\ell M} + \frac{2\eta^2 L^2 \rho_k}{\beta_\ell M} \Big)\|\x_k - \barx_k\one\tran\|^2_F  + \frac{2\eta^2\rho_k} {\beta_\ell m}(\sigma_G^2+\sigma_s^2) \label{eq.fhi9h2d}
\end{align}
When $\eta\leq \frac{1}{4\rho_k}$, the coefficients can be simplified into 
\begin{align}
    \Ex F(\barx_{k+1}) \leq& F(\barx_{k}) - \frac{\eta}{4} \| \nabla F(\barx_k)\|^2_{H_k^{-1}} + O(\eta^2\mu)  \nn\\
    &{}+ \frac{\eta L^2 }{\beta_\ell M}\|\x_k - \barx_k\one\tran\|^2_F  + \frac{2\eta^2\rho_k} {\beta_\ell m}(\sigma_G^2+\sigma_s^2)
\end{align}

\textbf{Communication Iteration:}

When the iteration $k$ is the communication iteration, i.e. $k\neq r \tau$, we have
\begin{align}
    \Ex F(\barx_{k+1}) \leq& F(\barx_{k}) - \eta \overline{\nabla\f}(\x_k)\tran H_k^{-1} \nabla F(\barx_k) + O(\eta^2\mu) \nn\\
    &{} + \Ex \left(\hat{x}_{k}-\barx_{k}-\eta \eta z_k z_k\tran \widehat{\nabla \f}(\x_k)\right)\tran \Sigma_k \left(\hat{x}_{k}-\barx_{k}-\eta \eta z_k z_k\tran \widehat{\nabla \f}(\x_k)\right)\nn\\
    \leq& F(\barx_{k}) - \eta \overline{\nabla\f}(\x_k)\tran H_k^{-1} \nabla F(\barx_k) + O(\eta^2\mu) \nn\\
    &{} + 2\Ex \left(\hat{x}_{k}-\barx_{k}\right)\tran \Sigma_k \left(\hat{x}_{k}-\barx_{k}\right) + 2\eta^2 \Ex[\widehat{\nabla\f}(\x_k)\tran z_kz_k\tran\Sigma_k z_kz_k\tran\widehat{\nabla\f}(\x_k)] 
\end{align}
Next, we notice that
\begin{align}
    \Ex \left(\hat{x}_{k}-\barx_{k}\right)\tran \Sigma_k \left(\hat{x}_{k}-\barx_{k}\right) \leq& L\Ex\|\hat{x}_{k}-\barx_{k}\|^2 =\frac{L}{m M}\|\x_k-\barx_{k}\one\tran\|^2_F 
\end{align}
Utilizing previously established result Eq.~(\ref{eq.fhi9h2d}), we have
\begin{align}
    \Ex F(\barx_{k+1}) \leq& F(\barx_{k}) - \Big(\frac{\eta}{2} - 4\eta^2\rho_k \Big)\| \nabla F(\barx_k)\|^2_{H_k^{-1}} + O(\eta^2\mu)  \nn\\
    &{}+ \Big(\frac{L}{m}+\frac{\eta L^2 }{2\beta_\ell} + \frac{4\eta^2 L^2}{\beta_\ell M} \rho_k \Big)\|\x_k - \barx_k\one\tran\|^2_F + \frac{4\eta^2\rho_k}{ \beta_\ell m}(\sigma_G^2+\sigma_s^2)
\end{align}
When $\eta\leq \frac{1}{8\rho_k}$, the coefficients can be simplified into 
\begin{align}
    \Ex F(\barx_{k+1}) \leq& F(\barx_{k}) - \frac{\eta}{4} \| \nabla F(\barx_k)\|^2_{H_k^{-1}} + O(\eta^2\mu)  \nn\\
    &{}+ \Big(\frac{L}{m M} + \frac{\eta L^2}{\beta_\ell M} \Big)\|\x_k - \barx_k\one\tran\|^2_F + \frac{4\eta^2\rho_k} {\beta_u m}(\sigma_G^2+\sigma_s^2)
\end{align}
We further require the learning rate $\eta\leq \frac{\beta_\ell}{mL}$ to establish
\begin{align}
    \Ex F(\barx_{k+1}) \leq& F(\barx_{k}) - \frac{\eta}{4} \| \nabla F(\barx_k)\|^2_{H_k^{-1}} + O(\eta^2\mu)  \nn\\
    &{}+ \frac{2L}{m M} \|\x_k - \barx_k\one\tran\|^2_F  + \frac{4\eta^2\rho_k}{\beta_\ell m}(\sigma_G^2+\sigma_s^2)
\end{align}

\textbf{Combining Two into One Round:}

Combining the above two results and iterating from $k=r\tau+1$  to $k=(r+1)\tau$, we establish

\begin{align}
    \Ex F(\bar{x}_{(r+1)\tau+1}) \leq& \Ex F(\bar{x}_{r\tau+1}) - \frac{\eta}{4}\sum_{j=r\tau+1}^{(r+1)\tau} \|\nabla F(\bar{x}_j)\|^2_{H_r^{-1}} + O(\eta^2\mu) \nn\\
    & {} + \frac{4\tau\eta^2\rho_k}{\beta_\ell m}(\sigma_G^2+\sigma_s^2) + \frac{2 L}{mM}\sum_{j=r\tau+1}^{(r+1)\tau} \|\x_j - \barx_j\one\tran\|^2_F,
\end{align}
where we can absorb the coefficients on the consensus term $\|\x_j - \barx_j\one\tran\|^2_F$ into $2L/mM$ since above we already require the learning rate $\eta \leq \frac{\beta_\ell}{mL}$. Also, we replace $H_k$ by $H_r$ since it is not updated within one communication round.
$\hfill \qed$

\subsection{Consensus Lemma}
\begin{lemma}
    When $\eta \leq \frac{\beta_\ell}{4(\tau-1)}\sqrt{\frac{1}{L(d+2)}}$, the sum of the consensus error of one round is bounded by the following term
\begin{align}
    \frac{1}{\tau}\sum_{k=r\tau+1}^{(r+1)\tau}\Ex\|\x_{k} - \barx_{k}\one\tran\|^2_F \leq 4\eta^2 (\tau-1)^2 M\beta_\ell^{-1} \|\Phi_r\| (\sigma_G^2 + \sigma_s^2) + O(\eta^2\mu^2)
\end{align}
where $\Phi_r:=\Tr(H_r^{-1}) + 2H_r^{-1}$. $\hfill\qed$
\end{lemma}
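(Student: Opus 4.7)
The plan is to exploit the virtual reformulation, which enforces full consensus at the start of every round: since the aggregation matrix $W_{r\tau}$ has identical columns, we have $\x_{r\tau+1} = \barx_{r\tau+1}\one\tran$. Within the round no communication takes place ($W_k = I$) and the Hessian estimate is frozen ($H_k = H_r$), so the consensus error accumulates purely from the local zeroth-order perturbations, starting from zero.

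First, I would subtract the centralized recursion from the local one to obtain
\begin{align*}
\x_{k+1} - \barx_{k+1}\one\tran
= (\x_k - \barx_k\one\tran) - \eta\, H_r^{-1/2} u_k u_k\tran H_r^{-1/2}\big(\nabla \f(\x_k) - \overline{\nabla\f}(\x_k)\one\tran\big) + O(\eta\mu).
\end{align*}
Iterating from $k = r\tau+1$ and writing $G_j := \nabla\f(\x_j) - \overline{\nabla\f}(\x_j)\one\tran$ yields
\begin{align*}
\x_k - \barx_k\one\tran = -\eta\sum_{j=r\tau+1}^{k-1} H_r^{-1/2} u_j u_j\tran H_r^{-1/2} G_j + O(\eta\mu).
\end{align*}
Squaring in Frobenius norm, applying Jensen's inequality on the outer sum (which costs a factor $\tau-1$), and invoking Lemma~\ref{lemma.gaussian.moment} in the form $\Ex u_j u_j\tran H_r^{-1} u_j u_j\tran = \Tr(H_r^{-1}) I + 2 H_r^{-1}$ produces
\begin{align*}
\Ex\|H_r^{-1/2} u_j u_j\tran H_r^{-1/2} G_j\|_F^2 \leq \beta_\ell^{-1} \|\Phi_r\| \cdot \Ex\|G_j\|_F^2,
\end{align*}
which is exactly where the $\|\Phi_r\|$ factor from the statement materializes.

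Second, I would close the recursion by bounding $\Ex\|G_j\|_F^2$. Adding and subtracting $\nabla f_i(\barx_j)$ and $\nabla F(\barx_j)$ and applying Assumptions~\ref{assumption.stochastic gradients}, \ref{assumption.bounded-hetero}, together with $L$-smoothness (Assumption~\ref{assump.l.hessian}), gives $\Ex\|G_j\|_F^2 \leq 3M(\sigma_G^2 + \sigma_s^2) + 3L^2 \Ex\|\x_j - \barx_j\one\tran\|_F^2$. Substituting back and summing $k$ from $r\tau+1$ to $(r+1)\tau$ produces a self-referential inequality of the form $S_r \leq A + B\cdot S_r$ with $S_r := \sum_{k}\Ex\|\x_k - \barx_k\one\tran\|_F^2$. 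The stated learning-rate threshold, paired with the worst-case bound $\|\Phi_r\| \leq (d+2)\beta_\ell^{-1}$, makes $B \leq 1/2$, so the recursive term is absorbed into the LHS; dividing by $\tau$ delivers the claimed bound, with the $O(\eta^2\mu^2)$ remainder coming from the zeroth-order Taylor error.

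The hard part is the final bookkeeping in the self-referential step: that structure is what forces the learning-rate threshold to scale with $L(d+2)$ rather than just $L$, and care is needed to make the constants line up with the claimed factor $4$ on the right-hand side. A secondary subtlety is that one must use crucially that $H_k$ stays frozen during local updates -- this is what allows pulling $H_r^{-1/2}$ outside the telescoped sum so that a single application of Lemma~\ref{lemma.gaussian.moment} captures all the fourth-moment contributions cleanly, rather than producing a product of distinct per-step preconditioners that would be much harder to control.
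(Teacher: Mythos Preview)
Your proposal is correct and follows essentially the same route as the paper: telescope from the synchronized point $\x_{r\tau+1}=\barx_{r\tau+1}\one\tran$, pay a factor $\tau-1$ via Jensen, invoke the Gaussian fourth-moment lemma to produce the weight $\Phi_r H_r^{-1}$, split the gradient discrepancy into Lipschitz, heterogeneity, and stochastic pieces, and then close the self-referential inequality using $\|\Phi_r\|\leq (d+2)\beta_\ell^{-1}$. The only cosmetic difference is ordering---the paper decomposes $\nabla\f(\x_j)-\nabla\f(\x_j)\one_u\one\tran$ into three pieces \emph{before} applying the moment lemma (yielding constants $8$ and $2$), whereas you apply the moment bound first and then split $G_j$ (yielding a uniform factor $3$); this only shifts constants and does not change the argument.
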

\textbf{Proof}. The consensus residual is defined as 
\begin{align}
    \|\x_{k+1} - \barx_{k+1}\one\tran\|^2_F = \|\x_{k} -  \barx_{k} \one\tran- \eta ( z_k z_k\tran \nabla \f(\x_k) - z_k z_k\tran \nabla \f(\x_k)\one_u\one\tran) + O(\eta \mu)\|^2_F
\end{align}
If $k=r\tau$, all clients have the same value. Hence, we can expand the difference $\x_{k} -  \barx_{k} \one\tran$ up to $k=r\tau$ and arrive at
\begin{align}
    &\|\x_{k+1} - \barx_{k+1}\one\tran\|^2_F \nn\\
    =& \left\| \eta \sum_{j=r\tau+1}^k\big(z_j z_j\tran \nabla \f(\x_j) - z_j z_j\tran \nabla \f(\x_j)\one_u\one\tran\big) + O(\eta \mu)\right\|^2_F \nn\\
    \leq&(\tau-1) \sum_{j=r\tau+1}^k \eta^2\|z_j z_j\tran \nabla \f(\x_j) - z_j z_j\tran \nabla \f(\x_j)\one_u\one\tran\|^2_F + O(\eta^2 \mu^2),
\end{align}
where we utilize Jensen's inequality in the above step. Next, we focus on the term in the summation
\begin{align}
    &\hspace{-3mm}\|z_j z_j\tran \nabla \f(\x_j) - z_j z_j\tran \nabla \f(\x_j)\one_u\one\tran\|^2_F \nn\\
    \leq& 4\|z_j z_j\tran \nabla \f(\x_j) - z_j z_j\tran \nabla \f(\bar{x}_j\one\tran)\|^2_F + 2\|z_j z_j\tran \nabla \f(\bar{x}_j\one\tran) - z_j z_j\tran \nabla F(\bar{x}_j\one\tran)\one\tran\|^2_F \nn\\
    &\;\;+ 4\|z_j z_j\tran \nabla \f(\bar{x}_j\one\tran)\one_u\one\tran - z_j z_j\tran \nabla \f(\x_j)\one_u\one\tran\|^2_F \nn\\
    \leq&8\|z_j z_j\tran \nabla \f(\x_j) - z_j z_j\tran \nabla \f(\bar{x}_j\one\tran)\|^2_F + 2\|z_j z_j\tran \nabla \f(\bar{x}_j\one\tran) - z_j z_j\tran \nabla F(\bar{x}_j\one\tran)\one\tran\|^2_F 
\end{align}
where we utilize the identity that $\nabla \F(\bar{x}_j\one\tran) = \nabla \f(\bar{x}_j\one\tran)\one_u$.
Recall that 
\begin{align}
    \Ex z_j z_j\tran z_j z_j\tran = \Tr(H^{-1}_r) H^{-1}_r + 2H^{-2}_r:= \Phi_r H_r^{-1}
\end{align}
where $r$ is the corresponding round for the iteration $j$. Notice $\|\Phi_r\| \leq (d+2)/\beta_\ell$, which is not a tight bound though.
Hence, taking the expectation with respect to $z_j$, we establish
\begin{align}
    &\hspace{-3mm}\Ex\|\x_{k+1} - \barx_{k+1}\one\tran\|^2_F\nn\\
    \leq& 8\eta^2(\tau-1)\!\sum_{j=r\tau+1}^k \!\|\nabla \f(\x_j) - \nabla \f(\barx_j\one\tran)\|^2_{\Phi_r H_r^{-1}} \nn\\
    &{}\;\;+  
    2\eta^2(\tau-1) \sum_{j=r\tau+1}^k \|\nabla \f(\barx_j\one\tran)  - \nabla F(\bar{x}_j\one\tran)\one\tran \|^2_{\Phi_r H_r^{-1}}
    + O(\eta^2\mu^2) \nn\\
    \leq& 8\eta^2 (\tau-1)L \beta_\ell^{-1}\|\Phi_r\|\sum_{j=r\tau+1}^k\|\x_{j} - \barx_{j}\one\tran\|^2_F + 2\eta^2 (\tau-1)^2 M \beta_\ell^{-1}\|\Phi_r\| (\sigma_G^2 + \sigma_s^2) + O(\eta^2\mu^2)
\end{align}

Lastly, we just need to take another summation over $k$ from $r\tau$ to $(r+1)\tau-2$. Recall that $\|\x_{r\tau+1} - \barx_{r\tau+1}\one\tran\|^2_F=0$. After rearranging and utilizing the fact that $\sum_{k=r\tau}^{(r+1)\tau-2}\sum_{j=r\tau+1}^k a_j \leq (\tau-1) \sum_{k=r\tau+1}^{(r+1)\tau} a_k$ for any nonnegative value $a_k$, we have
\begin{align}
    &\left(1 -  8 \eta^2 (\tau-1)^2L \beta_\ell^{-1}\|\Phi_r\|\right)\frac{1}{\tau}\sum_{k=r\tau+1}^{(r+1)\tau}\|\Ex\|\x_{k} - \barx_{k}\one\tran\|^2_F \nn\\
    &\hspace{15mm}\leq 2\eta^2 (\tau-1)^2 M \beta_\ell^{-1} \|\Phi_r\| (\sigma_G^2 + \sigma_s^2) + O(\eta^2\mu^2)
\end{align}
After restricting $\eta$ to force $1 -  8 \eta^2 (\tau-1)^2L \beta_\ell^{-1}\|\Phi_r\| < 1/2$, we establish this lemma. $\hfill\qed$

A special case is local update step $\tau=1$. In this case, we do not need any consensus error since the models are all synchronized. We can simply discard the term $\Ex\|\x_{k} - \barx_{k}\one\tran\|^2_F$ in the descent lemma.

\subsection{Convergence Analysis of Theorem 
\ref{theorem-main}}

\subsubsection{Convergence Proof of Theorem 
\ref{theorem-main}}

\textbf{Proof:}
We are now ready to present the convergence theorem, which simply combines the consensus lemma and the descent lemma above then taking the double exepection.
\begin{align}
    \Ex [F(\bar{x}_{(r+1)\tau+1})] \leq & \Ex [F(\bar{x}_{r\tau+1})] - \frac{\eta}{4}\sum_{j=r\tau}^{(r+1)\tau-1} \Ex \|\nabla F(\bar{x}_j)\|^2_{H_r^{-1}} + O(\eta^2\mu) \nn\\
    & {} + \frac{4\tau\eta^2\rho_k}{\beta_\ell m}(\sigma_G^2+\sigma_s^2) + \frac{8\eta^2 (\tau-1)^2 L }{\tau m}\sum_{j=r\tau}^{(r+1)\tau-1}\|\Phi_r\| (\sigma_G^2 + \sigma_s^2)
\end{align}

Expanding the summations and re-arranging terms, we obtain
\begin{align}
\frac{1}{\tau R}\sum_{j=1}^{\tau R} \Ex\|\nabla F(\bar{x}_j)\|^2_{H_r^{-1}}\leq& \frac{4 (F(\bar{x}_1) - F^\star)}{\eta \tau R} + \frac{16\eta\bar{\rho}}{\beta_\ell m}(\sigma_G^2+\sigma_s^2) + \frac{32\eta (\tau-1)^2 L\bar{\phi}}{\beta_\ell \tau m}(\sigma_G^2 + \sigma_s^2)\nn\\
&\;\;+ \cO(\eta\mu),
\end{align}
where 
\begin{align}
\bar{\rho} =& \frac{1}{K}\sum_{k=0}^{K} \rho_k=\frac{1}{K}\sum_{k=0}^{K}(\Tr(\Xi_k) + 2\|\Xi_k\|)\\
=&\frac{1}{K}\sum_{k=0}^{K}(\Tr(H_k^{-1/2}\Sigma_kH_k^{-1/2}) + 2\|H_k^{-1/2}\Sigma_kH_k^{-1/2}\|)\\
\bar{\phi}=&\frac{1}{R}\sum_{r}\|\Phi_r\| = \frac{1}{R}\sum_{r}  (\Tr(H_r^{-1}) + 2\|H_r^{-1}\|)
\end{align}

Combining all learning rate requirements, we have
\begin{align}
    \eta \leq \min\left(\frac{\beta_\ell}{mL}, \frac{1}{8\rho_k}, \frac{\beta_\ell}{4(\tau-1)}\sqrt{\frac{1}{L(d+2)}}\right)
\end{align}

Lastly, translating the above result back to the two-level $k$ and $r$ indexing, we establish Theorem \ref{theorem-main}.

\subsubsection{Convergence Rate}\label{appendix.conv.rate}

To establish the convergence rate, we distinguish two scenarios -- the local update $\tau=1$ and the local update $\tau>1$. When $\tau=1$, the rate becomes much simpler
\begin{align}
    \frac{1}{R}\sum_{r=0}^{R-1}\Ex \|\nabla F(\bar{x}_{r,0})\|^2_{H_r^{-1}}\leq \frac{4 (F(\bar{x}_1) - F^\star)}{\eta R} + \frac{16\eta\bar{\rho}}{\beta_\ell m}(\sigma_G^2+\sigma_s^2) + \cO(\eta\mu),
\end{align}
When the communication round $R$ is sufficiently large and the ZO smoothing parameter $\mu$ is sufficiently small, we choose the learning rate $\eta = \sqrt{\frac{m\beta_\ell}{\bar{\rho}R}}$, which leads to the following rate:
\begin{align}
    \frac{1}{R}\sum_{r=0}^{R-1}\Ex \|\nabla F(\bar{x}_{r,0})\|^2_{H_r^{-1}} = \cO\left(\sqrt{\frac{\bar{\rho}}{mR}}\right)
\end{align}
Based on the Table \ref{tab:zo-upper-bound}, we can establish the following four rates based on the conditions:
\begin{enumerate}
    \item $H_r$ is a well-approximated one with $L$-smoothness assumption, then the rate is  $\cO\left(\sqrt{\frac{d}{mR}}\right)$.

    \item $H_r$ is a well-approximated one with low effective rank, then the rate is  $\cO\left(\sqrt{\frac{\zeta}{mR}}\right)$.
    \item DeComFL Case: No Hessian information is learned, i.e., $H_k\equiv I$, with $L$-smoothness assumption, then the rate is $\cO\left(\sqrt{\frac{Ld}{mR}}\right)$.
    \item DeComFL Case: No Hessian information is learned, i.e., $H_k\equiv I$, with low effective rank, then the rate is $\cO\left(\sqrt{\frac{L\kappa}{mR}}\right)$.
\end{enumerate}

For the local update $\tau>1$ case, we choose the learning rate $\eta = \min\left(\sqrt{\frac{m\beta_\ell}{\tau \bar{\rho}R}}, \sqrt{\frac{m\beta_\ell}{\tau \bar{\phi}R}}\right)$. Then we obtain the following rate
\begin{align}
    \frac{1}{\tau R}\sum_{r=0}^{R-1}\sum_{k=0}^{\tau-1}\Ex \|\nabla F(\bar{x}_{r,k})\|^2_{H_r^{-1}} = \underbrace{\cO\left(\sqrt{\frac{\bar{\rho}}{\tau mR}}\right)}_{\rm descent\ residue} + \underbrace{\cO\left(\sqrt{\frac{\tau \bar{\phi}}{mR}}\right)}_{\rm consensus\ residue}
\end{align}
where the second extra term comes from the client model diverging in the local update steps.

Similarly, we can establish the four rates based on the assumption. Here we focus on the low effective rank case since it reveals the difference between DeComFL and {\alg}. 

When $H_r \equiv I$, we have $\bar{\phi} = d+2$ and $\bar{\rho} \leq L\kappa$. Therefore, we establish the following rate for DeComFL rate: 
\begin{align} \label{eq.fiosdl.fe}
    \cO\left(\sqrt{\frac{L\kappa}{\tau mR}}\right) + \cO\left(\sqrt{\frac{\tau d}{mR}}\right)
\end{align}

Here we can see that even if $\bar{\rho}$ can be tighter bounded by low-effective rank, the convergence rate still depends on $d$.  

In contrast, if $H_r$ well-approximates the Hessian $\Sigma$ with the low effective rank, we establish the convergence rate for {\alg} is
\begin{align}\label{eq.fiosdl.fe1}
    \cO\left(\sqrt{\frac{\zeta}{\tau mR}}\right) + \cO\left(\sqrt{\frac{\tau \kappa}{mR}}\right)
\end{align}

Now, if we compare Eq.~(\ref{eq.fiosdl.fe}) with Eq.~(\ref{eq.fiosdl.fe1}), we can tell that {\alg} is still capable of being independent of Lipschitz $L$ and model dimension $d$; meanwhile, DeComFL cannot. 
This probably explains why the original paper \citep{li2024achieving} cannot provide the proof for the dimension-free rate with $\tau>1$. Of course, Eq.~(\ref{eq.fiosdl.fe}) is just an upper bound for the worst-case scenario. 
The practical performance may not be pessimistic as the bound indicates.

\subsection{More Well-approximation of Hessian Matrix Analysis and Experiments}
\vspace{-2mm}
\subsubsection{ Theoretical Analysis When Well-approximation Does not Hold}\label{app.well-approx-hessian}\vspace{-3mm}
As mentioned in the main context, whether this approximation holds in the context of LLMs is difficult to assess. To understand the impact of the approximation of the Hessian, we refer back to the key quantity in Theorem 1.
\begin{align}
    \bar{\rho} = \frac{1}{\tau R}\sum_{r}\sum_{k}(\Tr(H_r^{-1/2}\Sigma_{r,k} H_r^{-1/2}) + 2\|H_r^{-1/2}\Sigma_{r,k} H_r^{-1/2}\|).
\end{align}

This is the dominant and distinguished term compared with other zeroth-order methods. Among these two terms in the double summation, $\Tr(H_r^{-1/2}\Sigma_{r,k} H_r^{-1/2})$ is usually much larger than $\|H_r^{-1/2}\Sigma_{r,k} H_r^{-1/2}\|$. Hence, we can just focus on the former term only.
Because both $\Sigma_{r,k}$ and $H_r$ are positive semi-definite matrices, we know 
\begin{align}
    \sum_{j=1}^d \lambda_{j}(\Sigma_{r,k}) / \lambda_{k}(H_{r}) \leq \Tr(H_r^{-1/2}\Sigma_{r,k} H_r^{-1/2}) \leq \sum_{j=1}^d \lambda_{j}(\Sigma_{r,k}) / \lambda_{-k}(H_{r}) ,\label{eq.ifofe}
\end{align}
where the notation $\lambda_j(\cdot)$ denotes the $j$-th eigenvalue of a matrix, ranked in descending order (from largest to smallest), and $\lambda_{-j}(\cdot)$ represents the eigenvalues in ascending (reverse) order.

At the early stage $H_r \approx I$, then $\Tr(H_r^{-1/2}\Sigma_{r,k} H_r^{-1/2}) \approx \Tr(\Sigma_{r,k})$, equivalent to the no learning of Hessian. This can be thought as the baseline. Now, suppose we unfortunately learn a poor Hessian approximation that amplifies the largest eigenvalues of the Hessian while shrinking the smallest ones. In this case, we will obtain a larger $\rho_{ 
r,k}$ than $\Tr(\Sigma_{r,k})$, resulting in a worse outcome than the baseline that does not apply Hessian information. However, this case is unlikely to be stable, since $H_r$ will gradually improve its approximation of  $\Sigma_{r,k}$ to some degree. Most commonly, the value of $\rho_{r,k}$ should lie between the bounds of \eqref{eq.ifofe}. $H_r$ will not have the perfect same eigenspace as $\Sigma_{r,k})$. But as long as the largest few eigenvalues of $\Sigma_{r,k})$ are divided by correspondingly large values from  $H_r$, the total sum should be smaller than  $\Tr(\Sigma_{r,k})$. This is highly probable due to the long-tail distribution of the Hessian's eigenvalues.

Lastly, notice $\bar{\rho}$ has an $O(1/R)$ decay rate, governing how quickly old values are forgotten. It is common for the estimation to be inaccurate at the beginning, but it eventually converges to a stable value. This O(1/R) rate matches the algorithm's convergence rate. Hence, as long as the Hessian approximation converges no slower than $O(1/R)$, any early, inaccurate estimates will not negatively impact the final rate.

\subsubsection{Estimate the impact of meaningful Hessian Approximation} \label{app.diff.nu}
In Figure~\ref{fig:mnist-hessian}, we demonstrated that the final performance is relatively insensitive to the choice of $\nu$, provided that $\nu$ remains close to 1. This is because high values of $\nu$ ensure sufficient smoothing of the Hessian estimate. To further investigate the sensitivity, we conducted additional experiments using the same setup but with significantly smaller values of $\nu$ (e.g., $\nu < 0.5$). In this regime, the estimator effectively utilizes only the most recent 1-2 rounds of zeroth-order information, resulting in a high-variance and uninformative Hessian approximation. The results, shown in Figure~\ref{fig:mnist_small_nu}, reveal that performance deteriorates as $\nu$ decreases, confirming that a well-accumulated Hessian estimate is crucial for acceleration. Importantly, while a poor Hessian approximation degrades performance, the algorithm still converges to a reasonable solution. Since the preconditioner $H$ remains a positive definite diagonal matrix, the "worst-case" scenario resembles a poorly scaled first-order method rather than a catastrophic failure. Conversely, these results validate that when the Hessian is well-learned (high $\nu$), it provides meaningful acceleration.
\begin{figure}
    \centering
    \includegraphics[width=0.53\linewidth]{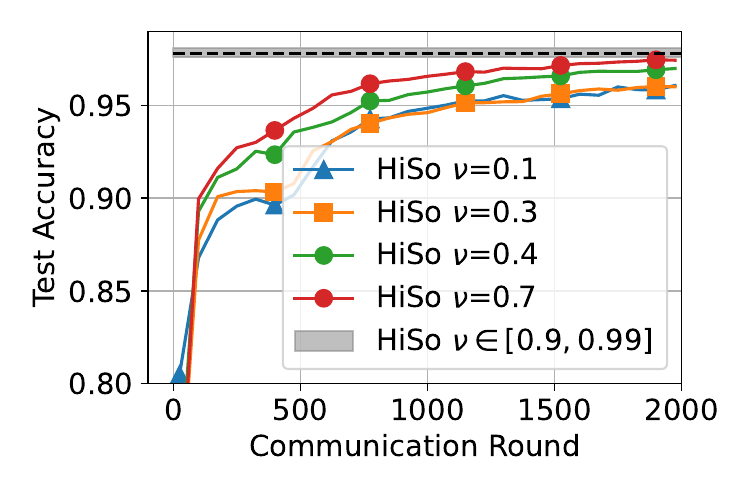}\vspace{-5mm}
    \caption{Impact of smoothing parameter $\nu$ when $\nu$ is way smaller than 0.9. The gray area indicates the final performance of HiSo when $\nu$ is chosen in $[0.9, 0.99]$ range. We can view the $\nu=0.1$ case does not learn a good Hessian approximate, hence the gap between the curves indicates improvement of good Hessian approximation.}
    \label{fig:mnist_small_nu}\vspace{-4mm}
\end{figure}

\subsubsection{Hessian Approximation Evaluation in Random Subspaces} \label{app.subspace.hessian}

While exact computation for the full model is prohibitive, we can rigorously analyze the exact Hessian within restricted subspaces. To do so, we randomly sampled 1000 parameters to compute the exact $1000 \times 1000$ Hessian matrix $\hat{\Sigma}$. We then projected the learned Hessian $H$ from HiSo into this same subspace to obtain $\hat{H}$.

The eigenvalue correlation between $\hat{\Sigma}$ and $\hat{H}$ is modest (7-20\%, which we consider a reasonable value for diagonal approximations). But when we evaluate the quantity $\mathbb{E}\|\hat{u}\|_{\hat{\Sigma}}$ discussed in Table 1, we observe much better improvements. The whitened Hessian eigenvalues are significantly smaller than the dimension $d$ and lower than the low effective rank estimation in that subspace. Specifically, we observe $\hat{\zeta} \sim 150$, the estimated $\hat{L\kappa} \sim 250$, and $\hat{Ld} \sim 5000$. Compared to the ideal synthetic result in figure 4, it is reasonable to expect that the improvement is not significant. But this improvement is more or less consistent with our LLM experiment in Figure 7. We repeated this process across multiple sampled subspaces; the boxplot in the left one Figure~\ref{fig:subspace_hessian} confirms the consistency of this observation.

Finally, we acknowledge that subspace Hessian eigenvalues are not unbiased estimators of the full Hessian. To mitigate this gap, we plotted the estimated quantities across varying subspace dimensions (the right one in Figure~\ref{fig:subspace_hessian}). As expected, the worst-case Lipschitz estimation $Ld$ increases rapidly with dimension, whereas the whitening metric remains robust and insensitive to changes in dimension.

\begin{figure}
    \centering
    \includegraphics[width=0.56\linewidth]{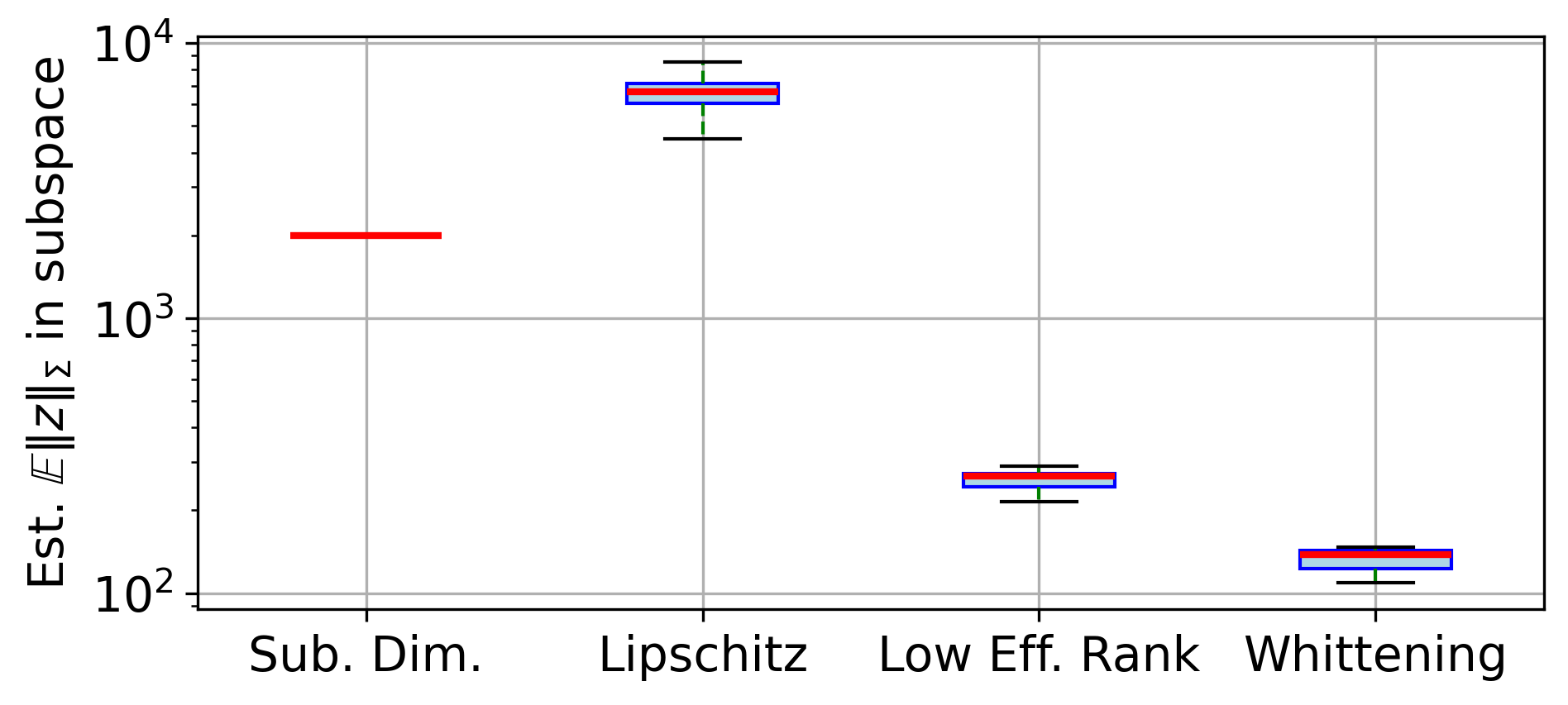}
    \includegraphics[width=0.34\linewidth]{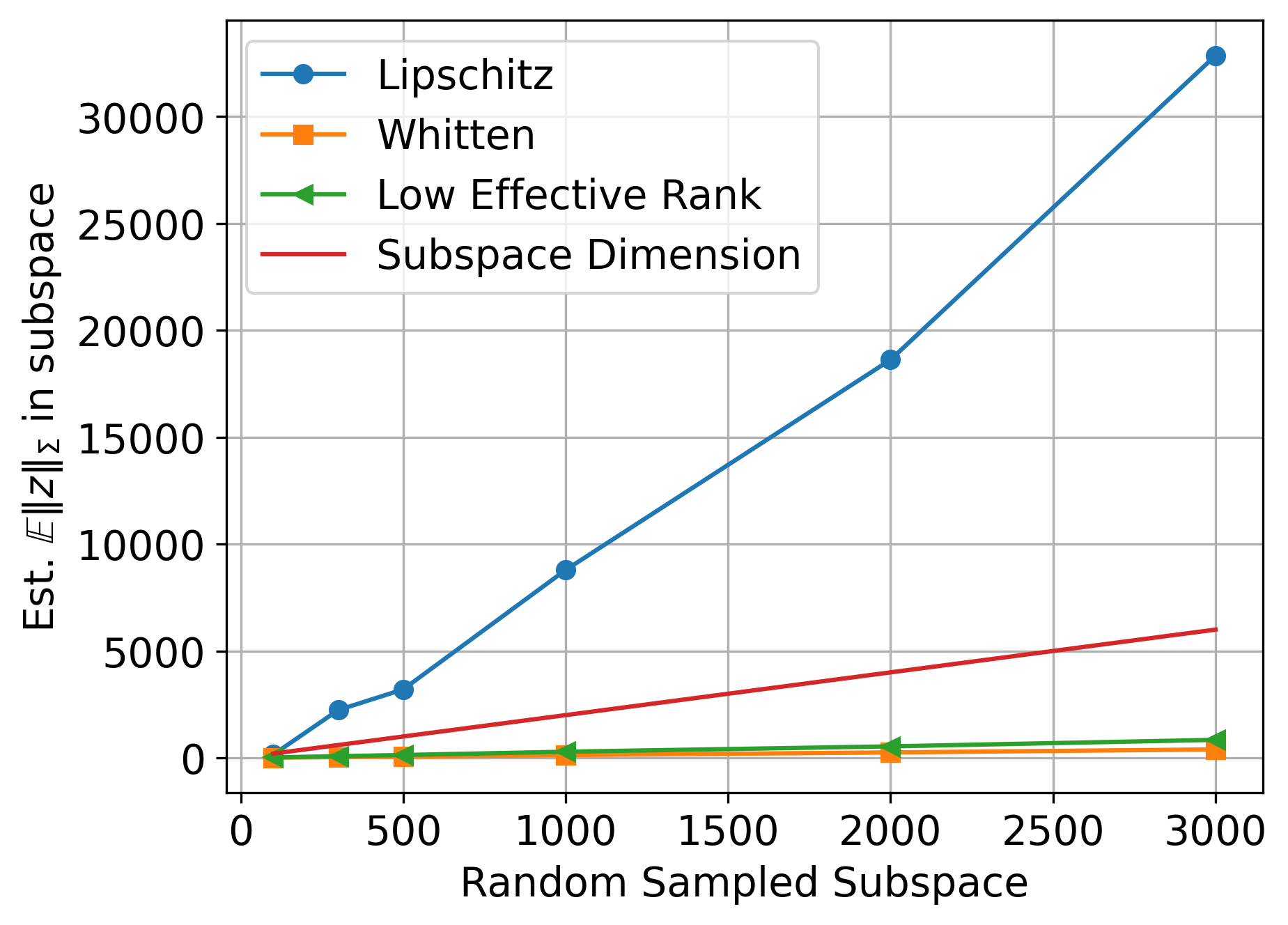}
    \caption{Estimation of $\mathbb{E}\|z\|_{\Sigma}^2$ upper bounds in a restricted subspace. We randomly sampled 1,000 parameter indices from the final trained model in Figure~\ref{fig:mnist-hessian} to extract the exact subspace Hessian and the corresponding learned diagonal Hessian $H$. The plotted quantities correspond to the four upper bounds listed in Table~\ref{tab:zo-upper-bound}: "Sub. Dim." ($2d$), "Lipschitz" ($Ld$, estimated via the largest eigenvalue of the subspace Hessian), "Low Eff. Rank" ($L\kappa$), and "Whitening" ($\zeta$). The left boxplot figure reports the values estimated across 10 independently sampled subspaces. The right figure illustrates how these quantities vary as the dimension of the sampled subspace increases.
    }
    \label{fig:subspace_hessian}
\end{figure}

\section{Multi-Perturbation Version} 
\label{sec.multi-perturbation}

Following our detailed examination of ZO-gradient variance, it is evident that reducing this variance is crucial for enhancing the performance of ZO-based methods. In this context, \textbf{multi-perturbation sampling in ZO-SGD can be viewed as analogous to mini-batching in standard SGD}, where multiple samples are used to improve the quality of the gradient estimate.

In terms of {\alg}, the multi-perturbation version is simply replacing the finding $\Delta x_{r,k}^{(i)}$ step by the following:
\begin{align}
\boxed{
\begin{aligned}
    \mbox{for\ } p&\;=0,1\cdots, P-1:\\
    &u_{r,k,p} \sim \cN(0, I)\\
    &g_{r,k,p}^{(i)} = \frac{1}{\mu}[f_i(x_{r,k}^{(i)} + \mu H^{-1/2}_{r}  u_{r,k, p}) - f_i(x_{r,k}^{(i)})]\\
    \Delta x_{r,k}^{(i)} &= H_r^{-1/2} \frac{1}{P} \sum_{p=0}^{P-1}g_{r,k,p}^{(i)}  u_{r,k,p}
\end{aligned}
}
\end{align}

Notice for the multi-perturbation version, we need to transmit $P$ random seeds to generate $p$ random vector $u_{r,k,p}$. Moreover, $P$ local gradient scalars $g_{r,k,p}^{(i)}$ are required to be communicated as well.

At the server side, the aggregation step now is required to average $P$ values separately:
\begin{align}
\boxed{
\Delta x_{r,k} = \frac{1}{\tau |C_r|}\sum_{i\in C_r}\sum_{k=0}^{\tau-1}\Delta x_{r,k}^{(i)}  = \frac{1}{\tau}\sum_{k=0}^{\tau-1}\Bigg[\frac{1}{P}\sum_{p=0}^{P-1}\underbrace{\left(\frac{1}{|C_r|}\sum_{i\in C_r} g_{r,k,p}^{(i)}\right) }_{:=g_{r,k,p}}H^{-1/2}_{r} u_{r,k,p}  \Bigg]
}
\end{align}
Notice we can switch the order of summation in above equations because $u_{r,k,p}$ is common among all clients. This aggregated gradient scalar $g_{r,k,p}$ stands for the $r$-th round, $k$-th local update, and $p$-th perturbation. $P$ gradient scalars together with $P$ random seeds are sufficient to reconstruct the global $\Delta x_{r,k}$. For the reconstruction step, everything is the same.

\subsection{Performance Analysis}

\begin{theorem}[Multi-Perturbation Version] \label{theorem-main-mp}
     Under Assumptions \ref{assump.l.hessian},
     \ref{assumption.stochastic gradients}, \ref{assumption.bounded-hetero} and \ref{assumption.bounded-learned-hessian}, if  $\eta \leq \min\left(\frac{\beta_\ell}{mL}, \frac{1}{8\rho_{k,P}}, \frac{\beta_\ell}{4(\tau-1)}\sqrt{\frac{1}{L(d+2)}}\right)$, the sequence of iterates generated by {\alg} with $P$ perturbations satisfies:
    \begin{align}
        \frac{1}{\tau R}\sum_{r=0}^{R-1}\sum_{k=0}^{\tau-1} \Ex\| \nabla F(\barx_{r,k})\|^2_{H_r^{-1}} \leq &  \frac{4 (F(\bar{x}_1) - F^\star)}{\eta \tau R}  + \underbrace{\frac{32\eta (\tau-1)^2 L\bar{\phi}_P}{\beta_\ell \tau m}(\sigma_G^2 + \sigma_s^2)}_{\rm extra \ client\ drift\ term} 
        + \frac{16\eta\bar{\rho}_P}{\beta_\ell m}(\sigma_G^2+\sigma_s^2)\nn\\[-4mm]
        &\;\;+ \cO(\eta\mu),
    \end{align}
where 
\begin{align}
    \bar{\rho}_P =& \frac{1}{\tau R}\sum_{r}\sum_{k}\left(\frac{{1}}{P}\Tr(H_r^{-1/2}\Sigma_{r,k} H_r^{-1/2}) + (\frac{1}{P}+1)\|H_r^{-1/2}\Sigma_{r,k} H_r^{-1/2}\|\right)\\
    \bar{\phi}_P =& \frac{1}{R}\sum_{r}\left(\frac{{1}}{P}\Tr(H_r^{-1}) + (\frac{1}{P}+1)\|H_r^{-1}\|\right)
\end{align}
and the rest of the quantities are the same as Theorem \ref{theorem-main}.
$\hfill\qed$
\end{theorem}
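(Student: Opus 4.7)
The plan is to mirror the analysis of Theorem~\ref{theorem-main}, tracking the two places where the single perturbation direction $z_k$ is replaced by a sample average $\frac{1}{P}\sum_{p=0}^{P-1} z_{k,p} z_{k,p}\tran(\cdot)$ of $P$ i.i.d.\ draws, and isolating the resulting modifications to the two Gaussian-moment quantities $\rho_k$ and $\|\Phi_r\|$. Because the shared-seed structure and the scalar-only aggregation are unchanged, the virtual recursion for $\barx_{k+1}$ retains its form, and hence the bookkeeping of both the descent lemma and the consensus lemma carries over once the new moment bounds are plugged in. The one-round telescoping, the learning-rate selection, and the final double summation over $r$ and $k$ are then algebraically identical to the single-perturbation case.

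The first key step is to redo the descent-lemma quadratic term $\Ex[\widehat{\nabla\f}(\x_k)\tran Q_P \widehat{\nabla\f}(\x_k)]$, where $Q_P = \big(\tfrac{1}{P}\sum_p z_{k,p} z_{k,p}\tran\big)\Sigma_k\big(\tfrac{1}{P}\sum_q z_{k,q} z_{k,q}\tran\big)$. I would split $\sum_{p,q}$ into the $P$ diagonal ($p=q$) terms and the $P(P-1)$ off-diagonal ($p\neq q$) terms. The diagonal terms are handled by Lemma~\ref{lemma.gaussian.moment} with $\Lambda=H_k^{-1}$, contributing $\Tr(\Sigma_k H_k^{-1})H_k^{-1} + 2 H_k^{-1}\Sigma_k H_k^{-1}$ each; the off-diagonal terms collapse by independence to $H_k^{-1}\Sigma_k H_k^{-1}$ each. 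Summing, dividing by $P^2$, and applying $\|A\|^2_{H_k^{-1}\Sigma_k H_k^{-1}} \le \|\Xi_k\|\|A\|^2_{H_k^{-1}}$ with $\Xi_k=H_k^{-1/2}\Sigma_k H_k^{-1/2}$ yields the new per-iteration constant
\[
\rho_{k,P} = \tfrac{1}{P}\Tr(\Xi_k) + \big(\tfrac{1}{P}+1\big)\|\Xi_k\|,
\]
which is exactly the summand defining $\bar{\rho}_P$. Substituting $\rho_{k,P}$ for $\rho_k$ in the descent-lemma quadratic and propagating through the same sampling-variance bound reproduces the descent inequality with the new constant.

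The second key step is the analogous moment computation for the consensus lemma. The expression to control is $\Ex\big\|\tfrac{1}{P}\sum_p z_{j,p} z_{j,p}\tran B\big\|_F^2$, where $B$ is either $\nabla\f(\x_j) - \nabla\f(\barx_j\one\tran)$ or the heterogeneity piece $\nabla\f(\barx_j\one\tran) - \nabla F(\barx_j)\one\tran$. The same diagonal/off-diagonal split produces $\frac{1}{P}\Tr(H_r^{-1})\|B\|^2_{H_r^{-1}} + \frac{P+1}{P}\|B\|^2_{H_r^{-2}}$, and bounding $\|B\|^2_{H_r^{-2}} \le \|H_r^{-1}\|\|B\|^2_{H_r^{-1}}$ gives the effective constant
\[
\|\Phi_{r,P}\| = \tfrac{1}{P}\Tr(H_r^{-1}) + \big(\tfrac{1}{P}+1\big)\|H_r^{-1}\|,
\]
matching $\bar{\phi}_P$. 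Substituting into the consensus lemma and combining the two updated lemmas with the learning-rate condition (now using $\rho_{k,P}$ in place of $\rho_k$) closes the argument.

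The step I expect to be the main obstacle is keeping the spectral bound $\|A\|^2_{H^{-1}\Sigma H^{-1}} \le \|\Xi\|\|A\|^2_{H^{-1}}$ tight enough that the claimed coefficient $(\tfrac{1}{P}+1)\|\Xi\|$ emerges rather than a looser $\cO(\|\Xi\|)$ bound; any slack here would degrade the $P\to\infty$ variance-reduction rate on the trace piece from $\cO(1/P)$ to a worse dependence. A related technical subtlety is that $u_{k,p}$, and hence $z_{k,p} = H_k^{-1/2}u_{k,p}$, are shared across all sampled clients because the seeds are broadcast, so I must verify that the $p\neq q$ cross terms still factorize cleanly after taking the client-sampling expectation over $w_k$ and that the zero-mean structure of the data-heterogeneity piece survives the $P$-averaging. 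Once these factorizations are established, the remainder is a routine replay of the single-perturbation proof with the tighter $\rho_{k,P}, \|\Phi_{r,P}\|$ constants in place, and the $\cO(\eta\mu)$ smoothing residue is inherited unchanged.
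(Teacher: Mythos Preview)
Your proposal is correct and follows essentially the same route as the paper's own proof: both reduce the multi-perturbation case to the single-perturbation analysis by recomputing the fourth-order Gaussian moment $\tfrac{1}{P^2}\sum_{p,q}\Ex z_{k,p}z_{k,p}\tran \Sigma_k z_{k,q}z_{k,q}\tran$ via the diagonal/off-diagonal split, invoking Lemma~\ref{lemma.gaussian.moment} on the diagonal part and independence on the cross part, and then substituting the resulting $\rho_{k,P}$ and $\phi_{r,P}$ back into the descent and consensus lemmas of Theorem~\ref{theorem-main}. Your concerns about the spectral bound and the shared-seed factorization are valid but routine, and the paper handles them implicitly in the same way.
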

\textbf{Proof}:
In this case, the algorithm formulation can be written as 
\begin{align}
    \y_{k+1} =& \x_k - \eta \frac{1}{P}\sum_{p=1}^P z_{k,p} z_{k,p} \tran \nabla \f(\x_k;\xi_k) + O(\mu\eta),\\
    \x_{k+1} =& \y_{k+1} W_k,
\end{align}
Notice there are three sources of the randomness -- random direction $z$, gradient noise coming from $\xi_k$m and the sampling randomness $W_k$. 
They are independent of each other, so we can treat them one by one separately. 
It is straightforward to verify that the mean is unchanged
\begin{align}
    \Ex \frac{1}{P}\sum_{p=1}^P z_{k,p} z_{k,p} \tran \nabla \f(\x_k;\xi_k) = H_k^{-1} \nabla \f(\x_k)
\end{align}

Next, noting $\{z_{k,p}\}_p$ is independent and identically distributed, utilizing lemma \ref{lemma.gaussian.moment} we establish
\begin{align}
    &\hspace{-5mm}\frac{1}{P^2}\sum_{p'=1}^P\sum_{p=1}^P \Ex z_{k,p}z_{k,p}\tran \Sigma_k z_{k,p'}z_{k,p'}\tran\nn\\
    =& \frac{P^2-P}{P^2} H^{-1}_k\Sigma_k H^{-1}_k
    + \frac{1}{P^2}\sum_{p=1}^P\Ex z_{k,p}z_{k,p}\tran \Sigma_k z_{k,p}z_{k,p}\tran \nn \\
    =&\frac{P-1}{P} H^{-1}_k\Sigma_k H^{-1}_k 
    + \frac{1}{P}(\Tr(\Sigma_k H^{-1}_k)H^{-1}_k + 2 H^{-1}_k\Sigma_k H^{-1}_k) \nn\\
    =& \frac{1}{P}\Tr(\Sigma_k H^{-1}_k)H^{-1}_k + 
    \left(\frac{1}{P}+1\right)H^{-1}_k\Sigma_k H^{-1}_k
    \label{eq.fis.fe}
\end{align}
Recall that this quantity $\rho_k$ of the single perturbation case is 
$$\rho_k = \Tr(H^{-1/2}_k\Sigma_k H^{-1/2}_k) +2\|H^{-1/2}_k\Sigma_k H^{-1/2}_k\|^2
$$
The multi-perturbation version one will become
$$\rho_{k,P} = \frac{1}{P}\Tr(H^{-1/2}_k\Sigma_k H^{-1/2}_k) +\left(\frac{1}{P}+1\right)\|H^{-1/2}_k\Sigma_k H^{-1/2}_k\|^2 \approx \frac{1}{P}\rho_k
$$
Recall that the first term in $\rho_k$ is typically much bigger than the second one. 
Hence, $\rho_{k,P} \approx \rho_k/P$ as we expect that multi-perturbation will decrease the variance of the random search direction.

Besides, it is a similar case applied to quantity:
\begin{align}
    \frac{1}{P^2}\sum_{p'=1}^P\sum_{p=1}^P \Ex z_{k,p}z_{k,p}\tran z_{k,p'}z_{k,p'}
    =& \frac{1}{P}\Tr(H^{-1}_k)H^{-1}_k + 
    \left(\frac{1}{P}+1\right)H^{-1}_k H^{-1}_k
    \label{eq.fis.fe2}
\end{align}
So that the multi-perturbation version of $\phi_{r,P}$ will become
$$\phi_{r,P} = \frac{1}{P}\Tr(H^{-1}_r) +\left(\frac{1}{P}+1\right)\|H^{-1}_r\|^2 \approx \frac{1}{P}\phi_r
$$

Notice we just need to update the Eq. (\ref{eq.sfjiosd.q}) with the result of Eq. (\ref{eq.fis.fe}). After some calculations and simplification, we establish the result of Theorem \ref{theorem-main-mp}.

\subsection{Convergence Rate}
Notice the relationship $\rho_{k,P} \approx \rho_k/P$, we can immediately establish that for $\tau=1$ the convergence rate of {\alg} is $\cO\left(\sqrt{\frac{\bar{\rho}_P}{mR}}\right)$.
Further, under the well-approximated Hessian assumption, we can establish the dimension-free rate
\begin{align}
    \frac{1}{R}\sum_{r=0}^{R-1}\|\nabla F(\bar{x}_{r,0})\|^2_{H_r^{-1}} = \cO\left(\sqrt{\frac{\zeta}{mPR}}\right)
\end{align}

When $\tau>1$, we have $\cO\left(\sqrt{\frac{\bar{\rho}}{\tau mR}}\right) + \cO\left(\sqrt{\frac{\tau \bar{\phi}}{mR}}\right)$. Further, under the well-approximated Hessian assumption, we can establish the dimension-free rate
\begin{align}
    \frac{1}{\tau R}\sum_{r=0}^{R-1}\sum_{k=0}^{\tau-1}\|\nabla F(\bar{x}_{r,k})\|^2_{H_r^{-1}}=\cO\left(\sqrt{\frac{\zeta}{\tau mPR}}\right) + \cO\left(\sqrt{\frac{\tau \kappa}{mPR}}\right)
\end{align}

\stopcontents[appendices]

\end{document}